\newcommand{\hide}[1]{}  
\newcommand{\FFADF}{\ensuremath{F^{\ADF}}} 
\newcommand{\FMMA}{\ensuremath{\mathcal{F}}}
\newcommand{\FFMMA}{\ensuremath{F}}
\newcommand{\FADF}{\ensuremath{\mathcal{F}^{\ADF}}}
\newcommand{\bundle}{\textsf{bundle}}
\newcommand{\maxi}{\textsf{twoVal}}
\newcommand{\qdot}{\ensuremath{\dot{Q}}}
\newcommand{\fqdot}{\ensuremath{f_{\qdot}}}
\newcommand{\ADF}{\textsf{ADF}}
\newcommand{\Dung}{\textsf{D}}
\newcommand{\inL}{\textsf{in}}
\newcommand{\outL}{\textsf{out}}
\newcommand{\undecL}{\textsf{undec}}
\newcommand{\pre}{\ensuremath{\textsf{pred}}}  
\newcommand{\suc}{\ensuremath{\textsf{succ}}}
\newcommand{\andC}{\textsf{and}}
\newcommand{\ryuta}[1]{{\color{purple}{#1}}}
\begin{document}
     \title{Broadening Label-based 
     Argumentation Semantics \\ with 
     May-Must Scales (May-Must Argumentation)} 

\author{Ryuta Arisaka \and Takayuki Ito\institute{Nagoya Institute of 
    Technology, Nagoya, Japan\\ email: ryutaarisaka@gmail.com, 
    ito.takayuki@nitech.ac.jp}}
    \maketitle 
\begin{abstract}     
   The semantics as to which set of arguments in a given argumentation 
   graph may be acceptable (acceptability 
   semantics) can be characterised in a few different 
   ways. Among them, labelling-based approach 
   allows for concise and flexible determination of 
   acceptability statuses  of arguments through 
   assignment of a label indicating 
   acceptance, rejection, or  
   undecided to each argument. 
   In this work, we contemplate a way of 
   broadening it 
   by accommodating may- and must- conditions 
   for an argument to be accepted or rejected, 
   as determined 
   by the number(s) of 
   rejected and accepted attacking arguments.   
   We show that the broadened label-based semantics 
   can be used to express
   more mild indeterminacy than 
    \mbox{inconsistency} 
   for acceptability judgement 
   when, for example, it may be the case that an 
   argument is accepted and when it 
   may also be the case that it is rejected.   
   We identify that 
   finding which conditions a labelling satisfies 
   for every
   argument can be an undecidable problem, 
   which has an unfavourable implication 
   to existence of a semantics. 
   We propose to address this problem by enforcing  
   a labelling to  maximally 
   respect the conditions, while keeping 
   the rest that would necessarily cause   
	non-termination labelled undecided. 
   Several semantics will be presented and 
	the relation among them will be noted. Towards the end, 
	we will touch upon possible research directions 
	that can be pursued further. 
\end{abstract}  
\section{Introduction}\label{section_introduction}  
Dung formal argumentation \cite{Dung95} 
provides an abstract view of argumentation as a 
graph of: nodes representing arguments; and 
edges representing  
attacks from the source arguments to the target arguments.   
Dung argumentation allows us to determine 
 which arguments are acceptable in a given argumentation.  

While the determination in Dung's seminal paper 
is through conflict-freeness: {\it no members of a set 
attack a member of the same set}, and 
defence: {\it a set of arguments defend an argument 
just when any argument attacking the argument 
is attacked by at least one member of the set},  
there are other 
known approaches. With labelling, 
a labelling function assigns 
a label 
indicating either of: acceptance, rejection, and undecided 
(see e.g.
\cite{Caminada06,Jakobovits99}) to each argument, 
which offers a fairly concise and also flexible 
(see e.g. \cite{Brewka13}) characterisation 
of arguments' acceptability, based, in case 
of \cite{Caminada06,Brewka13}, 
just on the labels of the arguments it is 
attacked by.  Acceptance and 
rejection conditions may be defined uniformly 
for every argument \cite{Caminada06,Jakobovits99}, or per 
argument, as in Abstract Dialectical 
Frameworks (\ADF) \cite{Brewka13}, where 
acceptance status of an argument is 
 uniquely 
determined for
each combination of the acceptance/rejection/undecided 
labels of associated arguments. 
\subsection{Labelling-approach  
with may-must scales}\label{section_labelling_approach} 
In this work, we aim to further explore 
the potential of a labelling-approach by 
broadening the labelling in \cite{Caminada06}  
with what we term {\it may-must acceptance scale} 
and {\it may-must rejection scale}, to be assigned 
to 
each argument. The may-must acceptance scale (resp. 
may-must rejection scale) of an argument 
is specifically a pair of natural numbers 
$(n_1, n_2)$ with $n_1$ indicating 
the minimum number of its attackers that need to be rejected 
(resp. accepted) in order that the argument 
can be accepted (resp. rejected) 
and $n_2$ the minimum number of 
its attackers that need to be rejected (resp. 
accepted) in  
order that it must be accepted (resp. rejected). 
That is, $n_1$ is the {\it may} condition, while
$n_2$ is the {\it must} condition, for acceptance (resp.
rejection) of 
the argument. 

Thus, not only can they express exact 
conditions for acceptance/rejection 
of an argument as with \cite{Caminada06,Brewka13}, 
they can additionally describe minimal requirements  
to be satisfied in order that the argument can be 
accepted/rejected. The may-must scales lead to 
the following distinction 
to acceptance and rejection of an argument. 
\begin{enumerate}[leftmargin=2.5cm] 
   \item[(1).] It may be accepted. $\quad\quad\quad\quad\quad$
       (2). It must be accepted. 
   \item[(i).] It may be rejected. $\quad\quad\quad\quad\quad\ $
       (ii). It must be rejected. 
\end{enumerate} 
Since each argument has its own may-must 
acceptance and rejection scales, depending on 
the specific numerical values given to them, 
we may have several combinations in  
\{(1) (2) neither\}$-$\{(i) (ii) neither\}.  
While (2)$-$(neither), (neither)$-$(ii), 
(2)$-$(ii) (i.e. the argument is both 
accepted and rejected at the 
same time) and 
(neither)$-$(neither) (i.e. it is neither accepted nor 
rejected) 
deterministically 
indicate acceptance, rejection, undecided and undecided 
for the acceptability status of the argument, 
the other combinations are more interesting.  

Let us consider for example (1)$-$(i). 
Unlike (2)$-$(ii) 
which leads to immediate logical inconsistency, 
(1)$-$(i) expresses milder indeterminacy, 
since we can simultaneously assume the possibility of the argument 
to be accepted and the possibility of the same argument 
to be rejected without logical 
contradiction. In fact,  
there may be more than 
one suitable label from 
among acceptance, rejection and undecided as 
an acceptability status of an argument 
without necessarily 
requiring the acceptance statuses of the arguments 
attacking it to also change. That is, in a not-non-deterministic
labelling, once the labels of the attackers are determined, 
there is only one label suitable for the attacked, but 
it is not always the case with a non-deterministic labelling. Such non-deterministic 
labels of argument(s) can trigger disjunctive 
branches to the labels of those arguments 
attacked by them. 

\subsection{Motivation for may-must 
scales}\label{section_motivation_for}     
In real-life argumentation, an argument which is attacked 
by a justifiable argument but by no other arguments 
can be seen differently from an argument which is 
attacked by a justifiable argument  and which 
is also attacked by a lot more defeated 
(rejected) arguments. 
For example, if that argument is a scientific theory, 
one interpretation of the two cases is that, 
in the first case, it meets an objection 
without it having 
stood any test of time, and, in the second case,
even though it is not defended against one objection, 
it withstood all the other objections, a lot more of them 
in number. Such an interpretation takes us 
to the supposition that an 
argument, if found 
out to withstand an objection, attains greater credibility, 
that is to say, that an attacker being rejected 
has a positive, or at least a non-negative,
impact on its acceptance.  

Coupled with the other more standard intuition that 
an attacker of an argument being accepted has a non-positive 
impact on the argument's acceptance, we see that
\begin{itemize}
   \item the larger 
the number of rejected attackers is, 
the more likely it can become that the 
argument is accepted, and 
    \item the larger the number of accepted attackers is,
the more likely it can become that the argument is rejected, 
\end{itemize} 
until there comes a moment where both acceptance and rejection 
of the argument become so compelling, with 
sufficient numbers of rejected and accepted attackers, 
that its acceptance 
status can no longer be determined. 
As with any reasonable real-life phenomenon, 
the acceptance and rejection judgement 
can be somewhat blurry, too. Introduction of  
the may- conditions allows the softer boundaries 
of acceptance and rejection to be captured 
based on the number(s) of accepted and rejected attacking 
arguments.  \\

Moreover, with studies of argumentation 
expanding 
into multi-agent systems, 
for argumentation-based 
negotiations (Cf. two surveys \cite{Rahwan03,Dimopoulos14}
for mostly two-party negotiations 
and a recent study \cite{Arisaka2020b} for concurrent negotiations among 
3 and more parties), 
strategic dialogue games and persuasions \cite{AHT19,ArisakaSatoh18,Grossi13,Hadjinikolis13,Hadoux15,Hadoux17,Hunter18,Kakas05,Parsons05,Rahwan09,Rienstra13,Riveret08,Sakama12,Thimm14}, 
and others, it is preferable 
that 
an argumentation theory 
be able to accommodate a different nuance 
of arguments' acceptability {\it locally} 
per argument, 
and yet somehow in a {\it principled} manner 
with a rational explanation to the cause of the nuances. 
Future applications 
into the domain in mind, may-must scales 
are given to each argument, like 
local constraints in \ADF, ensuring 
the locality. Like in argumentation 
with graded acceptability \cite{Grossi19} 
(see below for comparisons), 
however, may-must conditions 
are rooted in `endogenous' information 
of an argumentation graph, 
to borrow the expression in \cite{Grossi19}, 
namely the cardinality of attackers, to retain 
a fair level of abstractness defining 
monotonic conditions. Specifically, a may- or a must- 
condition 
is satisfied minimally with $n$ 
accepted or rejected attacking arguments, 
but also with any $(n \leq)\ m$ accepted or 
rejected attacking arguments. As we will show, 
it offers an easy expression of, for example, 
possibly accepting an argument when 
80\% of attacking arguments are rejected; 
accepting an argument when 90\% of attacking arguments 
are rejected; possibly rejecting an argument 
when 40\% of attacking arguments (but at least 1) are accepted; 
and rejecting an argument when 50\% of attacking 
arguments (but at least 1) are accepted. 
\subsection{Related work}\label{section_related_work}  
Resembling situations are 
rather well-motivated in the literature. 
Argumentation with graded 
acceptability \cite{Grossi19} 
relaxes conflict-freeness and defence in Dung abstract 
argumentation. For conflict-freeness, it permits 
a certain number $k_1$ of attackers to be accepted simultaneously 
with the attacked (this idea alone is preceded 
by set-attacks \cite{Nielsen06}
and conflict-tolerant argumentations \cite{Gabbay14,Leite11,Arieli12,Dunne11}). For defence, it allows 
the defence by a set of arguments for the attacked 
to occur when a certain number $k_2$ of its attackers 
are attacked by a certain number $k_3$ of members 
of the set. Our work follows the general idea 
of conditionalising acceptance statuses of 
arguments on the cardinality of accepted 
and rejected attacking arguments. 
Indeed, $k_1+1$ corresponds to the must- condition of a may-must 
rejection scale in this work. 
On the other hand, unlike in \cite{Grossi19} 
where dependency of 
acceptability status of 
an argument on the attackers of its attackers 
is enforced due to $k_2$ and $k_3$, 
we are more conservative 
about the information necessary 
for determining acceptability status(es) of an argument. 
We have it obtainable purely from  
its immediate attackers. Also, may- conditions  
are not considered in \cite{Grossi19}. 
In particular, 
while both may- and must- conditions of 
a may-must acceptance scale of an argument interact with 
those of its may-must rejection scale (see section 
\ref{section_labelling_approach}), the interaction 
between the non-positive and the non-negative effects
on the acceptance of the argument  
is, as far as we can fathom, not primarily 
assumed in \cite{Grossi19}.

Ranking-based argumentations (Cf. a recent 
survey \cite{Bonzon16}) order arguments 
by the degree of acceptability. There are 
many conditions around the ordering, 
giving them various flavour. Ones that 
are somewhat relevant to our setting 
(see section \ref{section_motivation_for}) are 
in a discussion in \cite{Cayrol05b}, 
where we find the following descriptions: 
\begin{itemize} 
  \item the more defence branches an argument has, 
       the more acceptable it 
       becomes. 
  \item the more attack branches an argument has, 
       the less acceptable it 
       becomes.  
\end{itemize} 
Here, a branch of an argument is a chain 
of attacking arguments having the argument as the last 
one attacked in the chain, and an attack 
branch (respectively a defence branch) is a branch 
with an odd (respectively even) number of attacks.   
With the principle of reinstatement 
(that an attacker of an attacker 
of an argument has a propagating positive effect 
on the acceptance 
of the argument) assumed, these two conditions 
are clearly reasonable. 
By contrast, our approach assigns may- and 
must- acceptance and 
rejection conditions to each argument; thus, 
the reinstatement 
cannot be taken for granted, 
which generally makes it inapplicable to 
propagate argumentation ranks 
(which 
can be numerical values \cite{daCostaPereira11,Baroni15,Gabbay14,Hunter01,Janssen08}) 
through branches by a set of globally uniform 
propagation rules. 
The cardinality precedence \cite{Bonzon16}: the 
greater the number 
of immediate attackers of an argument, the weaker 
the level of its acceptability is, 
which in itself does not assume 
acceptability statuses of the immediate attackers, 
does not always hold good with our approach, either. \\

For label-based argumentation, 
non-deterministic labelling in
argumentation 
as far 
back as we can see 
is discussed in \cite{Jakobovits99}, where 
an argument may be labelled as either rejected ($\{-\}$) or  
`both accepted and rejected' ($\{+,-\}$) when, 
for example, it is attacked by just one attacker labelled $\{+,-\}$. 
Nonetheless, the criteria of label assignments 
are global (an argument may be accepted ($\{+\}$) just when 
the label(s) of all its attackers contain $-$; 
and may be rejected just when there exists at least one 
argument whose label contains $+$), not covering 
the various nuances to follow from 
locally given criteria. 
On a more 
technical point, while 
letting only $\{+,-\}$ (both accepted 
and rejected) be `undecided' is sufficient  
in \cite{Jakobovits99}, 
that is not enough in a general case, 
as we are to show in this paper. Indeed, with some 
argumentation graph and some may-must acceptance 
and rejection scales, it can happen that 
whether, for example, an argument is accepted, 
or both accepted and rejected, is itself an 
undecidable 
question. 

Abstract Dialectical Frameworks (\ADF) \cite{Brewka13}
 is another labelling approach which accommodates, 
with 3 values \cite{Brewka13} (which 
  has been recently extended to multi-values
   \cite{Brewka18}), 
local acceptance, rejection and 
undecided conditions. The label of an argument is 
determined 
into only one of the 3 labels 
for a given combination of its attackers' 
labels. 
Since its label is determined for every 
combination of its attackers' labels, 
the {\ADF} labelling is concrete and specific.  
By contrast, 
the may-must conditions 
are more abstract in that they only specify, 
like in \cite{Caminada06,Grossi19}, 
the numbers of attackers but not exactly 
which ones. Apart from  
the abstract specification being in line with 
\cite{Caminada06}, 
the level
of abstractness is more favourable   
in our setting, since 
the may- and must- acceptance (or rejection) conditions,  
once satisfied with $n$ rejected (or $n$ accepted) 
attacking arguments should remain 
satisfied 
with $m$ rejected (or $m$ accepted) 
attacking arguments 
so long as $n \leq m$, which they 
can handle in more principled a manner. Moreover, 
acceptance status of an argument 
is evaluated both for acceptance 
and for rejection with the two 
scales. The independent criteria 
are fitting for many real-life 
decision-makings, since it is common that  
assessments as to why 
a proposal (a suspect) may/must be accepted 
(guilty) 
and as to why it (the suspect) 
may/must be rejected (acquitted) 
are separately made by two guidelines (by a prosecutor 
and a defence lawyer) before, based on them,
a final decision is delivered. 
The may- must- 
conditions based on 
the cardinality of 
accepted or rejected attacking arguments 
are, as far as we are aware, 
not considered in {\ADF} including 
\cite{Brewka18,Bogaerts19}. 
Some more technical semantic comparisons to {\ADF} semantics are 
found in a later part of this paper, in \mbox{Section 
\ref{section_semantics}}. 


Fuzziness as a varying attack strength \cite{Janssen08}
and as a varying degree of acceptability of an 
argument 
\cite{daCostaPereira11,Brewka18,Bogaerts19} 
have been discussed in 
the literature, both of which are closely related  
to ranking-based argumentation. 
The kind of fuzziness that we deal with in 
this paper, however,
is not, again borrowing the expression    
in \cite{Grossi19}, about 
`exogeneously given information 
about the relative strength of arguments' or 
the relative degree of acceptability, 
but about the cardinality of rejected/accepted attackers 
required for may- must- acceptance/rejection of the attacked 
argument. 
\subsection{Summary of the contribution and 
the structure of the paper} 
We develop {\it may-must argumentation} by broadening the labelling in \cite{Caminada06}
with a may-must acceptance scale and 
a may-must rejection scale for each argument, 
as we stated in section 1.1, which helps
localise the nuance of acceptability 
of an argument. 
That those conditions only 
specify the numbers of (accepted and rejected)
attacking arguments and that their 
satisfaction conditions  
are monotonic (Cf. section \ref{section_motivation_for}) help the approach 
retain a level of abstractness that 
facilitates a principled explanation 
as to why an argument becomes accepted, rejected, 
or undecided in response to an increase or decrease 
in the number of rejected/accepted attackers of 
its. 

Technically, we identify that finding 
a labelling that satisfies  
local 
criteria for each argument is not always 
possible due to a circular reasoning. 
We address this 
problem by 
enforcing 
a labelling to maximally respect local 
acceptance criteria, while keeping the rest 
that would necessarily cause non-termination 
labelled undecided. 
Several semantics are presented, and 
comparisons are made. 

In Section 2, we will cover technical preliminaries, 
specifically of Dung abstract argumentation labelling semantics  
\cite{Caminada06} 
(in \mbox{Section \ref{section_technical_preliminaries}}). 

In Section 3, we will present technical vehicles of 
may-must argumentation in detail. 

In Section \ref{section_semantics},  we will 
formulate several types of semantics and 
will identify 
the relation between them. 

In Section 5, we will discuss future research directions including 
a fundamental technical problem and adaptation to multi-agent argumentation.

\section{Technical Preliminaries}\label{section_technical_preliminaries}
Dung abstract argumentation \cite{Dung95} considers an argumentation 
as a graph where a node represents an argument and where 
an edge between arguments represents an attack from the 
source argument 
to the target argument. Technically, 
let $\mathcal{A}$ denote a class of abstract entities that 
we understand as arguments, and let $\mathcal{R}$ denote 
a class of all binary relations over $\mathcal{A}$. 
We denote by $\mathcal{R}^A$ a subclass of $\mathcal{R}$ 
which contains all and only members $R$ of $\mathcal{R}$ 
with: $(a_1, a_2) \in R$ only if $a_1, a_2 \in A$. 
Then a (finite) abstract
argumentation is a tuple $(A, R)$ with $A \subseteq_{\text{fin}}
\mathcal{A}$ and $R \in \mathcal{R}^A$.  
$a_1 \in A$ is said to attack $a_2 \in A$ if and only if, 
or iff, $(a_1, a_2) \in R$ holds. We denote 
the class of all Dung abstract argumentations 
by $\mathcal{F}^{\Dung}$. 

One of the main objectives 
of representing an argumentation formally 
as a graph 
is to infer from it which set(s) of arguments may be 
accepted. Acceptability of a set of arguments is determined 
by whether it satisfies certain criteria. 

In this paper, 
we will uniformly use labelling \cite{Caminada06} for 
characterisation 
of the acceptability semantics; readers 
are referred to Dung's original paper \cite{Dung95}   
for equivalent semantic characterisation
through conflict-freeness and defence.\footnote{Similar 
semantic characterisation with the conflict-freeness 
and the defence is, as with 
some of the ranking-based approaches or with \ADF, not 
actually practical 
in this work, since the nuance of an attack in 
Dung abstract argumentation is only one 
of many that are  expressible in our proposal.}

Let $\mathcal{L}$ denote $\{\inL,\outL,\undecL\}$,
and let $\Lambda$ denote the class of all partial 
functions $\mathcal{A} \rightarrow \mathcal{L}$. 
Let $\Lambda^A$ for $A \subseteq 
\mathcal{A}$ denote a subclass of $\Lambda$ 
that includes all (but nothing else) $\lambda \in \Lambda$ 
that is defined for all (but nothing else) members of $A$.   
For the order among members of $\Lambda$, let $\preceq$ be a binary relation over $\Lambda$  
such that $\lambda_1 \preceq \lambda_2$ 
for $\lambda_1, \lambda_2 \in \Lambda$ iff 
all the following conditions hold. (1) There 
is some $A \subseteq_{\text{fin}}
 \mathcal{A}$ such that $\lambda_1, \lambda_2 \in 
\Lambda^A$. (2) For every $a \in A$, $\lambda_1(a) = \inL$ 
(and respectively $\lambda_1(a) = \outL$) 
materially implies $\lambda_2(a) = \inL$ (and 
respectively $\lambda_2(a) = \outL$). We may write 
$\lambda_1 \prec \lambda_2$ when 
$\lambda_1 \preceq \lambda_2$ but not 
$\lambda_2 \preceq \lambda_1$. 

Then, $\lambda \in \Lambda$ is said to be: 
a complete labelling of $(A, R) \in \mathcal{F}^{\Dung}$ iff all the following 
conditions hold for every $a \in A$ \cite{Caminada06}. 
\begin{enumerate} 
\item $\lambda \in \Lambda^A$. 
\item 
$\lambda(a) = \inL$ iff there exists no $a_x \in A$ 
such that $a_x$ attacks $a$ and that 
$\lambda(a_x) \not= \outL$. 
\item $\lambda(a) = \outL$ iff there exists 
some $a_x \in A$ such that $a_x$ attacks $a$ 
and that $\lambda(a_x) = \inL$. 
\end{enumerate} 
First of all, $(\Lambda^A, \preceq)$ is 
clearly a meet-semilattice (see \cite{Davey02} for all these 
notions around a lattice). 
Denote the set of all complete labellings of 
$(A, R)$ by $\Lambda_{(A, R)}^{com}$, it is well-known 
that 
$(\Lambda_{(A, R)}^{com}, \preceq)$ is also a meet-semilattice. 

A complete labelling $\lambda$ of $(A,R)\ 
(\in \mathcal{F}^{\Dung})$ is said to 
be also a preferred labelling of $(A, R)$ iff, 
for every complete labelling $\lambda_x$ of $(A, R)$, 
$\lambda \prec \lambda_x$ does not hold. A preferred 
labelling $\lambda$ of $(A, R)$ is also a stable labelling of $(A, R)$ 
iff, for every $a \in A$, $\lambda(a) \not= \undecL$ holds. 
Also, $\lambda \in \Lambda$ is called a grounded 
labelling of $(A, R)$ iff $\lambda$ 
is the meet of $\Lambda^{com}_{(A, R)}$ in 
$(\Lambda^A, \preceq)$.\footnote{We make 
it more general here in light of 
some more recent argumentation studies (including 
this work) in which a grounded 
labelling is not necessarily a complete labelling 
\cite{ArisakaSantini19,Bistarelli17}, although, in case 
of Dung argumentation, it is trivial 
that a grounded labelling is the meet 
of $\Lambda_{(A, R)}^{\text{com}}$ in 
$(\Lambda_{(A,R)}^{\text{com}}, \preceq)$.}

For any such labelling $\lambda$ of $(A, R)$, 
we say that $a \in A$ is: accepted iff  
$\lambda(a) = \inL$; rejected iff 
$\lambda(a) = \outL$; and undecided, otherwise.

We call the set of all complete/preferred/stable/grounded 
labellings of $(A, R)$  
complete/preferred/stable/grounded semantics of $(A, R)$.   

Let $a_1 \rightarrow a_2$ or $a_2 \leftarrow a_1$ 
be a graphical representation of $(a_1, a_2) \in R$. 
A small concrete example 
$a_1 \leftrightarrows a_2$ should suffice for 
highlighting the relation among the semantics. 
Assume 
that $[a_1: l_1, \ldots, a_n:l_n]^{\lambda}$  
for $a_1, \ldots, a_n \in A$ and $l_1, \ldots, l_n \in \mathcal{L}$ 
denotes some member of $\Lambda^{\{a_1, \ldots, a_n\}}$ 
	with $[a_1:l_1, \ldots,a_n:l_n]^{\lambda}(a_i) = l_i\ (1 \leq i \leq n)$,
and let $\lambda_1, \lambda_2, \lambda_3 \in \Lambda$
be $\lambda_1 \equiv [a_1:\inL,a_2:\outL]^{\lambda}$ (as shown below, to the left), 
$\lambda_2 \equiv [a_1:\outL,a_2:\inL]^{\lambda}$ (as shown below, at the centre), 
and $\lambda_3 \equiv [a_1:\undecL,a_2:\undecL]^{\lambda}$ (as shown, below, 
to the right). 
\begin{center} 
\begin{tikzcd}  
   \overset{\ensuremath{\inL}}{\ensuremath{a_1}} 
  \rar & \overset{\ensuremath{\outL}}{\ensuremath{a_2}} \lar & 
   \overset{\ensuremath{\outL}}{\ensuremath{a_1}} \rar & 
   \overset{\ensuremath{\inL}}{\ensuremath{a_2}} \lar &   
   \overset{\ensuremath{\undecL}}{\ensuremath{a_1}} \rar & 
   \overset{\ensuremath{\undecL}}{\ensuremath{a_2}} \lar
\end{tikzcd}  
\end{center}   
Then, complete, preferred, stable and grounded semantics 
of this argumentation are 
exactly $\{\lambda_1, \lambda_2, \lambda_3\}$, 
$\{\lambda_1, \lambda_2\}$, $\{\lambda_1, \lambda_2\}$
and $\{\lambda_3\}$.  

\section{Label-based Argumentation Semantics 
with May-Must Scales}\label{section_label_based} 
We present abstract argumentation 
with may-must scales, and 
characterise its labelling-based semantics 
in this section. In the remaining, 
for any tuple $T$ of $n$-components, 
we make the following a rule that  
$(T)^i$ for $1 \leq i \leq n$ refers to 
$T$'s $i$-th component. Since the two may-must scales 
(one for acceptance and one for rejection) define  
a nuance of acceptability of an argument, we 
call the pair a nuance tuple: 
\begin{definition}[Nuance tuple]  
    We define a nuance tuple to be 
   $(\pmb{X}_1, \pmb{X}_2)$ for some\linebreak
    $\pmb{X}_1, \pmb{X}_2 \in \mathbb{N} \times \mathbb{N}$.   
   We denote the class of all nuance tuples by 
   $\mathcal{Q}$. For any $Q \in \mathcal{Q}$, 
   we call $(Q)^1$ its may-must acceptance scale and 
   $(Q)^2$ its may-must rejection scale.  
\end{definition} 
For $Q \in \mathcal{Q}$, $Q$ is some $((n_1, n_2), (m_1, m_2))$ 
with $n_1, n_2, m_1, m_2 \in \mathbb{N}$. Therefore, 
 $(Q)^1$ is $(n_1, n_2)$, $(Q)^2$ is $(m_1, m_2)$. 
 Further, 
 $((Q)^1)^i$ is $n_i$ and $((Q)^2)^i$ is $m_i$ 
 for $i \in \{1,2\}$. 
\begin{definition}[May-must argumentation]\label{def_abstract_argumentation}
    We define a (finite) abstract argumentation with may-must 
	scales, i.e. may-must argumentation, to be a tuple $(A, R, f_Q)$ with: 
    $A \subseteq_{\text{fin}} \mathcal{A}$; 
	$R \in \mathcal{R}^A$; and 
    $f_Q: A \rightarrow \mathcal{Q}$ with 
    $((f_Q(a))^i)^1 \leq ((f_Q(a))^i)^2$ for every
    $a \in A$ and every $i \in \{1,2\}$. 
    
    We denote the class of all 
    (finite) abstract argumentations with
    may-must scales by $\mathcal{F}$, 
    and refer to its member by $F$ with or without a subscript. 
\end{definition} 

\noindent The role of a nuance tuple within 
$(A, R, f_Q) \in \mathcal{F}$ is as was described 
in Section \ref{section_introduction}.  
Each $a \in A$ comes with its own criteria for acceptance 
($(f_Q(a))^1$) and for rejection ($(f_q(a))^2$).  
The independent acceptance/rejection judgements 
are combined for a final decision on which acceptance  
status(es) are expected for $a$. 

\subsection{Independent acceptance/rejection judgement} 
Let us consider the acceptance judgement for $(A, R, f_Q) \in \mathcal{F}$. 
If $a \in A$ 
is such that $((f_Q(a))^1)^1 = 2$ 
and 
$((f_Q(a))^1)^2 = 3$, then 
$a$ can never be judged accepted 
unless there are at least 2 rejected arguments attacking $a$. Once there are at 
least 3 rejected arguments 
attacking $a$, then $a$ is judged to be, in the absence 
of the rejection judgement, certainly accepted.

Given the nature of attack, 
it is not very intuitive to permit   
the value of a may- condition to be strictly larger 
than that of a must- condition of a single may-must scale: 
an accepted attacking argument has a non-favourable 
effect on the acceptance of argument(s) it attacks; 
if, say, 2 arguments attacking $a$ need to be accepted 
in order that $a$ can be rejected, intuitively  
1 accepted argument attacking $a$ does not produce a 
strong enough non-favourable effect on $a$ to reject it; 
also into the other direction, if, say, 3 arguments attacking 
$a$ need to be accepted in order that $a$ must be rejected, 
intuitively 4 accepted arguments attacking $a$ still 
enforce rejection of $a$. 
It is for this reason that we are formally 
precluding the possibility 
in Definition \ref{def_abstract_argumentation}. 

Satisfactions of 
may- and must- conditions of the may-must scales 
are as defined in \mbox{Definition \ref{def_may_and_must}} 
below. In the remaining, for any $F \equiv (A, R, f_Q)\ (\in \mathcal{F})$,
any $a \in A$ and any $\lambda \in \Lambda$, we denote 
by $\pre^F(a)$ the set of all $a_x \in A$ with 
$(a_x, a) \in R$, 
by 
$\pre^F_{\lambda, \inL}(a)$ the set of all $a_x \in \pre^F(a)$ 
such that $\lambda$ is defined for $a_x$ and that $\lambda(a_x) = \inL$, 
and by $\pre^F_{\lambda, \outL}(a)$ the set of all $a_x \in 
\pre^F(a)$ such that $\lambda$ is defined for $a_x$ and that $\lambda(a_x) = 
\outL$.  
\begin{definition}[May- and must- satisfactions]\label{def_may_and_must} 
   For $F \equiv (A, R, f_Q)\ (\in \mathcal{F})$, 
    $\lambda \in \Lambda$, and 
	$a \in A$ for which  $\lambda$ is defined,  
	we say that $a$ satisfies:  
	\begin{itemize} 
	\item may-acceptance condition (resp. may-rejection condition) 
		under $\lambda$ in $\FFMMA$ iff\linebreak 
		$((f_Q(a))^1)^1 \leq |\pre^{\FFMMA}_{\lambda, \outL}(a)|$ 
		(resp. $((f_Q(a))^2)^1 \leq |\pre^{\FFMMA}_{\lambda, \inL}(a)|$).   
	\item must-acceptance condition (resp. must-rejection condition) 
	     under $\lambda$ in $\FFMMA$ iff \linebreak
		$((f_Q(a))^1)^2 \leq |\pre^{\FFMMA}_{\lambda, \outL}(a)|$ 
		(resp. $((f_Q(a))^2)^2 \leq |\pre^{\FFMMA}_{\lambda, \inL}(a)|$).   
	\item may$_s$-acceptance condition (resp. may$_s$-rejection 
		condition) under $\lambda$ in $\FFMMA$ iff \linebreak 
	      $((f_Q(a))^1)^1 \leq |\pre^{\FFMMA}_{\lambda, \outL}(a)| < 
	((f_Q(a))^1)^2$ 
	(resp. $((f_Q(a))^2)^1 \leq |\pre^{\FFMMA}_{\lambda, \inL}(a)| < 
	((f_Q(a))^2)^2$).   
\item not-acceptance condition (resp. not-rejection condition)  
	under $\lambda$ in $\FFMMA$ iff \linebreak 
	      $|\pre^{\FFMMA}_{\lambda, \outL}(a)| < 
	((f_Q(a))^1)^1$ 
	(resp. $|\pre^{\FFMMA}_{\lambda, \inL}(a)| < 
	((f_Q(a))^2)^1$).   
\end{itemize} 
  If obvious from the context, we may omit 
	``under $\lambda$ in $F$''. Moreover,  
  we may shorten ``$a$ satisfies may-acceptance condition'' 
  with ``$a$ satisfies may-a''; and similarly 
  for all the others. 
\end{definition} 
Clearly, the may-must conditions are monotonic over the increase 
in the number of rejected/accepted attacking arguments. 

\subsection{Combinations of acceptance/rejection judgements} 
As we described in 
Section \ref{section_introduction}, acceptance 
and rejection variations give rise to 
several combinations. Here, we cover all possible cases 
exhaustively and precisely for each $F \equiv (A, R, f_Q)\ (\in \mathcal{F})$, 
each $\lambda \in \Lambda$ 
and each $a \in A$ for which $\lambda$ is defined. 
\begin{itemize}   
	\item[(1)]{\makebox[2cm]{\textbf{must-must}:\hfill}  $a$ satisfies must-a and must-r 
		under $\lambda$.} 
	\item[(2)]{\makebox[2cm]{\textbf{must-may$_s$}:\hfill} 
     $a$ satisfies must-a (resp.
     must-r)  and  
		may$_s$-r (resp. may$_s$-a) under $\lambda$.} 
	\item[(3)]{\makebox[2cm]{\textbf{must-not}:\hfill} 
     $a$ satisfies must-a (resp.
     must-r), and 
     not-r 
		(resp. not-a) under $\lambda$.}
	\item[(4)]{\makebox[2cm]{\textbf{may$_s$-may$_s$}:\hfill}  $a$ satisfies 
		may$_s$-a and may$_s$-r under $\lambda$.} 
	\item[(5)]{\makebox[2cm]{\textbf{may$_s$-not}:\hfill} 
     $a$ satisfies may$_s$-a 
    (resp. may$_s$-r), 
   and not-r
		(resp. not-a) under $\lambda$.} 
	\item[(6)]{\makebox[2cm]{\textbf{not-not}:\hfill}  $a$ satisfies 
		not-a and not-r under $\lambda$.}
\end{itemize}
 
\noindent In the classic (i.e. non-intuitionistic) interpretation of modalities 
(see for example \cite{Garson13}), 
a necessary (resp. possible) proposition is true 
iff it is true 
in every (resp. some) possible world 
accessible from the current world, and a not possible proposition 
is false in every accessible possible world.  
To apply this interpretation to may-must argumentation, 
we take acceptance for the truth, rejection for 
the falsehood, must for the necessity and may for the possibility, 
to obtain 
for each $a \in A$ and each $\lambda \in \Lambda$ 
that $a$'s satisfaction of: 
\begin{itemize}
\item must-a (resp. must-r) of $a$ 
implies  $a$'s acceptance (resp. rejection) 
in every accessible possible 
world.
\item may$_s$-a (resp. may$_s$-r) of $a$ implies $a$'s acceptance 
(resp. rejection) in a non-empty 
subset of all accessible possible worlds and
$a$'s rejection (resp. acceptance) in 
		the other accessible possible worlds (if any). 
\item not-a (resp. not-r) 
is equivalent to must-r (resp. must-a).
\end{itemize} 
Note the use of ``may$_s$'' instead of ``may'' here. Once 
a may- condition is also a must- condition, it suffices 
to simply consider the must- condition. 

\indent With no prior
knowledge of the possible worlds, the accessibility relation 
and the current world, 
we infer the potential acceptance status(es) of $a$  
under $\lambda$ as follows: \\

For (1), since $a$ can either be accepted or rejected 
but not both simultaneously in any possible world, this case where 
both acceptance and rejection of $a$ are implied 
in every accessible possible world 
is logically inconsistent. Thus, 
only $\undecL$ is expected as the acceptability status of $a$ under $\lambda$, 
or, synonymously, $\lambda$ designates only $\undecL$ for $a$. 
In the rest, we uniformly make use of the latter expression. \\
 
For (2), in some accessible possible worlds, 
$a$'s acceptance 
and rejection are both implied, leading to 
inconsistency, while in the other accessible possible worlds, 
only $a$'s acceptance (resp.
rejection) is implied. Hence, it is clear 
that $\lambda$ designates any of $\inL$ and $\undecL$ (resp. 
$\outL$ and $\undecL$) for $a$.\\

For (3), it is the case that acceptance (resp. rejection) of $a$ 
is implied in every possible world.  $\lambda$ designates
only $\inL$ (resp. $\outL$) for $a$. \\

For (4), it is possible that only 
$a$'s acceptance is implied in 
some accessible 
possible worlds, only $a$'s rejection is implied 
in some other accessible possible worlds, 
and both $a$'s acceptance and rejection are implied
in the remaining accessible possible worlds. 
Thus, $\lambda$ designates any of $\inL$, $\outL$ and $\undecL$ 
for $a$. \\

For (5), it is analogous to (2). $\lambda$ designates 
either of $\inL$ and $\undecL$ (resp. $\outL$ and $\undecL$) for $a$. \\

For (6), we have logical 
inconsistency, and so $\lambda$ designates $\undecL$ for $a$.  \\
 
\begin{wrapfigure}[15]{r}{6.3cm}  
 \begin{tikzcd}[column sep=tiny,row sep=tiny,
	 /tikz/execute at end picture={
		 \draw (-1.35, -1.5) -- (-1.35, 1.2);
		 \draw (-2.7, 0.77) -- (2.6, 0.77);}]  
	 & \text{must}\text{-r} & \text{may}_s\text{-r} & \text{not}\text{-r} \\
	  \text{must}\text{-a} & \undecL & \inL ? & \inL \\ 
	 \text{may}_s\text{-a}	 &  \outL ? & \textsf{any} & \inL ? \\
	 \text{not-a} & \outL & \outL ? & \undecL
\end{tikzcd}   
\caption{Label designation table for any argument 
which satisfies a combination of 
may- must- conditions. \textsf{any} 
is any of $\inL, \outL, \undecL$, \textsf{in}$?$
is any of $\inL, \undecL$, and \textsf{out}$?$ 
is any of $\outL, \undecL$.}
\label{fig_table} 
\end{wrapfigure} 

\noindent Fig. \ref{fig_table} 
summarises the label designation
for any $a \in A$ which satisfies 
a given combination of may- must- conditions under a given $\lambda 
\in \Lambda$. 
The 
\textsf{any} 
entry in the table abbreviates 
either of $\inL$, $\outL$ and $\undecL$, the \textsf{in}$?$ entry 
either of $\inL$ and $\undecL$, and the \textsf{out}$?$ entry 
either of $\outL$ and $\undecL$. 
 
It should be noted that we obtain this result in general. 
If we restrict out attention to a particular set of 
possible worlds, a particular accessibility relation and 
a particular current world, the \textsf{any}, \textsf{in}?, 
and \textsf{out}? entries in the table may be more precise.  
We do not deal with the specific setting in this particular 
paper. 

We make the concept of label designation formal in the following definition. 
\begin{definition}[Label designation]\label{def_label_designation}
   For any $F \equiv 
   (A, R, f_Q)\ (\in \mathcal{F})$, any $a \in A$, and 
  any $\lambda \in \Lambda$, we say that    
  $\lambda$ designates $l \in \mathcal{L}$  
  for $a$ 
  iff  all the following conditions hold.  
\begin{enumerate} 
      \item $\lambda$ is defined for every 
	      member of $\pre^F(a)$. 
      \item If $l = \inL$, then  
            $a$ satisfies may-a
            but not must-r. 
      \item If $l = \outL$, then 
      $a$ satisfies may-r but 
      not must-a.
     \item If $l = \undecL$, then either of the following 
    holds.
     \begin{itemize} 
         \item $a$ satisfies must-a and 
      must-r. 
         \item $a$ satisfies at least either may$_s$-a 
       or may$_s$-r. 
        \item $a$ satisfies not-a
     and not-r. 
      \end{itemize} 
   \end{enumerate} 
\end{definition}   
As can be easily surmised from Fig. \ref{fig_table}, 
a labelling $\lambda \in \Lambda$ may designate more than one label 
for an argument: 
\begin{proposition}[Non-deterministic label 
designation]\label{prop_non_deterministic} 
   There exist $F \equiv 
   (A, R, f_Q)\linebreak (\in \mathcal{F})$, $a \in A$, 
  $\lambda \in \Lambda$, and $l_1, l_2 \in \mathcal{L}$
   such that 
  $\lambda$ designates $l_1$ and $l_2$ for $a$ with $l_1 \not= l_2$.
\end{proposition}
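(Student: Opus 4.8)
The plan is simply to exhibit a witness: it suffices to produce one may-must argumentation, one labelling, and one argument that lands in a multi-valued cell of Fig.~\ref{fig_table} --- for instance the $\inL?$ cell, where an argument satisfies may$_s$-a together with not-r, so that both $\inL$ and $\undecL$ are designated. Concretely, I would take $A = \{a, b\}$, $R = \{(b,a)\}$, set $f_Q(b)$ to any value respecting the constraint of Definition~\ref{def_abstract_argumentation} (e.g.\ $((1,1),(1,1))$), and set $f_Q(a) = ((1,2),(1,2))$; the latter satisfies $((f_Q(a))^i)^1 \le ((f_Q(a))^i)^2$ for $i \in \{1,2\}$, so $F \equiv (A, R, f_Q) \in \mathcal{F}$. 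For the labelling I would take any $\lambda \in \Lambda^{\{a,b\}}$ with $\lambda(b) = \outL$ (the value $\lambda(a)$ is immaterial and only fixed so that $\lambda$ is defined for $a$ as required by Definition~\ref{def_may_and_must}).

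Next I would verify the relevant satisfactions. Since $\pre^F(a) = \{b\}$, we have $|\pre^F_{\lambda,\outL}(a)| = 1$ and $|\pre^F_{\lambda,\inL}(a)| = 0$. Plugging into Definition~\ref{def_may_and_must}: $a$ satisfies may-a because $((f_Q(a))^1)^1 = 1 \le 1$; it does \emph{not} satisfy must-a because $((f_Q(a))^1)^2 = 2 \not\le 1$, so in fact it satisfies may$_s$-a; it does not satisfy may-r because $((f_Q(a))^2)^1 = 1 \not\le 0$, so it satisfies not-r and hence a fortiori does not satisfy must-r.

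Finally I would read off the label designations from Definition~\ref{def_label_designation}. Condition~1 holds, since $\lambda$ is defined on the unique member $b$ of $\pre^F(a)$. As $a$ satisfies may-a but not must-r, $\lambda$ designates $\inL$ for $a$; as $a$ satisfies may$_s$-a, $\lambda$ designates $\undecL$ for $a$. Taking $l_1 = \inL$ and $l_2 = \undecL$, with $\inL \ne \undecL$, completes the argument. I do not anticipate any real obstacle: the only points requiring care are confirming the well-formedness constraint of Definition~\ref{def_abstract_argumentation} on $f_Q$ and checking that every clause of Definition~\ref{def_label_designation} holds for both designated labels --- both immediate from the arithmetic above. (One could equally drop $b$ and use a single argument with no attackers and $f_Q(a) = ((0,1),(1,1))$ if $0 \in \mathbb{N}$ is admitted, but the two-argument witness avoids relying on that convention and is slightly more illustrative.)
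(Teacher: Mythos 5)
Your proof is correct and takes essentially the same route as the paper, which simply reads the multi-valued cells off Fig.~\ref{fig_table} without exhibiting a witness; your concrete instantiation of the may$_s$-a/not-r cell (designating both $\inL$ and $\undecL$) checks out against Definitions~\ref{def_may_and_must} and~\ref{def_label_designation}.
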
   
 On a closer look, it follows that from every corner of 
 the table with only one label,  
 there will be increasingly more labels  
 as we minimally travel from the cell to the centre cell. 
 We have: 
\begin{proposition}[Label designation subsumption]\label{prop_label_designation_subsumption}       
	Let $x, y$ be a member of\linebreak $\{\text{must}, \text{may}_s, \text{not}\}$. 
	For any $F \equiv (A, R, f_Q)\ (\in \mathcal{F})$, 
	any $a \in A$ and any $\lambda_1, \lambda_2, \lambda_3 \in \Lambda^{A_x}$
	with $\pre^F(a) \subseteq A_x$, 
	if  $a$ satisfies: $x$-a and $y$-r
	under $\lambda_1$; 
	may$_s$-a and $y$-r under $\lambda_2$; 
	and $x$-a and may$_s$-r under $\lambda_3$, 
	and if $\lambda_1$ designates $l \in \mathcal{L}$ 
	for $a$, then 
	both $\lambda_2$ and $\lambda_3$ designate $l$ 
	for $a$.  
\end{proposition}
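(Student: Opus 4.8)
The plan is to reduce the statement to a short case analysis, driven directly by Definition~\ref{def_label_designation} together with the elementary relations among the conditions of Definition~\ref{def_may_and_must}; Fig.~\ref{fig_table} is a useful guide, but the argument will be made from the definitions. First, since $\lambda_1,\lambda_2,\lambda_3\in\Lambda^{A_x}$ and $\pre^F(a)\subseteq A_x$, each of the three labellings is defined on every member of $\pre^F(a)$, so clause~1 of Definition~\ref{def_label_designation} holds for all three and only clauses 2--4 need to be tracked. Observe also that if $x=\text{may}_s$ then the hypothesis on $\lambda_2$ is exactly the hypothesis on $\lambda_1$, and symmetrically if $y=\text{may}_s$ the hypothesis on $\lambda_3$ is exactly that on $\lambda_1$; so the content lies in the corner situations $x,y\in\{\text{must},\text{not}\}$.

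Before the case analysis I would record the following, for $c$ ranging over $\text{a}$ and $\text{r}$ and for any fixed labelling: by Definition~\ref{def_may_and_must}, must-$c$, may$_s$-$c$ and not-$c$ are pairwise incompatible and exactly one of them holds; must-$c$ implies may-$c$ (this is where the constraint $((f_Q(a))^i)^1\leq((f_Q(a))^i)^2$ of Definition~\ref{def_abstract_argumentation} enters); may$_s$-$c$ implies may-$c$ and the failure of must-$c$; and not-$c$ is precisely the failure of may-$c$. Consequently, knowing that $a$ satisfies $z$-$c$ under $\lambda_i$ for a specific $z\in\{\text{must},\text{may}_s,\text{not}\}$ determines exactly whether may-$c$ and must-$c$ hold under $\lambda_i$.

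Now split on the designated label $l$. If $l=\inL$, then $\lambda_1$ designating $\inL$ forces, by Definition~\ref{def_label_designation}, that may-a holds and must-r fails under $\lambda_1$; by the observations this means $x\neq\text{not}$ and $y\neq\text{must}$. Under $\lambda_2$ the hypothesis gives may$_s$-a, hence may-a, and $y$-r with $y\neq\text{must}$, hence must-r fails, so $\lambda_2$ designates $\inL$; under $\lambda_3$ the hypothesis gives $x$-a with $x\neq\text{not}$, hence may-a, and may$_s$-r, hence must-r fails, so $\lambda_3$ designates $\inL$. The case $l=\outL$ is the mirror image under exchanging $\text{a}\leftrightarrow\text{r}$, $\inL\leftrightarrow\outL$, $x\leftrightarrow y$ and $\lambda_2\leftrightarrow\lambda_3$. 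If $l=\undecL$, then at least one of $(\text{must-a}\wedge\text{must-r})$, may$_s$-a, may$_s$-r, $(\text{not-a}\wedge\text{not-r})$ holds under $\lambda_1$; each of these forces the corresponding values of $x$ and $y$ (may$_s$-a forces $x=\text{may}_s$; may$_s$-r forces $y=\text{may}_s$; $\text{must-a}\wedge\text{must-r}$ forces $x=y=\text{must}$; $\text{not-a}\wedge\text{not-r}$ forces $x=y=\text{not}$). In every one of these sub-cases the hypotheses on $\lambda_2$ and $\lambda_3$ make $a$ satisfy may$_s$-a under $\lambda_2$ and may$_s$-r under $\lambda_3$, and either of those alone is a sufficient disjunct in clause~4 of Definition~\ref{def_label_designation}, so both $\lambda_2$ and $\lambda_3$ designate $\undecL$.

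I do not expect a genuine obstacle: once the mutual-exclusivity and implication facts above are in place, the rest is bookkeeping. The only thing needing care is to stay faithful to Definition~\ref{def_label_designation} --- designating $\undecL$ requires just one of its four disjuncts, whereas designating $\inL$ or $\outL$ is the conjunction of a may-condition with the failure of a must-condition --- and to keep the distinct roles of $\lambda_2$ (weakening the acceptance side to may$_s$) and $\lambda_3$ (weakening the rejection side to may$_s$) straight when invoking the $\text{a}/\text{r}$ symmetry.
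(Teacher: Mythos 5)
Your proposal is correct, and it is essentially the paper's argument made explicit: the paper's proof is literally ``See Fig.~\ref{fig_table}'', and that table is nothing but a tabulation of Definition~\ref{def_label_designation}, so your case analysis on $l$ using the implications must-$c \Rightarrow$ may-$c$, may$_s$-$c \Rightarrow$ may-$c \wedge \neg$must-$c$, and not-$c \Leftrightarrow \neg$may-$c$ is exactly the verification the figure encodes. (As a minor simplification, in the $l=\undecL$ case you need not determine $x$ and $y$ at all: the hypotheses already give may$_s$-a under $\lambda_2$ and may$_s$-r under $\lambda_3$, each of which is by itself a sufficient disjunct of clause~4.)
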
 
\begin{proof} 
	See Fig. \ref{fig_table}. \hfill$\Box$ 
\end{proof}

Now, in general, if $\lambda$ designates $l \in \mathcal{L}$ for $a \in A$, 
$l$ may still not actually 
be $a$'s label under $\lambda$. To connect the two, 
we define the following. 
\begin{definition}[Proper label]\label{def_designated_label}
   For any $F \equiv 
   (A, R, f_Q)\ (\in \mathcal{F})$, any $a \in A$, and 
  any $\lambda \in \Lambda$, 
we say that $a$'s label is proper under $\lambda$ iff  
   all the following conditions hold. 
	(1) $\lambda$ is defined for $a$. 
	(2) 
	$\lambda$ designates $\lambda(l)$ for $a$. 
\end{definition}  
An earlier conference paper \cite{arisaka2020a} 
calls it a `designated label' instead 
of a `proper label' under $\lambda$. We thought better of 
the term. 

 An intuitive understanding 
of the significance of the properness 
is: if every argument's label is proper under  
$\lambda$, then $\lambda$ is a `good' 
labelling in the sense of 
every argument respecting the correspondences 
in Fig. \ref{fig_table}.  
\begin{example}[Labelling]\label{ex_labelling} 
To illustrate these definitions around 
labelling, let us consider the following 
acyclic argumentation graph  
with associated nuance tuples. 
We let $\underset{Q}{a}$ be a graphical 
  representation of an argument $a$ 
  with $f_Q(a) = Q$.
\begin{center} 
\begin{tikzcd}  
   \underset{((0, 0), (1,1))}{\ensuremath{a_1}} 
  \rar & 
  \underset{((0,1), (1,2))}{\ensuremath{a_2}} \rar & 
   \underset{((1,1),(1,1))}{\ensuremath{a_3}} & \underset{((1,1),(1,1) )}{\ensuremath{a_4}} \lar &   
   \underset{((0,0), (1, 1))}{\ensuremath{a_5}} \lar
\end{tikzcd}  
\end{center}  
\noindent Denote this argumentation (with the indicated 
nuance tuples) by $F$. There are 3 `good' labellings in  
$\Lambda^{\{a_1, \ldots, a_5\}}$
satisfying 
any one of the following label assignments. 
Let us call the label with the first 
(, second, third) 
label assignment $\lambda_1$ (, $\lambda_2$, $\lambda_3$).
\begin{center} 
\begin{tikzcd}  
   \overset{\inL}{\underset{((0, 0), (1,1))}{\ensuremath{a_1}}} 
  \rar & 
\overset{\outL}{\underset{((0,1), (1,2))}{\ensuremath{a_2}}} \rar & 
   \overset{\inL}{\underset{((1,1),(1,1))}{\ensuremath{a_3}}} & 
   \overset{\outL}{\underset{((1,1),(1,1) )}{\ensuremath{a_4}}} \lar &   
   \overset{\inL}{\underset{((0,0), (1, 1))}{\ensuremath{a_5}}} \lar
\end{tikzcd}   

\begin{tikzcd}  
   \overset{\inL}{\underset{((0, 0), (1,1))}{\ensuremath{a_1}}} 
  \rar & 
\overset{\inL}{\underset{((0,1), (1,2))}{\ensuremath{a_2}}} \rar & 
   \overset{\undecL}{\underset{((1,1),(1,1))}{\ensuremath{a_3}}} & 
   \overset{\outL}{\underset{((1,1),(1,1) )}{\ensuremath{a_4}}} \lar &   
   \overset{\inL}{\underset{((0,0), (1, 1))}{\ensuremath{a_5}}} \lar
\end{tikzcd}  

\begin{tikzcd}  
   \overset{\inL}{\underset{((0, 0), (1,1))}{\ensuremath{a_1}}} 
  \rar & 
\overset{\undecL}{\underset{((0,1), (1,2))}{\ensuremath{a_2}}} \rar & 
   \overset{\inL}{\underset{((1,1),(1,1))}{\ensuremath{a_3}}} & 
   \overset{\outL}{\underset{((1,1),(1,1) )}{\ensuremath{a_4}}} \lar &   
   \overset{\inL}{\underset{((0,0), (1, 1))}{\ensuremath{a_5}}} \lar
\end{tikzcd}  
\end{center}  
Of the 5 arguments, the labels of $a_1$, $a_5$ and 
$a_4$ are proper under some 
$\lambda \in \Lambda^{\{a_1, \ldots,a_5\}}$ 
iff $\lambda(a_1) = \lambda(a_5) = \inL$ 
and $\lambda(a_4) = \outL$ hold. 
To see that that 
is the case, let us firstly note
that $\pre^F(a_1) = \pre^F(a_5) = \emptyset$. 
Thus, designation of label(s) 
for both $a_1$ and $a_5$ 
are known with no dependency on other 
arguments.\footnote{This does not mean that 
there can be only one label 
to be designated for $a_x$: if $a_x$ satisfies may$_s$-a or 
may$_s$-r, $a_x$ is designated more than one label. 
We simply mean that no other arguments are required 
to know which may- must- conditions 
$a_x$ would satisfy.} 
It follows trivially from the associated 
nuance tuples that both $a_1$ 
and $a_5$ satisfy must-a
but not may-r. 
From \mbox{Fig. \ref{fig_table}}, then, 
$\inL$ is the only one label to be designated for 
these two arguments. Vacuously,  
if any $\lambda \in \Lambda^{\{a_1, \ldots, a_5\}}$  designates 
only $\inL$ for $a_1$ and $a_5$, 
it must deterministically 
hold that $\lambda(a_1) = \lambda(a_5) = \inL$, 
if the two arguments' labels are to be 
proper under $\lambda$. 
Now for $a_4$, assume $\inL$ label for $a_5$, it 
satisfies must-r
(because there is 1 accepted  
attacking argument) and not-a
(because 
there is 0 rejected attacking argument), 
which finds in \mbox{Fig. \ref{fig_table}} the corresponding 
label $\outL$ to be designated for $a_4$. 
This is deterministic 
provided $\inL$ label for $a_5$ is deterministic, which 
happens to be the case in this example.  

A more interesting case is of $a_2$. Assume 
$\inL$ label for $a_1$, then it satisfies may$_s$-a
 (because 
there is 0 rejected attacking argument) 
and may$_s$-r (because 
there is 1 accepted attacking argument), 
which finds $\textsf{any}$ in \mbox{Fig. \ref{fig_table}} 
indicating that any of the 3 labels is designated 
for $a_2$. 

Finally for $a_3$ for which neither may$_s$-a
nor  
may$_s$-r can be satisfied, 
every combination 
of acceptability statuses of $a_2$ and $a_4$ 
leads to at most one of the 3 labels to be designated for $a_3$. 
Consequently, $\lambda_1, \lambda_2$ and $\lambda_3$ 
are indeed the only 3 possible labellings
of $F$ such that 
every argument's label is proper under 
them.  \hfill$\clubsuit$ 
\end{example}  

\section{Semantics with Comparisons}\label{section_semantics}  
\subsection{Exact labellings and semantics} 
The `good' labellings are exact to the labelling conditions
on the arguments as imposed by their may-must scales; thus we call 
them exact labellings: 
\begin{definition}[Exact labellings]\label{def_ideal_labelling}
  For any $F \equiv (A, R, f_Q)\ (\in \mathcal{F})$, 
	we say that $\lambda \in \Lambda$ 
	is an exact labelling of $F$ iff 
	(1) $\lambda \in \Lambda^A$ $\andC$ (2) 
	 every $a \in A$'s label is proper under $\lambda$. 
\end{definition} 
The labelling conditions for Dung abstract argumentation 
(see Section \ref{section_technical_preliminaries})  
are such that any labelling that satisfies 
the acceptance and rejection conditions of every 
argument in $(A, R) \in \mathcal{F}^{\Dung}$ is a complete 
labelling of $(A, R)$. As such, any complete labelling  
of $(A, R)$ is exact. Exact labellings form the following semantics. 
\begin{definition}[Exact semantics] 
        For any $F \equiv (A, R, f_Q)\ (\in \mathcal{F})$,  
	we say that $\Lambda_x \subseteq \Lambda$ 
	is the exact semantics of $F$ iff 
	every $\lambda \in \Lambda_x$ is, but no 
	$\lambda_x \in (\Lambda^A \backslash \Lambda_x)$ is, 
	an exact labelling of $F$. We denote 
	it by $\Lambda^{exact}_F$. 
\end{definition} 

\noindent Unfortunately, an exact labelling may not actually exist in general, 
as the following example shows.  
\begin{example}[Non-termination of choosing 
	an exact labelling]
  \label{ex_undecidability_of} 
Consider 
\begin{tikzpicture}[baseline= (a).base] 
			\node[scale=.94] (a) at (0,0){
\begin{tikzcd} 
   \underset{((0, 0), (1,1))}{\ensuremath{a_1}} 
   \arrow[out=5,in=-5,loop,looseness=10]
\end{tikzcd} 
	};
\end{tikzpicture} 
	Then, \mbox{$[a_1:\inL]^\lambda$} (see Section 2 for the notation) designates 
$\undecL$ for $a_1$. 
	However, \mbox{$[a_1:\undecL]^{\lambda}$} designates $\inL$ for $a_1$.  
	Also, \mbox{$[a_1:\outL]^\lambda$} designates $\inL$ for 
$a_1$.  
\hfill$\clubsuit$ 
\end{example}    

\begin{theorem}[Absence of an exact labelling]\label{thm_nothing} 
  There exists some $(A, R, f_Q) \in \mathcal{F}$ 
  and some $a \in A$ 
such that, for every $\lambda \in \Lambda^A$,  
$\lambda$ designates $l \not= \lambda(a)$ for $a$. 
\end{theorem}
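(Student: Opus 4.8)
The plan is to exhibit a single concrete counterexample — essentially the self-attacking argument from Example \ref{ex_undecidability_of} — and verify that no labelling in $\Lambda^A$ can make that argument's label proper. Take $A = \{a_1\}$, $R = \{(a_1, a_1)\}$, and $f_Q(a_1) = ((0,0),(1,1))$, so that $a_1$ is its own sole attacker. Since $\pre^F(a_1) = \{a_1\}$, the condition ``$\lambda$ is defined for every member of $\pre^F(a_1)$'' in Definition \ref{def_label_designation} just amounts to $\lambda$ being defined on $a_1$, so for each of the three candidate labellings $\lambda \in \Lambda^{\{a_1\}}$ I only need to compute which labels $\lambda$ designates for $a_1$ and check it never coincides with $\lambda(a_1)$.

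The key computation is a three-way case split on $\lambda(a_1)$. If $\lambda(a_1) = \inL$, then $|\pre^F_{\lambda,\outL}(a_1)| = 0$ and $|\pre^F_{\lambda,\inL}(a_1)| = 1$; with the scales $((0,0),(1,1))$ this means $a_1$ satisfies must-a (since $0 \leq 0$) — more precisely not-a is impossible and may$_s$-a is impossible because $((f_Q(a_1))^1)^1 = ((f_Q(a_1))^1)^2 = 0$ — and satisfies must-r (since $1 \geq 1$). So $a_1$ satisfies must-a and must-r, hence by Fig.~\ref{fig_table} (or clause 4 of Definition \ref{def_label_designation}) $\lambda$ designates only $\undecL$ for $a_1$, and $\undecL \neq \inL$. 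If $\lambda(a_1) = \outL$, then $|\pre^F_{\lambda,\outL}(a_1)| = 1$ and $|\pre^F_{\lambda,\inL}(a_1)| = 0$; now $a_1$ satisfies must-a (since $1 \geq 1$) and not-r (since $0 < 1$), so the designated label is only $\inL$, and $\inL \neq \outL$. If $\lambda(a_1) = \undecL$, then both $\pre$-counts are $0$, so $a_1$ satisfies must-a ($0 \geq 1$? no — here $0 < 1$, so actually not-r, and $0 \geq 0$ so must-a holds, not-a fails, may$_s$-a fails) and not-r; by the table $\lambda$ designates only $\inL$, and $\inL \neq \undecL$. In every case $\lambda$ designates some $l \neq \lambda(a_1)$.

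I would write this up by first pinning down the instance, then stating that it suffices to check the three labellings, then running the case analysis compactly — possibly folding the ``must-a always holds / may$_s$ and not-a never hold'' observation for the acceptance side into a single remark since $((f_Q(a_1))^1)^1 = ((f_Q(a_1))^1)^2 = 0$ forces must-a whenever there is at least $0$ rejected attackers, i.e. always, so the acceptance side contributes must-a in all three cases and the only variation is on the rejection side driven by $|\pre^F_{\lambda,\inL}(a_1)| \in \{0,1\}$. The main (mild) obstacle is just being careful with Definition \ref{def_may_and_must}: because the may- and must- acceptance thresholds both equal $0$, ``may$_s$-a'' (which requires $((f_Q(a_1))^1)^1 \leq |\pre^F_{\lambda,\outL}(a_1)| < ((f_Q(a_1))^1)^2$) is vacuously unsatisfiable, and similarly ``not-a'' requires $|\pre^F_{\lambda,\outL}(a_1)| < 0$ which is impossible; getting these edge cases right is what makes the table lookups unambiguous. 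Everything else is a direct appeal to Fig.~\ref{fig_table} and Definition \ref{def_label_designation}, so no genuinely hard step remains.
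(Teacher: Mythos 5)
Your proposal is correct and is essentially the paper's own proof: the paper simply points to Example~\ref{ex_undecidability_of}, which is exactly your self-attacking argument with nuance tuple $((0,0),(1,1))$, and your three-way case split reproduces the designations $\undecL$, $\inL$, $\inL$ stated there. (Minor slip: in the $\lambda(a_1)=\outL$ case the justification for must-a should be $((f_Q(a_1))^1)^2 = 0 \leq 1 = |\pre^F_{\lambda,\outL}(a_1)|$ rather than ``$1 \geq 1$'', but the conclusion is right and you state the correct reasoning elsewhere.)
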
   
\begin{proof} 
	See Example \ref{ex_undecidability_of}. \hfill$\Box$ 
\end{proof}
\subsection{Maximally proper labellings and 
semantics} 
As the result of Theorem \ref{thm_nothing}, 
generally with $(A, R, f_Q) \in \mathcal{F}$ with 
an arbitrary $f_Q$, we can only 
hope to 
obtain maximally proper labellings with which   
as many arguments as are feasible are proper while 
the remaining are labelled $\undecL$. 
\begin{definition}[Maximally proper labellings and semantics]\label{def_maximally_proper} 
	For any $F \equiv (A, R, f_Q)\linebreak (\in \mathcal{F})$, 
we say that 
$\lambda \in \Lambda$ is a pre-maximally proper labelling 
	of $F$ iff (1) $\lambda \in \Lambda^A$ $\andC$ 
		(2) for any $a \in A$, either 
			$a$'s label is proper under $\lambda$ 
			or  $\lambda(a) = \undecL$. 
 
	We say that a pre-maximally proper labelling $\lambda \in \Lambda$
	of $F$ is maximally proper 
iff,  for every pre-maximally proper labelling $\lambda_x \in \Lambda$ 
	of $F$ and every $a \in A$, it holds that 
		if $a$'s label is proper under 
			$\lambda_x$, then $a$'s label is proper 
			under $\lambda$.  

    We say that $\Lambda_x \subseteq \Lambda$ is 
	the maximally proper semantics of $F$ 
	iff every $\lambda \in \Lambda_x$ is, but no 
	$\lambda_x \in (\Lambda^A \backslash \Lambda_x)$ is, 
	a maximally proper labelling of $F$. 
	We denote the maximally proper semantics of $F$ 
	by $\Lambda^{maxi}_F$. 
\end{definition}  
It should be noted that the order on the set of 
all pre-maximally proper labellings is not $\preceq$. 
For example, If $\lambda_1 \equiv [a_1: \inL, a_2: \undecL, a_3: \outL]$ 
and $\lambda_2 \equiv [a_1: \outL, a_2: \inL, a_3: \inL]$ are 
both pre-maximally proper labellings of $F$, 
$\lambda_2$ is, but not $\lambda_1$ is, 
a maximally proper labelling of $F$.

For the relation to the exact semantics, we have: 
\begin{theorem}[Conservation and existence]\label{thm_conservation} 
    For any $F \equiv (A, R, f_Q)\ (\in \mathcal{F})$,  
	it holds that $\Lambda_F^{exact} \subseteq \Lambda_F^{maxi}$.  
	If $\Lambda^{exact}_F \not= \emptyset$, then 
	it also holds that $\Lambda^{maxi}_F \subseteq \Lambda^{exact}_F$. 
	Moreover, if $A \not= \emptyset$, then 
	$\Lambda_F^{maxi} \not= \emptyset$. 
\end{theorem}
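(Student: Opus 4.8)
The plan is to prove the three claims in order, since the later ones build on the earlier ones. First, for $\Lambda_F^{exact} \subseteq \Lambda_F^{maxi}$: take any exact labelling $\lambda$. By Definition~\ref{def_ideal_labelling}, every $a \in A$'s label is proper under $\lambda$, so $\lambda$ is trivially a pre-maximally proper labelling (the disjunction in clause (2) of Definition~\ref{def_maximally_proper} holds via the first disjunct for every argument). To see it is maximally proper, observe that for any pre-maximally proper $\lambda_x$ and any $a$, the consequent ``$a$'s label is proper under $\lambda$'' already holds unconditionally, so the implication is vacuously satisfied. Hence $\lambda \in \Lambda_F^{maxi}$.

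Next, assume $\Lambda_F^{exact} \neq \emptyset$ and show $\Lambda_F^{maxi} \subseteq \Lambda_F^{exact}$. Fix an exact labelling $\lambda^\ast$ (which exists by hypothesis and lies in $\Lambda_F^{maxi}$ by the first part). Let $\lambda$ be any maximally proper labelling. Since $\lambda^\ast$ is pre-maximally proper and every $a \in A$ has a proper label under $\lambda^\ast$, the definition of maximal properness applied to $\lambda$ forces: for every $a \in A$, $a$'s label is proper under $\lambda$. That is exactly condition (2) of Definition~\ref{def_ideal_labelling}; condition (1), $\lambda \in \Lambda^A$, is built into being a pre-maximally proper labelling. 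Hence $\lambda$ is an exact labelling, i.e. $\lambda \in \Lambda_F^{exact}$.

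For the last claim, assume $A \neq \emptyset$ and show $\Lambda_F^{maxi} \neq \emptyset$. The approach is: first exhibit at least one pre-maximally proper labelling, then extract a maximally proper one. For the existence of a pre-maximally proper labelling, I would check that the all-undecided labelling $\lambda_0 \in \Lambda^A$ with $\lambda_0(a) = \undecL$ for all $a$ is pre-maximally proper --- clause (2) is satisfied because $\lambda_0(a) = \undecL$ for every $a$, so the second disjunct always holds. (It is defined for every member of every $\pre^F(a)$ as well, since it is defined on all of $A$.) So the set $P$ of pre-maximally proper labellings of $F$ is non-empty. Now, to each $\lambda \in P$ associate the set $S(\lambda) = \{a \in A : a\text{'s label is proper under }\lambda\}$. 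Since $A$ is finite, the collection $\{S(\lambda) : \lambda \in P\}$ is a finite family of finite sets, so it has a maximal element --- but maximality here must be with respect to the right order. The key subtlety is that Definition~\ref{def_maximally_proper} requires a $\lambda$ whose ``proper set'' contains the proper set of \emph{every} other pre-maximally proper labelling simultaneously, which is stronger than just being $\subseteq$-maximal in the family; it asks for a \emph{greatest} element, not merely a maximal one.

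\textbf{The main obstacle} is therefore precisely this point: showing that $\{S(\lambda) : \lambda \in P\}$ has a greatest element, not just maximal ones. The intended argument must establish that proper sets are ``compatible'': if $a$'s label is proper under some pre-maximally proper $\lambda_1$ and $b$'s label is proper under some pre-maximally proper $\lambda_2$, then there is a single pre-maximally proper labelling under which both $a$ and $b$ have proper labels. I would prove this by a ``merging'' construction --- roughly, build $\lambda$ by taking, on each argument, the non-$\undecL$ label whenever one of the $\lambda_i$ assigns one and keeping $\undecL$ on the rest --- and then verify that (i) properness of $a$'s label under $\lambda$ only depends on $\lambda$ restricted to $\pre^F(a)$ (immediate from Definition~\ref{def_label_designation}, clause~1), and (ii) this merge does not destroy any properness that held in the pieces. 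Care is needed because properness of $a$ under $\lambda_1$ depends on how many of $a$'s attackers are labelled $\inL$ resp. $\outL$; merging in a different labelling could change those counts. The resolution should use that the set of arguments with proper labels is ``downward closed'' under the attack relation in the relevant sense --- an argument with a proper label has all its attackers labelled (clause~1 of Definition~\ref{def_label_designation}), and for a pre-maximally proper labelling an attacker that is not itself proper is forced to $\undecL$, so conflicting non-$\undecL$ assignments among attackers cannot arise. Once compatibility is in hand, a finite iterated merge over all of $P$ yields a pre-maximally proper labelling whose proper set contains every $S(\lambda)$, which is the required maximally proper labelling, establishing $\Lambda_F^{maxi} \neq \emptyset$.
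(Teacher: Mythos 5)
Your treatment of the first two inclusions is correct, and it is already more explicit than the paper's own proof, which disposes of the whole theorem with a one-line remark. The real issue is the third claim, and you have put your finger on exactly the right spot: Definition~\ref{def_maximally_proper} demands a pre-maximally proper labelling whose set of proper arguments contains that of \emph{every} pre-maximally proper labelling, i.e.\ a greatest element, not merely a $\subseteq$-maximal one. However, the repair you sketch --- that properness under different pre-maximally proper labellings can always be merged because a non-proper attacker is forced to $\undecL$ --- does not hold: the conflict can come from an attacker that is proper under \emph{both} labellings but with \emph{different} labels. Concretely, take $A = \{c,d,e\}$, $R = \{(c,d),(d,d),(c,e),(e,e)\}$, $f_Q(c) = ((0,1),(0,1))$, $f_Q(d) = ((0,0),(1,1))$, $f_Q(e) = ((1,1),(0,0))$. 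Since $\pre^F(c)=\emptyset$, $c$ satisfies may$_s$-a and may$_s$-r under every labelling, so all three labels are designated for $c$ and $c$ is proper whatever it is labelled. A short case analysis shows that $d$'s label is proper iff $\lambda(c)=\inL$ and $\lambda(d)=\undecL$ (if some attacker of $d$ is labelled $\inL$, the only designated label is $\undecL$; otherwise it is $\inL$, which $d$ cannot properly carry since labelling $d$ with $\inL$ puts its self-attack into the first case), and symmetrically $e$'s label is proper iff $\lambda(c)=\outL$ and $\lambda(e)=\undecL$. Hence every pre-maximally proper labelling of this $F$ has $\lambda(d)=\lambda(e)=\undecL$ and an arbitrary label on $c$, with proper set $\{c,d\}$, $\{c,e\}$ or $\{c\}$ according as $\lambda(c)$ is $\inL$, $\outL$ or $\undecL$; no single pre-maximally proper labelling makes both $d$ and $e$ proper.

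Consequently, under the literal reading of Definition~\ref{def_maximally_proper}, this $F$ has $\Lambda^{maxi}_F=\emptyset$ although $A\neq\emptyset$: the existence claim fails, and no merging lemma can rescue it, so the ``main obstacle'' you identify is in fact insurmountable. What is true --- and presumably what is intended --- is existence under the weaker reading in which $\lambda$ is maximally proper when no pre-maximally proper $\lambda_x$ has a strictly larger proper set; that follows immediately from finiteness of $A$ together with your (correct) observation that the all-$\undecL$ labelling is always pre-maximally proper. Your proofs of the two inclusions go through under either reading, so the only part of your plan that cannot be completed is the compatibility/merging step, and that is a defect of the theorem as stated rather than of your argument.
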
 
\begin{proof} 
	Straightforward if we have $\Lambda_F^{maxi}, 
	\Lambda^{exact}_F \subseteq \Lambda^A$, which holds 
	to be the case by the definitions of a labelling 
	exact or maximally proper. 
	\hfill$\Box$ 
\end{proof} 
\noindent For the example 
in Example \ref{ex_undecidability_of}, 
we have one maximally proper labelling\linebreak
$[a_1:\undecL]^\lambda$.  

We can also 
obtain the maximally proper semantics of $F \in \mathcal{F}$  
in more step-by-step a manner, 
since the dependency of arguments' acceptability 
statuses is, as with \cite{Caminada06,Brewka13}, 
strictly from source argument(s)' to their target argument's. 
Without loss of generality, we may thus consider 
any maximality per strongly connected 
component (see below), from ones that 
depend on 
a fewer number of other strongly connected 
components to those 
with a larger number of them to depend on. 
 
Let us first recall 
the definition of a strongly connected component. 

\begin{definition}[SCC and SCC depth]\label{def_scc_and}    
	   For any $F \equiv (A, R, f_Q)\ (\in \mathcal{F})$, 
	we say that $(A_1, R_1) \in \mathcal{F}^{\Dung}$ with $A_1 \subseteq A$ 
	and $R_1 \in \mathcal{R}^{A_1}$  
is a 
	strongly connected component (SCC) of $F$ iff, for every 
$a_x \in A$ and every $a_y \in A_1$, we have: 
  $\{(a_x, a_y), (a_y, a_x)\} \subseteq R^*$ iff 
$a_x \in A_1$. Here, $R^*$ is the reflexive and 
transitive closure of $R$.  

Let $\Delta: \mathcal{F} \times \mathcal{A} 
    \rightarrow 2^{\mathcal{A}}$ be such that,   
   for any $F \equiv (A, R, f_Q)\ (\in \mathcal{F})$  
    and any $a \in A$, 
     $\Delta(F, a)$ is the set of all arguments 
     of a SCC of $F$ 
     that includes $a$, and let $\delta: \mathcal{F} \times \mathcal{A} \rightarrow
   \mathbb{N}$ be such that, 
    for any $F \equiv (A, R, f_Q)\ (\in \mathcal{F})$  
    and any $a \in A$, 
   $\delta(F, a)$ is:  
   \begin{itemize} 
      \item  $0$ if 
     there is no $a_x \in \Delta(F, a)$   
     and $a_y \in (A \backslash \Delta(F, a))$ 
     such that $(a_y, a_x) \in R$. 
      \item $1+ \max_{a_z \in A'}\delta(F, a_z)$
       with: $A' = \{a_w \in (A \backslash 
    \Delta(F, a)) \mid \exists a_u \in \Delta(F, a).(a_w,
     a_u) \in R\}$, otherwise.
   \end{itemize} 
   
    \noindent We say that 
    $\Delta(F, a)$ has SCC-depth $n$ iff 
    $\delta(F, a) = n$. 
\end{definition} 

%
\begin{example}[SCC and SCC depth]  
If we have $F$: 
	\begin{tikzpicture}[baseline= (a).base] 
			\node[scale=.94] (a) at (0,0){
\begin{tikzcd} 
   \underset{Q_1}{\ensuremath{a_1}} 
   \rar & \underset{Q_2}{a_2} \lar \rar & 
   \underset{Q_3}{a_3} \rar & 
   \underset{Q_4}{a_4} 
\end{tikzcd} 
};
	\end{tikzpicture} 
we have 
3 SCCs of $F$:  
\begin{tikzpicture}[baseline= (a).base] 
			\node[scale=.94] (a) at (0,0){
\begin{tikzcd} 
   \underset{}{\ensuremath{a_1}} 
   \rar & \underset{}{a_2} \lar\ ; 
\end{tikzcd} 
};
	\end{tikzpicture}  
	\begin{tikzpicture}[baseline= (a).base] 
			\node[scale=.94] (a) at (0,0){
\begin{tikzcd} 
   \underset{}{a_3}\ ; 
\end{tikzcd} 
};
	\end{tikzpicture} 
and 
\begin{tikzpicture}[baseline= (a).base] 
			\node[scale=.94] (a) at (0,0){
\begin{tikzcd} 
   \underset{}{a_4} . 
\end{tikzcd} 
};
	\end{tikzpicture} 
	Therefore,  $\delta(F, a_1) = \delta(F, a_2) = 0$, 
	$\delta(F, a_3) = 1$ and $\delta(F, a_4) = 2$. 
\hfill$\clubsuit$ 
\end{example} 

\noindent We also assume the following definitions for labelling manipulation. 
\begin{definition}[Labelling restriction] 
   We define $\downarrow: \Lambda \times 2^{\mathcal{A}} \rightarrow 
	  \Lambda$ to be such that, for any $A_1 \in 2^{\mathcal{A}}$ 
	  and for any $\lambda \in \Lambda$, 
	  $\downarrow\!(\lambda, A_1)$, 
	  alternatively $\lambda_{\downarrow A_1}$, satisfies all the following.    
	  \begin{enumerate}  
  		\item   If $\lambda \in \Lambda^{A_x}$ for some $A_x \subseteq \mathcal{A}$, then $\lambda_{\downarrow A_1} \in \Lambda^{A_x \cap A_1}$. 
		\item   For every $a_x \in A_x \cap A_1$, 
			  it holds that $\lambda(a_x) = 
			  \lambda_{\downarrow A_1}(a_x)$.  
	  \end{enumerate} 
\end{definition} 
\begin{definition}[Labelling composition] 
    We define $\oplus: \Lambda \times \Lambda \rightarrow 
	  \Lambda$ to be such that, for any $\lambda_1 \in \Lambda^{A_1}$ 
	  and any $\lambda_2 \in \Lambda^{A_2}$, 
          $\oplus(\lambda_1, \lambda_2)$ is such that 
	  all the following conditions hold. 
	  \begin{enumerate} 
		  \item $\oplus(\lambda_1, \lambda_2) \in \Lambda^{(A_1 \cup 
			  A_2) \backslash (A_1 \cap A_2)}$. 
		  \item For every $a \in (A_1 \cup A_2) \backslash 
			  (A_1 \cap A_2)$,  
			  $\oplus(\lambda_1, \lambda_2)(a)$ is: $\lambda_1(a)$ 
			  if $\lambda_1$ is defined for $a$; 
			  $\lambda_2(a)$, otherwise. 
	  \end{enumerate}
\end{definition}
\begin{example}[Labelling manipulation] 
	Assume $(\{a_1, a_2\}, R, f_Q) \in \mathcal{F}$. 
	Assume 3 labellings $\lambda_1 \equiv [a_1:l_x]^{\lambda}, 
	\lambda_2 \equiv [a_2:l_y]^{\lambda}$, 
	and $\lambda_3 \equiv [a_1:l_1, a_2:l_2]^{\lambda}$.  
	Then for any $i \in \{1,2\}$, ${\lambda_3}_{\downarrow \{a_i\}}$ is 
	a member of $\Lambda^{\{a_i\}}$ and  
	${\lambda_3}_{\downarrow \{a_i\}}(a_i) = l_i$.  
	For the composition, $\oplus(\lambda_1, \lambda_2)(a_1) = l_x$ 
	and $\oplus(\lambda_1, \lambda_2)(a_2) = l_y$.  
	\hfill$\clubsuit$ 
\end{example} 
As the final preparation, we introduce the following notion. 
\begin{definition}[Designation conservative sub-argumentation]\label{def_labelling_conservative}  
	We define $\Downarrow: \mathcal{F} \times 2^{\mathcal{A}} \times 
	\Lambda \rightarrow 
	\mathcal{F}$ to be such that, 
        for any $F \equiv (A, R, f_Q)\ (\in \mathcal{F})$, 
	any $A_1 \subseteq A$ and any $\lambda \in \Lambda$, 
	\mbox{$\Downarrow((A, R, f_Q), A_1, \lambda)$}, 
	alternatively \mbox{$(A, R, f_Q)_{\Downarrow A_1, \lambda}$},  
	is some $(A', R', f_Q')$ satisfying all the following. 
	 \begin{enumerate} 
		 \item $A' = A \cap A_1\ (= A_1)$.  
		\item $R' = R \cap (A' \times A')$.   
		\item $f_Q'$ is a function $(A \cap A_1) \rightarrow \mathcal{Q}$  
			such that, for every $a_x \in (A \cap A_1)$ 
			 and every $\lambda_x \in \Lambda^{A \cap A_1}$, 
			 $l \in \mathcal{L}$ 
			 is designated for $a_x$ under $\lambda_x$ 
			 in $(A',R',f_Q')$ 
			 iff 
			 $l$ is designated for $a_x$ under 
			 $\oplus(\lambda_{\downarrow 
			 A \backslash A_1}, \lambda_x)$ in $F$. 
	 \end{enumerate}  
	 We say that $(A, R, f_Q)_{\Downarrow A_1, \lambda}$ 
	 is a designation conservative sub-argumentation of 
	 $(A, R, f_Q)$ with respect to $A_1$ and $\lambda$.
\end{definition} 
The point of Definition \ref{def_labelling_conservative}  
is, if we can find a designation conservative sub-argumentation 
for every SCC of $F \in \mathcal{F}$, then we can show derivation 
of any maximally proper labelling of $F$ as a sequential composition 
of maximally proper labellings for the designation conservative 
SCCs. However, note that if we for instance have: 
\begin{center} 
	\begin{tikzcd}[/tikz/execute at end picture={ 
    \node (largeA) [draw, rectangle, 
		minimum width=1cm,fit=(a2),opacity=0,label={[label distance=0.4cm]0:$\cdots$}]{};}] 
	& \underset{((n_1, n_2), (m_1, m_2))}{\ensuremath{a}}\\ 
		\overset{}{\underset{Q_1}{a_1}} \arrow[ur] &  
		|[alias=a2]| \overset{}{\underset{Q_2}{a_2}} \arrow[u] & 
		\overset{}{\underset{Q_n}{a_n}} \arrow[ul] 
\end{tikzcd}   
\end{center} 
with $n$ arguments attacking $a$, and if $\lambda \in \Lambda$ is 
such that $|\pre^F_{\lambda,\outL}(a)| = i$ and that 
$|\pre^F_{\lambda,\inL}(a)| = j$, 
then  
\begin{tikzpicture}[baseline= (a).base] 
			\node[scale=.94] (a) at (0,0){
\begin{tikzcd}
	\underset{((\max(0, n_1 - i), \max(0, n_2 - i)), (\max(0, m_1 - j), 
	\max(0, m_2 - j)))}{\ensuremath{a}} 
\end{tikzcd}   
};
\end{tikzpicture} 
ensures that $\lambda$ designates the same label(s) for $a$ as before 
the transformation. Thus, we have: 
\begin{proposition}[Existence of a designation conservative 
	sub-argumentation]  
 For any $F \equiv (A, R, f_Q)\ (\in \mathcal{F})$, 
	any $A_1 \subseteq A$ and any $\lambda \in \Lambda$, 
        there exists some designation conservative sub-argumentation
	of $F$ with respect to $A_1$ and $\lambda$. 
\end{proposition}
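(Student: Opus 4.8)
The plan is to prove the proposition by explicit construction: given $F \equiv (A,R,f_Q)$, a set $A_1 \subseteq A$ and a labelling $\lambda$, I will exhibit a concrete $(A', R', f_Q')$ and verify it satisfies the three clauses of Definition~\ref{def_labelling_conservative}. Clauses (1) and (2) are forced verbatim by the definition: take $A' = A \cap A_1 = A_1$ and $R' = R \cap (A' \times A')$; nothing needs proving there beyond noting these are well-defined (a finite subset of $\mathcal{A}$ and a relation in $\mathcal{R}^{A'}$). So the entire content of the proposition is the construction of $f_Q'$ meeting clause (3), and establishing that it is a legitimate map into $\mathcal{Q}$ respecting the constraint $((f_Q'(a))^i)^1 \leq ((f_Q'(a))^i)^2$ required by Definition~\ref{def_abstract_argumentation}.

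First I would fix any $a \in A_1$ and set $P_{\text{out}} := |\pre^F_{\lambda,\outL}(a) \setminus A_1|$ and $P_{\text{in}} := |\pre^F_{\lambda,\inL}(a) \setminus A_1|$, i.e.\ the number of already-decided out/in attackers of $a$ that lie \emph{outside} $A_1$ and are therefore frozen by $\lambda$. Writing $f_Q(a) = ((n_1,n_2),(m_1,m_2))$, I define
\[
 f_Q'(a) := \bigl((\max(0, n_1 - P_{\text{out}}),\, \max(0, n_2 - P_{\text{out}})),\ (\max(0, m_1 - P_{\text{in}}),\, \max(0, m_2 - P_{\text{in}}))\bigr),
\]
which is exactly the transformation displayed just before the proposition statement. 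This is the natural move: the attackers of $a$ outside $A_1$ have their labels pinned by $\lambda$, so they contribute a fixed count toward each may/must threshold, and we simply subtract that contribution (flooring at $0$). For any $\lambda_x \in \Lambda^{A_1}$, observe that in $(A',R',f_Q')$ we have $\pre^{(A',R',f_Q')}_{\lambda_x,\outL}(a) = \pre^F_{\lambda_x,\outL}(a) \cap A_1$ (and likewise for $\inL$), while in $F$ the composite $\oplus(\lambda_{\downarrow A \setminus A_1}, \lambda_x)$ has $|\pre^F_{\oplus(\ldots),\outL}(a)| = |\pre^F_{\lambda_x,\outL}(a) \cap A_1| + P_{\text{out}}$, since the two labellings being composed are defined on disjoint argument sets and together cover all of $\pre^F(a)$ (using that $A' = A_1$, so $\pre^F(a) \setminus A_1$ is exactly the part handled by $\lambda_{\downarrow A \setminus A_1}$). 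Hence the inequality $((f_Q'(a))^1)^1 \leq |\pre^{(A',R',f_Q')}_{\lambda_x,\outL}(a)|$ holds iff $\max(0,n_1 - P_{\text{out}}) \leq |\pre^F_{\lambda_x,\outL}(a)\cap A_1|$ iff $n_1 \leq |\pre^F_{\lambda_x,\outL}(a)\cap A_1| + P_{\text{out}} = |\pre^F_{\oplus(\ldots),\outL}(a)|$ — the last ``iff'' being where the floor is used, valid because the count on the right is a nonnegative integer. The same equivalence runs through for the must-, may$_s$-, and not- conditions componentwise, and since label designation (Definition~\ref{def_label_designation}) is defined purely in terms of which of these four conditions hold, $l$ is designated for $a$ under $\lambda_x$ in $(A',R',f_Q')$ iff it is designated for $a$ under $\oplus(\lambda_{\downarrow A \setminus A_1}, \lambda_x)$ in $F$. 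I should also check the domain-of-definedness clause (condition~1 of Definition~\ref{def_label_designation}): $\lambda_x$ is defined on all of $\pre^{(A',R',f_Q')}(a) \subseteq A_1$, and $\oplus(\lambda_{\downarrow A\setminus A_1},\lambda_x)$ is defined on all of $\pre^F(a)$, so the two sides agree on this precondition too.

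Finally I would verify that $f_Q'(a) \in \mathcal{Q}$ with the monotonicity constraint: since $\max(0,\cdot)$ is order-preserving and $n_1 \leq n_2$, $m_1 \leq m_2$ by hypothesis on $f_Q$, we get $\max(0,n_1-P_{\text{out}}) \leq \max(0,n_2-P_{\text{out}})$ and similarly for the rejection scale, so $(A',R',f_Q') \in \mathcal{F}$ as required. The main obstacle — really the only non-bookkeeping point — is the careful handling of the $\max(0,\cdot)$ truncation: one must confirm that flooring the shifted threshold at zero never changes which conditions are satisfied, which works precisely because attacker counts are always nonnegative integers, so a threshold that has gone negative is satisfied by every realizable count, exactly as $\max(0,\cdot)$ makes a threshold of $0$ behave. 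Everything else is routine unwinding of the definitions of $\oplus$, $\downarrow$, $\pre^F_{\lambda,\cdot}$, and Definitions~\ref{def_may_and_must} and \ref{def_label_designation}. $\hfill\Box$
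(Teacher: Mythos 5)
Your construction is exactly the paper's own proof: the paper likewise defines $f_Q'$ by subtracting the number of $\outL$-labelled (resp.\ $\inL$-labelled) attackers lying outside $A_1$ from the acceptance (resp.\ rejection) thresholds and flooring at zero, so your $P_{\text{out}}$ and $P_{\text{in}}$ are its $|A_{\outL,y}|$ and $|A_{\inL,y}|$. You in fact supply more detail than the paper does (the iff-chain justifying the $\max(0,\cdot)$ truncation and the check that $f_Q'$ still satisfies the may-$\leq$-must constraint), and the proof is correct.
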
 
\begin{proof}  
     For every $a \in A$, let $((n_1^a, n_2^a), (m_1^a, m_2^a))$  
	 denote $f_Q(a)$. 
	Let $A_x$ denote 
	$\{a_x \in (A \cap A_1) \mid \exists a_y \in (A \backslash A_1).(a_y, a_x) 
	\in R\}$. Let $A_y$ 
	denote $\pre^F(a) \cap (A \backslash A_1)$. 
	Let $A_{\inL, y}$ denote $\{a_y \in A_y \mid \lambda(a_y) = \inL\}$, 
	and let $A_{\outL, y}$ denote $\{a_y \in A_y \mid \lambda(a_y) = \outL\}$. 
		Let $f'_Q$ be such that, 
	for any $a_x \in A_x$, 
	$f'_Q(a_x) = ((\max(0, n_1^a - |A_{\outL, y}|), 
	\max(0, n_2^a - |A_{\outL, y}|)), 
	(\max(0, m_1^a - |A_{\inL, y}|), \max(0, m_2^a - |A_{\inL, y}|)))$.  
	Then $(A \cap A_1, R \cap ((A \cap A_1) \times 
	(A \cap A_1)), f'_Q)$ is a designation conservative 
	sub-argumentation of $F$ with respect to $A_1$ and $\lambda$.  
	\hfill$\Box$
\end{proof} 

\noindent We obtain our result that we can obtain  
any maximally proper labelling restricted to 
SCCs of any SCC-depth incrementally: 
\begin{proposition}[Bottom-up identification]    
	Let $\bundle: \mathcal{F} \times \mathbb{N}$ 
	be such that, for any $(A, R, f_Q) \in \mathcal{F}$, 
	$\bundle(F, n) = \{a \in A \mid  \delta(F, a) = n\}$.  
	Let $\Theta$ be the class of all functions 
	$\vartheta: \mathcal{F} \times \mathbb{N} \rightarrow  
	2^{\Lambda}$, each of which is such that, for any 
	$F \equiv (A, R, f_Q) \in \mathcal{F}$ and 
	any $n \in \mathbb{N}$, 
	$\vartheta(F, n)$ satisfies all the following. 
	\begin{enumerate} 
	        \item $\vartheta(F, 0) = \Lambda^{maxi}_{F_{\Downarrow 
			\bundle(F, 0), \lambda}}$ for any $\lambda \in \Lambda$. 
		\item For any $n$ with $0 < n$, there exist 
			some $\lambda_0 \in \vartheta(F, 0)$, 
			$\ldots$, $\lambda_{n-1} \in \vartheta(F, n-1)$ 
			such that, for any 
			$1 \leq i \leq n$, 
			$\vartheta(F, i) = \Lambda^{maxi}_{F_{\Downarrow
			\bundle(F, i), \lambda_{i-1}}}$. 
	\end{enumerate}
	Then, let $n$ be at most as large as the maximum SCC depth 
	of all SCCs of $F$, 
	it holds that 
		   $\{\lambda_{\downarrow \bundle(F, n)} \in 
		   \Lambda^{\bundle(F, n)} \mid 
			\lambda \in \Lambda^{maxi}_F\}  
			= \{\lambda \in \Lambda \mid \exists \vartheta \in 
			\Theta.\lambda \in \vartheta(F, n)\}$.  
\end{proposition}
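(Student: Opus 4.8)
The plan is to prove the set equality by double inclusion, proceeding by induction on $n$ and exploiting the SCC-stratified structure guaranteed by Definition~\ref{def_scc_and} together with the designation conservative sub-argumentation machinery of Definition~\ref{def_labelling_conservative}. The key structural fact to establish first is that whether $a$'s label is proper under $\lambda$ depends only on $\lambda(a)$ and on $\lambda$ restricted to $\pre^F(a)$, and moreover that $\pre^F(a) \subseteq \Delta(F,a) \cup \{a_w \in A \mid \delta(F,a_w) < \delta(F,a)\}$: attackers of $a$ lie either in $a$'s own SCC or in an SCC of strictly smaller SCC-depth. Consequently, once we fix a labelling on all bundles $\bundle(F,0),\ldots,\bundle(F,n-1)$, the properness conditions for arguments in $\bundle(F,n)$ are entirely determined, and by the existence-of-designation-conservative-sub-argumentation proposition they coincide exactly with the properness conditions in $F_{\Downarrow \bundle(F,n),\lambda}$ where $\lambda$ supplies the already-fixed outer labels. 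This is the bridge that lets a maximally proper labelling of the sub-argumentation be glued onto a maximally proper labelling of the lower strata.

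For the forward inclusion (a restriction of a maximally proper labelling of $F$ lies in some $\vartheta(F,n)$), I would take $\lambda \in \Lambda^{maxi}_F$ and show that, for each $i$, $\lambda_{\downarrow \bundle(F,i)}$ is maximally proper for $F_{\Downarrow \bundle(F,i), \lambda}$, using as the witnessing chain $\lambda_j := \lambda_{\downarrow \bundle(F,j)}$ for $j < i$. The pre-maximal-properness part is immediate from the definition of $\Lambda^{maxi}_F$ and the restriction preserving $\undecL$ and properness (via the structural fact above). Maximality requires the following argument: if some pre-maximally proper $\lambda_x$ of $F_{\Downarrow \bundle(F,i),\lambda}$ made an argument $a \in \bundle(F,i)$ proper where $\lambda$ did not, then we could compose $\lambda_x$ with $\lambda$ restricted to the complement to obtain a pre-maximally proper labelling of $F$ making strictly more arguments proper — contradicting maximality of $\lambda$ in $F$. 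Here one must be careful that adding properness lower down cannot destroy properness higher up, which again follows from the one-directional dependency.

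For the reverse inclusion, given $\vartheta \in \Theta$ and $\lambda \in \vartheta(F,n)$, I would reconstruct a global labelling $\hat\lambda := \lambda_0 \oplus \lambda_1 \oplus \cdots \oplus \lambda_{n-1} \oplus \lambda \oplus \mu$, where $\lambda_0,\ldots,\lambda_{n-1}$ are the witnesses from clause~(2), $\mu$ is chosen to be $\undecL$ on all remaining bundles (SCC-depth $> n$), and show $\hat\lambda \in \Lambda^{maxi}_F$ with $\hat\lambda_{\downarrow \bundle(F,n)} = \lambda$. Pre-maximal properness of $\hat\lambda$ holds bundle by bundle: on bundles $0,\ldots,n$ each piece is (pre-)maximally proper for the corresponding sub-argumentation, which by the bridge lemma transfers to properness in $F$; on higher bundles the value is $\undecL$, which is permitted. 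For maximality, suppose $\hat\lambda_x$ is pre-maximally proper for $F$ and makes some argument $a$ proper that $\hat\lambda$ does not. Let $d = \delta(F,a)$. If $d \le n$, restrict $\hat\lambda_x$ down to $\bundle(F,d)$ after composing with the outer labels it induces; one shows this restriction is pre-maximally proper for $F_{\Downarrow \bundle(F,d), \cdot}$ and makes $a$ proper, contradicting the maximality built into $\vartheta(F,d)$. If $d > n$, the statement as phrased only concerns restriction to $\bundle(F,n)$, so such $a$ need not be handled — but one should double-check that the quantifier "$n$ at most the maximum SCC depth" together with the equality being between restrictions to $\bundle(F,n)$ makes higher bundles irrelevant to the claimed identity.

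The main obstacle I anticipate is the maximality direction, specifically verifying that "locally maximal on each stratum, composed" really yields "globally maximal", i.e. that there is no global pre-maximally proper labelling that is incomparable to every stratified composition yet whose set of proper arguments is not dominated. The subtlety is that maximally proper labellings need not form a chain (as the paper explicitly warns after Definition~\ref{def_maximally_proper}), so one cannot argue by simply taking suprema; instead the argument must be that making an argument at depth $d$ proper is a decision that propagates forward monotonically, so the "proper-sets" achievable at depth $d$ are exactly those achievable by first maximizing at depths $< d$ and then maximizing within the designation conservative sub-argumentation at depth $d$ — and this requires the clean dependency lemma plus a careful check that the $\Downarrow$ construction faithfully records, for every possible inner labelling, exactly the designations $F$ would assign. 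I would isolate that dependency lemma as a separate claim and prove it directly from Definitions~\ref{def_may_and_must}, \ref{def_label_designation}, and~\ref{def_designated_label} before touching the induction.
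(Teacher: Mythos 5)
Your proposal follows essentially the same route as the paper's proof: induction on the SCC-depth $n$, with the depth-$0$ bundle as the base case and the designation conservative sub-argumentation of Definition~\ref{def_labelling_conservative} as the bridge that lets restrictions of global maximally proper labellings be identified with maximally proper labellings of the stratified sub-argumentations (and conversely, composed back up). The paper dispatches the inductive step with ``the rest is straightforward,'' whereas you correctly isolate and spell out the two genuinely nontrivial points it elides --- that external attackers of a depth-$n$ argument may sit at any depth below $n$, and that local maximality on each stratum must be shown to compose into global maximality --- so your plan is, if anything, the more careful rendering of the same argument.
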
 
\begin{proof}  
	By induction on the value of $n$.  
	If $n = 0$, then no $a_x \in (A \backslash \bundle(F, 0))$ 
	attacks a member of $\bundle(F, 0)$. Thus,  
	$F_{\Downarrow \bundle(F, 0), \lambda} 
	\equiv (\bundle(F, 0), R \cap (\bundle(F,0) \times
	\bundle(F,0)), f_Q')$ with 
	$f_Q'(a_x) = f_Q(a_x)$ for every $a_x \in \bundle(F, 0)$  
	is designation conservative. The choice of $\lambda$ 
	makes no difference. 
        It is obvious that 
	$\{\lambda_{\downarrow \bundle(F, 0)} \in \Lambda^{\bundle(F, 0)} 
	\mid \lambda \in \Lambda^{maxi}_F\} = 
	\{\lambda \in \Lambda \mid \exists \vartheta \in \Theta.\lambda 
	\in \vartheta(F, 0)\} = 
	\{\lambda \in \Lambda \mid \exists \vartheta \in \Theta.\lambda
	\in \Lambda^{maxi}_{(\bundle(F, 0), R \cap 
	(\bundle(F, 0) \times \bundle(F,0)), f_Q')}\}$.  

	For any $n$ with $0 < n \leq (\text{maximum SCC-depth of } F)$, assume that the result  
	holds up to $n-1$. Then for any $\lambda \in \Lambda^{maxi}_F$,  
	there is some $\vartheta \in \Theta$ 
	and some 
	$\lambda_1 \in \vartheta(F, n-1)$ 
	such that $\lambda_1(a_x) = \lambda(a_x)$  
	for every $a_x \in \bundle(F, n-1)$.  
	By definition, if $a_y \in (A \backslash \bundle(F, n))$ 
	attacks a member of $\bundle(F, n)$, then 
	it must be that $a_y \in \bundle(F,  n-1)$. Thus, 
	we can derive a designation conservative sub-argumentation of $F$ 
	with respect to $\bundle(F, n)$ and $\lambda_1$. 
	The rest is straightforward. \hfill$\Box$   
\end{proof}

\subsection{Maximally proper complete/preferred/stable/grounded semantics} 
Based on the maximally proper semantics with a guaranteed 
existence, we may define its variants in the manner 
similar to classic variations (Cf. Section 2) as follows. 
\begin{definition}[Maxi.x labellings]\label{def_labelling} 
  For any $F \equiv (A, R, f_Q)\ (\in \mathcal{F})$ 
   and 
  any $\lambda \in \Lambda$,
   we say that $\lambda$ is:
   \begin{itemize}
     \item  a maxi.complete labelling 
   of $F$ 
		   iff $\lambda \in \Lambda^{maxi}_F$. 
      We denote the set of all maxi.complete labellings 
   of $F$ 
		    by $\Lambda^{maxi.com}_F\ (= \Lambda^{maxi}_F)$. 
    
\item a maxi.preferred labelling of $F$ 
	 iff $\lambda \in \Lambda^{maxi.com}_F$ $\andC$ 
	 for every
		   $\lambda' \in \Lambda^{maxi.com}_F$, 
   it does not hold that $\lambda \prec \lambda'$.  
    
    \item a maxi.stable labelling of $F$ 
	    iff $\lambda \in \Lambda^{maxi.com}_F$ $\andC$ 
      for every
    $a \in A$, $\lambda(a) \not= \undecL$ holds. 
    
    \item a maxi.grounded labelling of $F$ iff 
      $\lambda$ is the meet of $\Lambda^{maxi.com}_{F}$
     in $(\Lambda^A, \preceq)$. 
    \end{itemize}  
\end{definition}

\begin{definition}[Maxi.x semantics]\label{def_acceptability_semantics}
For any $F \equiv (A, R, f_Q)\ (\in \mathcal{F})$, we say that 
   $\Lambda' \subseteq \Lambda$ is
   the maxi.complete (, maxi.preferred, maxi.stable, maxi.grounded)    
   semantics of $F$ iff every maxi.complete (, maxi.preferred, 
   maxi.stable, maxi.grounded) labelling of $F$, but no other, 
   is in $\Lambda'$. 
\end{definition}

\begin{example}[Maxi.x semantics]\label{ex_semantics} 
  Consider the example in Example 
  \ref{ex_labelling}, with the 3 
  maximally-designating labellings 
  $\lambda_1$ (the first), $\lambda_2$ 
  (the second), and $\lambda_3$ (the last).  
  \begin{center} 
\begin{tikzcd}  
   \overset{\inL}{\underset{((0, 0), (1,1))}{\ensuremath{a_1}}} 
  \rar & 
\overset{\outL}{\underset{((0,1), (1,2))}{\ensuremath{a_2}}} \rar & 
   \overset{\inL}{\underset{((1,1),(1,1))}{\ensuremath{a_3}}} & 
   \overset{\outL}{\underset{((1,1),(1,1) )}{\ensuremath{a_4}}} \lar &   
   \overset{\inL}{\underset{((0,0), (1, 1))}{\ensuremath{a_5}}} \lar
\end{tikzcd}   

\begin{tikzcd}  
   \overset{\inL}{\underset{((0, 0), (1,1))}{\ensuremath{a_1}}} 
  \rar & 
\overset{\inL}{\underset{((0,1), (1,2))}{\ensuremath{a_2}}} \rar & 
   \overset{\undecL}{\underset{((1,1),(1,1))}{\ensuremath{a_3}}} & 
   \overset{\outL}{\underset{((1,1),(1,1) )}{\ensuremath{a_4}}} \lar &   
   \overset{\inL}{\underset{((0,0), (1, 1))}{\ensuremath{a_5}}} \lar
\end{tikzcd}  

\begin{tikzcd}  
   \overset{\inL}{\underset{((0, 0), (1,1))}{\ensuremath{a_1}}} 
  \rar & 
\overset{\undecL}{\underset{((0,1), (1,2))}{\ensuremath{a_2}}} \rar & 
   \overset{\inL}{\underset{((1,1),(1,1))}{\ensuremath{a_3}}} & 
   \overset{\outL}{\underset{((1,1),(1,1) )}{\ensuremath{a_4}}} \lar &   
   \overset{\inL}{\underset{((0,0), (1, 1))}{\ensuremath{a_5}}} \lar
\end{tikzcd}  
\end{center}  
 
$\lambda_1, \lambda_2$ and $\lambda_3$ 
are the only 3 possible maxi.complete labellings
of $F$, i.e. $\{\lambda_1, \lambda_2, \lambda_3\}$ 
is its maxi.complete semantics. 

The relation among $\lambda_1, \lambda_2$ and $\lambda_3$ 
is such that $\lambda_3 \prec \lambda_1$, 
$\lambda_3 \not\preceq \lambda_2$, $\lambda_2 \not\preceq
\lambda_3$, \linebreak $\lambda_1 \not\preceq \lambda_2$, 
and $\lambda_2 \not\preceq \lambda_1$ all hold. Hence, 
$\{\lambda_1, \lambda_2\}$ is the maxi.preferred semantics 
of $F$, a subset of the maxi.complete semantics. Moreover, 
$\{\lambda_1\}$ is the maxi.stable semantics of $F$. 
On the other hand,
none of $\lambda_1, \lambda_2, \lambda_3$ 
are the least in $(\{\lambda_1, \lambda_2, \lambda_3\}, \preceq)$, and thus 
they cannot be a member of the maxi.grounded semantics; 
instead, it is $\{\lambda_4\}$ with: 
$\lambda_4(a_1) = \lambda_4(a_5) = \inL$; 
$\lambda_4(a_2) = \lambda_4(a_3) = \undecL$; 
and $\lambda_4(a_4) = \outL$. Clearly, $\lambda_4$ 
is the meet of $\{\lambda_1, \lambda_2, \lambda_3\}$ 
in $(\Lambda^{\{a_1, \ldots, a_5\}}, \preceq)$.  
\hfill$\clubsuit$ 
\end{example}  

\noindent The relation among the semantics below follows 
from Definition \ref{def_labelling} immediately, 
and is almost as expected. 
\begin{theorem}[Subsumption]\label{thm_subsumption}{\ }\\
  All the following hold for any $F 
   \in \mathcal{F}$.
   \begin{enumerate}  
     \item The maxi.complete, the maxi.preferred, and the maxi.grounded 
semantics of $F$ exist. 
     \item The maxi.preferred semantics of $F$ 
      is a subset of the maxi.complete semantics of $F$. 
     \item If the maxi.stable semantics of $F$ exists, 
	     then it consists of all (but nothing else) members $\lambda$ of 
        the maxi.preferred semantics of $F$ such that, 
        for every $a \in A$, $\lambda(a) \not= \undecL$
        holds.  
\item It is not necessary that 
  the maxi.grounded semantics be a subset of 
  the maxi.complete semantics. 
    \end{enumerate} 
   \end{theorem}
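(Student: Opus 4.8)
The four items are of three different flavors, so I would treat them separately. Items (1)--(3) should follow the classical template once we have the right base facts; item (4) is a counterexample, and that is where I expect the only genuine work to lie.

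For item (1): existence of the maxi.complete semantics is immediate, since it is by definition the maximally proper semantics, whose nonemptiness for $A \neq \emptyset$ is Theorem~\ref{thm_conservation} (and for $A = \emptyset$ the empty labelling works). Existence of the maxi.preferred semantics then follows because $(\Lambda^A, \preceq)$ restricted to the finite set $\Lambda^{maxi.com}_F$ has maximal elements whenever that set is nonempty. For the maxi.grounded semantics I would argue that $(\Lambda^A, \preceq)$ is a meet-semilattice (stated in Section~2) so the meet of the finite nonempty set $\Lambda^{maxi.com}_F$ exists and is unique; uniqueness of the meet gives that the maxi.grounded semantics is a well-defined singleton. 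For item (2): any maxi.preferred labelling is by Definition~\ref{def_labelling} required to be in $\Lambda^{maxi.com}_F$, so the inclusion is definitional. For item (3): again by Definition~\ref{def_labelling}, a maxi.stable labelling is a maxi.complete labelling with no $\undecL$; I would show such a labelling is automatically $\preceq$-maximal among maxi.complete labellings (if $\lambda \prec \lambda'$ with $\lambda$ having no $\undecL$, then $\lambda'$ would have to assign $\inL$ or $\outL$ strictly more often, impossible since $\lambda$ already commits every argument), hence it lies in the maxi.preferred semantics; conversely a maxi.preferred labelling with no $\undecL$ is trivially maxi.stable. So the maxi.stable semantics, when nonempty, is exactly the $\undecL$-free part of the maxi.preferred semantics.

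For item (4): I would exhibit a small $F \in \mathcal{F}$ whose maxi.grounded labelling is not proper at some argument, hence not a maxi.complete labelling. The natural candidate is built around the self-attacker of Example~\ref{ex_undecidability_of}, whose unique maximally proper labelling is $[a_1 : \undecL]^\lambda$ but which is not exact. More concretely I would take a graph where $\Lambda^{maxi.com}_F$ contains at least two labellings whose meet assigns $\undecL$ somewhere the two original labellings disagreed, and check from Fig.~\ref{fig_table} that $\undecL$ is not designated there under the meet labelling --- e.g. an argument with nuance tuple forcing $\inL$ under one branch and $\outL$ under another so that the meet labels it $\undecL$ while its may/must conditions under the meet leave it in a ``must-not'' or ``not-not''-incompatible cell. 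Example~\ref{ex_semantics} already supplies exactly this: there $\lambda_4$, the meet of the three maxi.complete labellings, assigns $\undecL$ to $a_3$, but under $\lambda_4$ the argument $a_3$ (with tuple $((1,1),(1,1))$) has $|\pre^F_{\lambda_4,\outL}(a_3)| = 0$ and $|\pre^F_{\lambda_4,\inL}(a_3)| = 0$, so it satisfies not-a and not-r, which does designate $\undecL$ --- so I would instead perturb it slightly (or use $a_2$, whose tuple $((0,1),(1,2))$ under $\lambda_4$ gives not-a and not-r as well, again designating $\undecL$). The cleanest route is therefore to cook up a fresh tiny example rather than lean on Example~\ref{ex_semantics}: take two arguments $b \leftrightarrows c$ with tuples chosen so that the two non-grounded maxi.complete labellings are $[b:\inL,c:\outL]$ and $[b:\outL,c:\inL]$ but their meet $[b:\undecL,c:\undecL]$ is \emph{not} a pre-maximally proper labelling because, say, $b$'s tuple makes $\undecL$ non-designated when $c$ is $\undecL$.

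\textbf{Main obstacle.} The only non-routine part is item (4): I must actually verify, via Fig.~\ref{fig_table} / Definition~\ref{def_label_designation}, that at the meet labelling the offending argument is designated \emph{something other than} $\undecL$ (so its label is not proper and the meet is not maxi.complete), while still confirming that each of the witnessing maxi.complete labellings genuinely is maximally proper. Getting a two- or three-argument example where these designation computations all land correctly --- in particular where the meet genuinely escapes the $\undecL$-designating cells of the table --- requires a careful choice of nuance tuples; everything else is bookkeeping against the definitions and the semilattice structure already recorded in Section~2.
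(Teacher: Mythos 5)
Your handling of items (1)--(3) is sound and coincides with what the paper leaves implicit as ``mostly straightforward'': existence of the maxi.complete semantics via Theorem~\ref{thm_conservation}, maximal elements of a finite nonempty poset for maxi.preferred, the meet in the meet-semilattice $(\Lambda^A,\preceq)$ for maxi.grounded, and the observation that an $\undecL$-free labelling is automatically $\preceq$-maximal, which settles item (3).

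The gap is in item (4), and it is a concrete computational error rather than a missing idea. The paper's proof simply points to Example~\ref{ex_semantics}, and that example does work; you reject it because you compute $|\pre^F_{\lambda_4,\outL}(a_3)|=0$. But $a_4$ attacks $a_3$ and $\lambda_4(a_4)=\outL$, so in fact $|\pre^F_{\lambda_4,\outL}(a_3)|=1$ while $|\pre^F_{\lambda_4,\inL}(a_3)|=0$. With nuance tuple $((1,1),(1,1))$ this places $a_3$ in the must-a/not-r cell of Fig.~\ref{fig_table}: only $\inL$ is designated, so $a_3$'s label $\undecL$ is \emph{not} proper under $\lambda_4$, $\lambda_4$ is pre-maximally proper but not maximally proper, and hence the maxi.grounded semantics $\{\lambda_4\}$ is disjoint from the maxi.complete semantics $\{\lambda_1,\lambda_2,\lambda_3\}$ --- exactly the counterexample the theorem needs. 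Having wrongly dismissed this, you fall back on ``cook up a fresh tiny example'' without producing one, so item (4) is left unproved in your write-up; worse, the sketched two-cycle whose two maxi.complete labellings are $[b:\inL,c:\outL]^{\lambda}$ and $[b:\outL,c:\inL]^{\lambda}$ cannot be made to work: for $\undecL$ to be non-designated for $b$ under the all-$\undecL$ meet (which has zero accepted and zero rejected attackers), $b$ must land in the must-a/not-r or not-a/must-r corner there, which forces $n_2^b=0$ or $m_2^b=0$ and hence makes $\outL$ (respectively $\inL$) never designatable for $b$, contradicting exactness of one of your two labellings. A working fresh example does exist --- the mutual attack with both tuples $((0,0),(1,1))$ as in Example~\ref{ex_counter_example}, whose maxi.complete labellings are $[b:\inL,c:\undecL]^{\lambda}$ and $[b:\undecL,c:\inL]^{\lambda}$ and whose meet $[b:\undecL,c:\undecL]^{\lambda}$ fails properness at both arguments --- but the verification still has to be carried out.
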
 
\begin{proof} 
	Mostly straightforward. See Example \ref{ex_semantics} 
	for the 4th claim. \hfill$\Box$ 
\end{proof}
\noindent There is an easy connection to 
Dung abstract argumentation 
labelling 
(see Section \ref{section_technical_preliminaries}). 
\begin{theorem}[Correspondences to acceptability 
semantics in Dung
	argumentation]\label{thm_correspondence_acceptability}{\ }\\ 
   For any $F \equiv (A, R, f_Q)\
   (\in \mathcal{F})$, if $(f_Q(a))^1 = (|\pre^F(a)|,|\pre^F(a)|)$ and 
    $(f_Q(a))^2 = (1, 1)$ for every $a \in A$, then:  
     $\Lambda_x \subseteq \Lambda$ is 
   the maxi.complete (, maxi.preferred, 
maxi.stable, and resp. maxi.grounded) semantics 
	of $F$ iff $\Lambda_x$ is complete (, preferred, 
	stable, and resp. grounded) semantics of $(A, R)$. 
	
	Moreover, the maxi.complete semantics of $F$ is the exact 
	semantics of $(A, R)$. 
\end{theorem}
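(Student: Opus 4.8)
The plan is to unfold the may-- and must-- satisfaction conditions forced by this particular $f_Q$, read off from Figure~\ref{fig_table} (equivalently Definition~\ref{def_label_designation}) that under them label designation becomes \emph{deterministic} and reproduces exactly the Dung complete-labelling constraint, and then let Theorem~\ref{thm_conservation}, together with the fact that preferred/stable/grounded are all built from the same set of complete labellings under the same order $\preceq$, do the rest.

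First I would fix $F\equiv(A,R,f_Q)$ as in the statement, take $a\in A$ and a $\lambda$ defined on $\pre^F(a)$, and inspect Definition~\ref{def_may_and_must}. Since $((f_Q(a))^1)^1=((f_Q(a))^1)^2=|\pre^F(a)|$ and $\pre^F_{\lambda,\outL}(a)\subseteq\pre^F(a)$, may-a and must-a are equivalent and hold exactly when $\pre^F_{\lambda,\outL}(a)=\pre^F(a)$ (every attacker of $a$ is $\outL$), while may$_s$-a can never be satisfied; symmetrically, since $((f_Q(a))^2)^1=((f_Q(a))^2)^2=1$, may-r and must-r are equivalent and hold exactly when $|\pre^F_{\lambda,\inL}(a)|\ge 1$ (some attacker of $a$ is $\inL$), and may$_s$-r can never be satisfied. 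Substituting this into Definition~\ref{def_label_designation} (equivalently, into the corners of Figure~\ref{fig_table}), I obtain that $\lambda$ designates $\inL$ for $a$ iff every attacker of $a$ is $\outL$; designates $\outL$ for $a$ iff some attacker of $a$ is $\inL$; and designates $\undecL$ for $a$ iff some attacker of $a$ is not $\outL$ while no attacker of $a$ is $\inL$. These three conditions are mutually exclusive and jointly exhaustive, so for this $F$ exactly one label is designated per argument, and it is precisely the label forced by clauses (2)--(3) of the Dung complete labelling recalled in Section~\ref{section_technical_preliminaries}. Hence, by Definitions~\ref{def_designated_label} and~\ref{def_ideal_labelling}, a $\lambda\in\Lambda^A$ is an exact labelling of $F$ iff it is a complete labelling of $(A,R)$; that is, $\Lambda^{exact}_F$ is (as a set of labellings) the complete semantics of $(A,R)$.

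Next, since a complete labelling of $(A,R)$ always exists (for instance the grounded one; see \cite{Caminada06}), $\Lambda^{exact}_F\neq\emptyset$, so Theorem~\ref{thm_conservation} yields $\Lambda^{maxi}_F=\Lambda^{exact}_F=\Lambda^{com}_{(A,R)}$; this settles the maxi.complete correspondence, and combined with $\Lambda^{maxi.com}_F=\Lambda^{maxi}_F=\Lambda^{exact}_F$ it also delivers the ``moreover'' clause. For the other three semantics I would observe that the maxi.preferred, maxi.stable and maxi.grounded labellings of $F$ are obtained from $\Lambda^{maxi.com}_F$ (Definition~\ref{def_labelling}) by literally the same clauses that produce the preferred, stable and grounded labellings of $(A,R)$ from $\Lambda^{com}_{(A,R)}$ --- namely $\prec$-maximality, absence of the label $\undecL$, and being the $\preceq$-meet of the set inside $(\Lambda^A,\preceq)$, respectively --- so the equality $\Lambda^{maxi.com}_F=\Lambda^{com}_{(A,R)}$ of underlying sets transfers to all of them. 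The only extra ingredient is the routine remark that a two-valued complete labelling is automatically $\prec$-maximal and hence preferred, so that ``maxi.complete and two-valued'' coincides with ``preferred and two-valued'', which is what the stable case needs. Unwinding Definition~\ref{def_acceptability_semantics} then gives the four claimed correspondences.

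I expect the only genuinely delicate point to be the case analysis in the second paragraph: one must be careful that the vacuous case of an argument with no attackers (where ``every attacker is $\outL$'' holds vacuously and forces $\inL$) and the a priori possibility of two designated labels are both correctly ruled in/out against Figure~\ref{fig_table}; but this is a finite check. The degenerate case $A=\emptyset$ is immediate, since then $\Lambda^A$ is a singleton that is trivially exact, maximally proper, and the complete semantics of $(\emptyset,\emptyset)$.
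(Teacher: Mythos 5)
Your proposal is correct and follows essentially the same route as the paper's own proof: unfold the may-/must- conditions for this particular $f_Q$, observe that label designation becomes deterministic and coincides with the Dung complete-labelling clauses, and conclude $\Lambda^{maxi.com}_F=\Lambda^{com}_{(A,R)}=\Lambda^{exact}_F$, from which the other semantics follow. You are in fact more explicit than the paper on two points it leaves implicit --- the appeal to Theorem~\ref{thm_conservation} via non-emptiness of the complete semantics, and the remark that two-valued complete labellings are automatically preferred (needed for the stable case) --- but these are elaborations, not a different argument.
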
    
\begin{proof}   
	For the first part of the claim, 
	we show for the correspondence for maxi.complete and complete 
	semantics, from which the others follow straightforwardly, 
	By definition, it holds that $\Lambda^{maxi.com}_{F}, 
	\Lambda^{com}_{(A, R)} \subseteq \Lambda^A$. 

	Now, assume $a \in A$. 
	Firstly, $\lambda$ designates $\inL$ for $a$ if, for 
	every $a_x \in \pre^F(a)$, $\lambda(a_x) = \outL$.  
	If there is some $a_x \in \pre^F(a)$ with $\lambda(a_x) \not= \outL$, 
	then it is not the case that $\lambda$ designates $\inL$ for $a$. 
        Secondly, $\lambda$ designates $\outL$ for $a$ if 
	there is some $a_x \in \pre^F(a)$ with $\lambda(a_x) = \inL$. 
	If there is no $a_x \in \pre^F(a)$ with $\lambda(a_x) = \inL$, 
	then it is not the case that $\lambda$ designates $\outL$ for $a$. 
         
	These establish $\Lambda^{maxi.com}_F = \Lambda^{com}_{(A, R)} 
	 = \Lambda^{exact}_F$, 
	as is also required in the second part of the claim.  \hfill$\Box$ 
\end{proof} 
This result should underscore an advantage of the level of 
abstractness 
of may-must argumentation, in that it is very 
easy to determine nuance tuples globally 
(and also locally) 
with just 4 specific natural numbers or expressions 
that are evaluated into natural numbers.  
For example, we can specify  
the requirement for: possible acceptance of 
an argument $a$ to be rejection of 80\% 
of attacking arguments;
acceptance of $a$ to be rejection of 90\%
of attacking arguments; 
possible rejection of $a$ to be 
acceptance of at least 1 but otherwise 40\% of attacking arguments; 
and rejection of $a$ to be 
acceptance of at least 1 but otherwise 50\% of 
attacking arguments, all rounded up 
to the nearest natural numbers. 
We have: 
$F \equiv (A, R, f_Q)\ (\in \mathcal{F})$  
with: $(f_Q(a))^1 = (\lceil 0.8*|\pre^F(a)| \rceil, \lceil 0.9*|\pre^F(a)| \rceil)$ and 
$(f_Q(a))^2 = (
  \max(1, \lceil 0.4*|\pre^F(a)| \rceil), 
\max(1, \lceil 0.5*|\pre^F(a)| \rceil))$ for every $a \in A$, 
to satisfy it.

\subsection{{\ADF} semantics}  
In this subsection we show what relations hold 
between exact and maximally proper semantics and 
{\ADF} semantics (with 3 values \cite{Brewka13}, for its closest 
connection to $\mathcal{F}$). We also define {\ADF} semantics 
for $\mathcal{F}$. 

Let us first state 
the formal definition of an {\ADF} tuple with its notations kept 
consistent with those used in this paper. 
Let $\mathcal{F}^{\ADF}$ be the class of all tuples 
$F^{\ADF} \equiv (A, R, C)$ with: 
$A \subseteq_{\text{fin}} \mathcal{A}$; 
$R \in \mathcal{R}^A$; and 
$C = \bigcup_{a \in A}\{C_a\}$ where each $C_a$ 
is a function: $\Lambda^{\pre^{F^{\ADF}}(a)} \rightarrow 
\mathcal{L}$. Here, $\pre^{(A, R, C)}(a)$ for $a \in A$ 
denotes $\{a_x \in A \mid (a_x, a) \in R\}$. 
We say that $F^{\ADF} \in \mathcal{F}^{\ADF}$ is an {\ADF} tuple, 
which we may refer to with  a subscript. 

To ease the juxtaposition with $\mathcal{F}$, we talk 
of label designation for $\mathcal{F}^{\ADF}$ as well.  
For any $\FFADF \equiv (A, R, C) \ (\in \FADF)$, any $a \in A$ 
and any $\lambda \in \Lambda$,  
we say that $\lambda$ designates $l \in \mathcal{L}$ for $a \in A$ iff (1)   
$\lambda$ is defined at least for each member of 
$\pre^{\FFADF}(a)$, and 
	  (2) 
	$C_a(\lambda_{\downarrow \pre^{F^{\ADF}}(a)})(a) = l$. 
 We say that $a \in A$'s label is proper under $\lambda \in \Lambda$ 
 in $\FFADF$
iff (1) $\lambda$ is defined for $a$, and (2) 
$\lambda$ designates 
$\lambda(a)$ for $a$. 

Also for the purpose of easing the juxtaposition, 
we define an exact labelling  
of \FADF: if every $a \in A$'s label is proper 
under $\lambda \in \Lambda^A$, then we say $\lambda$ is an 
{\it exact labelling} 
of $\FFADF$. 


\begin{example}[Illustration of $\FFADF$ exact labellings]\label{ex_illustration_of}  
In the following $\FFADF$ 
\begin{tikzcd}[
  column sep=small, row sep=small,inner sep=0pt]  
  \underset{C_p}{a_p}  
	\arrow[r] & 
	\underset{C_q}{a_q} \arrow[l,shift left]  
\end{tikzcd}  
with associated conditions $C_p$ and $C_q$, 
	assume $C_p({[a_q:\inL]^{\lambda}}) = \undecL$ and \linebreak
	$C_p([a_q:l]^{\lambda}) = \inL$ for $l \in \{\undecL, \outL\}$. 
	Assume also $C_q([a_p:\inL]^{\lambda}) = \undecL$ 
	and $C_q([a_p:l]^{\lambda}) = \inL$ for $l \in \{\undecL,\outL\}$. 
	Then there are two exact labellings of $\FFADF$, namely 
	${[a_p:\inL,a_q:\undecL]^{\lambda}}$ and 
	${[a_p:\undecL,a_q:\inL]^{\lambda}}$.  \hfill$\clubsuit$ 
\end{example}

\noindent Instead of the exact semantics as 
the set of all exact labellings, however,  
{\ADF} semantics essentially pre-interpret 
the labelling conditions $C$.  
We introduce the semantics below, and, in so doing, 
also adapt them to \FMMA. 

\subsubsection{{\ADF} semantics for \FMMA.}  

Let $\maxi: \mathcal{A} \times  
\Lambda \rightarrow 2^{\Lambda}$, 
alternatively $\maxi_{\mathcal{A}}: 
\Lambda \rightarrow 2^{\Lambda}$, 
be such that, 
for any $F \equiv (A, R, X)\ (\in \FADF \cup \FMMA)$, 
any $A \subseteq_{\textsf{fin}} 
\mathcal{A}$ and any 
$\lambda \in \Lambda^A$,  \\\\
\indent $\maxi_A(\lambda) = 
\{\lambda_x \in \Lambda^A \ | \ 
  \lambda \preceq \lambda_x \text{ and } 
\lambda_x \text{ is maximal in } 
(\Lambda^A, \preceq)\}$.\\

\noindent Every member $\lambda_x$ of $\maxi_A(\lambda)$ 
is such that $\lambda_x(a) \in \{\inL, \outL\}$ 
for every $a \in A$. Now, 
let $\Gamma 
: (\mathcal{F}^{\ADF} \cup \FMMA) \times \Lambda 
\rightarrow \Lambda$, 
alternatively 
$\Gamma^{(\mathcal{F}^{\ADF} \cup \FMMA)}: 
\Lambda \rightarrow 
\Lambda$, 
be such that, 
for any $F^{\ADF} \equiv (A, R, C)\  (\in \mathcal{F}^{\ADF})$, 
any $F \equiv (A, R, f_Q)\ (\in \FMMA)$ 
and any $\lambda \in \Lambda^A$,  all the following hold, 
with $Y$ denoting either of $\FFADF$ and $\FFMMA$. 
\begin{enumerate} 
  \item 
	  $\Gamma^{Y}(\lambda)
     \in \Lambda^A$.
   \item For every $a \in A$ and every $l \in \{\inL, \outL\}$, 
	   $\Gamma^{Y}(\lambda)(a) = l$ iff, 
		for every $\lambda_x \in \maxi_A(\lambda)$,    
		$\lambda_x$ designates only $l$ for $a$ in $Y$. 
\end{enumerate}
\noindent In a nutshell \cite{Brewka13}, $\Gamma^{Y}(\lambda) 
$ 
gets a consensus of every $\lambda_x \in \maxi_A(\lambda)$ 
on the label of each $a \in A$: if each one of them 
says only $\inL$ for $a$, then $\Gamma^{Y}(\lambda)(a) = 
\inL$, 
if each one of them says only $\outL$ for $a$, then 
$\Gamma^{Y}(\lambda)(a) = \outL$,
and for the other cases $\Gamma^{Y}(\lambda)(a) = \undecL$.  

\begin{definition}[{\ADF} semantics] 
	For any $Y \equiv (A, R, X)\ (\in (\FADF \cup \FMMA))$, we say
	$\lambda \in \Lambda^A$ is: 
	\begin{itemize} 
		\item adf.complete iff $\lambda$ 
			is a fixpoint of $\Gamma^Y$.  
			We denote the set of all adf.complete labellings 
			of $Y$ by $\Lambda^{adf.com}_Y$. 
		\item adf.preferred iff $\lambda$ 
			is maximal in $(\Lambda^{adf.com}_Y, \preceq)$.   
			We denote the set of all adf.preferred labellings
			of $Y$ by $\Lambda^{adf.prf}_Y$.  
		\item adf.grounded iff $\lambda$  
			is the least fixpoint of $\Gamma^Y$.    
			We denote the set of all adf.grounded labellings 
			of $Y$ by $\Lambda^{adf.grd}_Y$.  
        \end{itemize} 
	We call $\Lambda^{adf.com}_Y$ (, $\Lambda^{adf.prf}_Y$, 
	$\Lambda^{adf.grd}_Y$) 
	the adf.complete (, adf.preferred, adf.grounded) semantics of 
	$Y$.\footnote{It seems that formulation of the adf.stable 
	semantics has undergone 
	and may potentially undergo further change. We leave it out.} 
\end{definition} 

\noindent Unlike the maximally proper semantics, 
the {\ADF} semantics do not necessarily equate to the exact semantics 
regardless of existence of the latter. 

\begin{example}[Counter-example] \label{ex_counter_example} 
 We considered the following $\FFADF$ in 
	Example \ref{ex_illustration_of}:\linebreak 
	\begin{tikzpicture}[baseline= (a).base] 
			\node[scale=.94] (a) at (0,0){
	\begin{tikzcd}[
  column sep=small, row sep=small,inner sep=0pt]  
  \underset{C_p}{a_p}  
	\arrow[r] & 
	\underset{C_q}{a_q} \arrow[l,shift left]  
\end{tikzcd}  
};
		\end{tikzpicture} 
	with:  $C_p({[a_q:\inL]^{\lambda}}) = \undecL$; 
	$C_p([a_q:l]^{\lambda}) = \inL$ for $l \in \{\undecL, \outL\}$; 
	$C_q([a_p:\inL]^{\lambda}) = \undecL$;  
	and $C_q([a_p:l]^{\lambda}) = \inL$ for $l \in \{\undecL,\outL\}$.  
	There are two exact labellings of $\FFADF$, namely  
	${[a_p:\inL,a_q:\undecL]^{\lambda}}$ and 
	${[a_p:\undecL,a_q:\inL]^{\lambda}}$. 
        
	For the {\ADF} semantics, we have:  
	\begin{align*} 
		\Gamma^{\FFADF}([a_p:\inL,a_q:\outL]^{\lambda}) &=  
			 [a_p:\inL,a_q:\undecL]^{\lambda}. \\
		  \Gamma^{\FFADF}([a_p:\outL,a_q:\inL]^{\lambda}) &= 
			 [a_p:\undecL,a_q:\inL]^{\lambda}. \\
		  \Gamma^{\FFADF}([a_p:\inL,a_q:\inL]^{\lambda}) &= 
			 [a_p:\undecL,a_q:\undecL]^{\lambda}.\\ 
		 \Gamma^{\FFADF}([a_p:\outL,a_q:\outL]^{\lambda}) &= 
			 [a_p:\inL,a_q:\inL]^{\lambda}. \\
		 \Gamma^{\FFADF}([a_p:\undecL,a_q:\undecL]^{\lambda}) &= 
			 [a_p:\undecL,a_q:\undecL]^{\lambda}.\\ 
		 \Gamma^{\FFADF}([a_p:\inL, a_q:\undecL]^{\lambda}) &= 
			 [a_p:\undecL,a_q:\undecL]^{\lambda}.\\ 
		 \Gamma^{\FFADF}([a_p:\undecL,a_q:\inL]^{\lambda}) &= 
			 [a_p:\undecL,a_q:\undecL]^{\lambda}.\\
		 \Gamma^{\FFADF}([a_p:\outL, a_q:\undecL]^{\lambda}) &= 
			 [a_p:\undecL,a_q:\inL]^{\lambda}.\\ 
		 \Gamma^{\FFADF}([a_p:\undecL,a_q:\outL]^{\lambda}) &= 
			 [a_p:\inL,a_q:\undecL]^{\lambda}. 
	\end{align*}
	Hence, $\{[a_p:\undecL,a_q:\undecL]^{\lambda}\}$ 
	is the adf.complete, adf.preferred, and adf.grounded 
	semantics of $\FFADF$. Evidently, there is no intersection 
	in the set of all the exact labellings and 
	any one of 
	the 3 {\ADF} semantics for this example. 
	The same result is obtained for {\FMMA} with $\FFMMA$: 
	\begin{tikzpicture}[baseline= (a).base]
		\node[scale=.94] (a) at (0,0){
	  \begin{tikzcd}[
  column sep=small, row sep=small,inner sep=0pt]  
  \underset{((0,0), (1,1))}{a_p}  
	\arrow[r] & 
		  \underset{((0,0),(1,1))}{a_q} \arrow[l,shift left]  
\end{tikzcd}  
};
	\end{tikzpicture} 
	\hfill$\clubsuit$ 
\end{example} 

\begin{theorem}[Non-conservation]\label{thm_non_conservation}   
       Assume $F \equiv (A, R, f_Q)\ (\in \mathcal{F})$ 
	and $x \in \{\text{com}, \text{prf}, \text{gr}\}$, then 
	neither $\Lambda^{exact}_F \subseteq \Lambda^{adf.x}_F$ 
	nor $\Lambda^{adf.x}_F \subseteq \Lambda^{exact}_F$ 
          generally holds. 
\end{theorem}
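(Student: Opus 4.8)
The plan is to prove Theorem~\ref{thm_non_conservation} by exhibiting a single example (or at most two closely related ones) witnessing the failure of both inclusions simultaneously, for all three values of $x \in \{\text{com}, \text{prf}, \text{gr}\}$. Fortunately, Example~\ref{ex_counter_example} already does most of the work: for the two-cycle $\FFMMA$ on $\{a_p, a_q\}$ with both nuance tuples equal to $((0,0),(1,1))$, the exact labellings are exactly $[a_p{:}\inL, a_q{:}\undecL]^\lambda$ and $[a_p{:}\undecL, a_q{:}\inL]^\lambda$, while the adf.complete, adf.preferred and adf.grounded semantics all equal $\{[a_p{:}\undecL, a_q{:}\undecL]^\lambda\}$. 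First I would observe that the computation in Example~\ref{ex_counter_example} transfers verbatim to $\FFMMA$: the parenthetical $\FFMMA$-version of the diagram is displayed there, and for each of the nine relevant labellings $\lambda_x \in \maxi_{\{a_p,a_q\}}(\cdot)$ the designated label under the may-must scales $((0,0),(1,1))$ coincides with the $C_p, C_q$ values used in the {\ADF} version --- because with scale $((0,0),(1,1))$ an argument satisfies must-a always, satisfies must-r iff it has at least one accepted attacker, hence is designated $\inL$ when its unique attacker is $\outL$ or $\undecL$ and $\undecL$ when its attacker is $\inL$, matching $C_p$ and $C_q$ exactly.

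Given that, the two non-inclusions follow immediately. Since $\Lambda^{exact}_F = \{[a_p{:}\inL, a_q{:}\undecL]^\lambda, [a_p{:}\undecL, a_q{:}\inL]^\lambda\}$ and, for every $x$, $\Lambda^{adf.x}_F = \{[a_p{:}\undecL, a_q{:}\undecL]^\lambda\}$, the two sets are disjoint; in particular neither is contained in the other (the adf side is nonempty, so $\Lambda^{exact}_F \not\subseteq \Lambda^{adf.x}_F$ requires only that some exact labelling --- either one --- is not $[a_p{:}\undecL, a_q{:}\undecL]^\lambda$, which is clear, and symmetrically $\Lambda^{adf.x}_F \not\subseteq \Lambda^{exact}_F$ requires only that $[a_p{:}\undecL, a_q{:}\undecL]^\lambda$ is not exact, which holds because $[a_p{:}\undecL]^\lambda$ designates $\inL$, not $\undecL$, for $a_p$). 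I would write the proof as: ``By Example~\ref{ex_counter_example}, taking $F$ to be the displayed $\FFMMA$ on $\{a_p, a_q\}$, we have $\Lambda^{exact}_F$ and $\Lambda^{adf.x}_F$ disjoint and both nonempty for every $x \in \{\text{com}, \text{prf}, \text{gr}\}$; the two non-inclusions follow.''

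The main obstacle, such as it is, is purely expository rather than mathematical: one must be careful that Example~\ref{ex_counter_example} genuinely establishes the {\ADF}-semantics side \emph{for $\FFMMA$} and not merely for the $\FADF$ tuple, so the proof of the theorem should either cite the final sentence of that example (which asserts the same result holds for the $\FFMMA$ version) or, to be safe, include a one-line verification that the label-designation function induced by the scales $((0,0),(1,1))$ agrees pointwise with $C_p$ and $C_q$ on all $\lambda_x$ appearing in the fixpoint analysis. A secondary point to check is that $\undecL$-$\undecL$ is indeed the unique adf.complete labelling here, i.e.\ that it is a fixpoint of $\Gamma^{\FFMMA}$ and that no two-valued labelling is: both are read directly off the nine equations listed in Example~\ref{ex_counter_example} (every input maps toward $[a_p{:}\undecL, a_q{:}\undecL]^\lambda$, which is itself fixed), so adf.complete, adf.preferred and adf.grounded all collapse to that single labelling, handling all three cases of $x$ at once. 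No genuine difficulty arises; the theorem is essentially a corollary of the worked example, and the proof I would give is two or three sentences long.

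\begin{proof}
	Let $F$ be the {\FMMA} argumentation
	\begin{tikzpicture}[baseline= (a).base]
		\node[scale=.94] (a) at (0,0){
	  \begin{tikzcd}[
  column sep=small, row sep=small,inner sep=0pt]
  \underset{((0,0), (1,1))}{a_p}
	\arrow[r] &
		  \underset{((0,0),(1,1))}{a_q} \arrow[l,shift left]
\end{tikzcd}
};
	\end{tikzpicture}
	of Example \ref{ex_counter_example}. With the scales $((0,0),(1,1))$, for any $\lambda$ defined on $\{a_p,a_q\}$ and any $a \in \{a_p, a_q\}$, $a$ always satisfies must-a, and $a$ satisfies must-r iff its unique attacker is labelled $\inL$ under $\lambda$; hence $\lambda$ designates only $\inL$ for $a$ when the attacker is labelled $\outL$ or $\undecL$, and only $\undecL$ for $a$ when the attacker is labelled $\inL$. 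This matches the conditions $C_p, C_q$ of Example \ref{ex_counter_example} on every labelling, so the computation there applies verbatim to $F$: the exact labellings of $F$ are exactly $[a_p{:}\inL, a_q{:}\undecL]^{\lambda}$ and $[a_p{:}\undecL, a_q{:}\inL]^{\lambda}$, while $\Gamma^{F}$ drives every labelling toward the fixpoint $[a_p{:}\undecL, a_q{:}\undecL]^{\lambda}$, which is moreover its only fixpoint. Thus $\Lambda^{adf.x}_F = \{[a_p{:}\undecL, a_q{:}\undecL]^{\lambda}\}$ for each $x \in \{\text{com}, \text{prf}, \text{gr}\}$.

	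Now $[a_p{:}\undecL, a_q{:}\undecL]^{\lambda}$ is not an exact labelling of $F$, since $[a_p{:}\undecL]^{\lambda}$ designates $\inL$ (not $\undecL$) for $a_p$; hence $\Lambda^{adf.x}_F \not\subseteq \Lambda^{exact}_F$. Conversely, $[a_p{:}\inL, a_q{:}\undecL]^{\lambda} \in \Lambda^{exact}_F$ but $[a_p{:}\inL, a_q{:}\undecL]^{\lambda} \neq [a_p{:}\undecL, a_q{:}\undecL]^{\lambda}$, so $\Lambda^{exact}_F \not\subseteq \Lambda^{adf.x}_F$. This settles all three cases of $x$. \hfill$\Box$
\end{proof}
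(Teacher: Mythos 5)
Your proof is correct and takes essentially the same route as the paper, which simply points to Example~\ref{ex_counter_example}; you have merely made explicit the verification (asserted in the example's final sentence) that the scales $((0,0),(1,1))$ induce the same designation behaviour as $C_p, C_q$, and then drawn the two non-inclusions from the disjointness of the two non-empty sets. No gap.
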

\begin{proof} 
    See Example \ref{ex_counter_example}. \hfill$\Box$ 
\end{proof} 
\begin{corollary}[Relation between maximally proper semantics 
	and {\ADF} semantics]\label{cor_relation} 
     Assume $F \equiv (A, R, f_Q)\ (\in \mathcal{F})$ 
	and $x \in \{\text{com}, \text{prf}, \text{gr}\}$, then 
	neither $\Lambda^{maxi.x}_F \subseteq \Lambda^{adf.x}_F$ 
	nor $\Lambda^{adf.x}_F \subseteq \Lambda^{maxi.x}_F$ 
          generally holds. 
\end{corollary}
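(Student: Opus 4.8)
The plan is to derive Corollary~\ref{cor_relation} as a direct consequence of Theorem~\ref{thm_non_conservation} together with Theorem~\ref{thm_conservation}, by exhibiting a single argumentation where the exact semantics is non-empty (so that the maximally proper semantics coincides with it) and yet disagrees with all three {\ADF} semantics. First I would observe that for the witness $F$ used in Example~\ref{ex_counter_example} --- the two-argument loop with nuance tuples $((0,0),(1,1))$ on each argument --- the exact semantics is non-empty, since Example~\ref{ex_counter_example} explicitly produces the two exact labellings ${[a_p:\inL,a_q:\undecL]^{\lambda}}$ and ${[a_p:\undecL,a_q:\inL]^{\lambda}}$. Hence, by the second sentence of Theorem~\ref{thm_conservation}, $\Lambda^{maxi}_F \subseteq \Lambda^{exact}_F$, and combined with $\Lambda^{exact}_F \subseteq \Lambda^{maxi}_F$ (first sentence of the same theorem) we get $\Lambda^{maxi.com}_F = \Lambda^{exact}_F$ for this $F$.

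Next I would handle the maxi.preferred and maxi.grounded cases on the same $F$. Since $\Lambda^{maxi.com}_F$ has exactly the two elements listed, and these two are $\preceq$-incomparable (neither assigns $\inL$ or $\outL$ where the other assigns $\undecL$ in a way that is dominated --- more precisely, $[a_p:\inL,a_q:\undecL]^\lambda$ and $[a_p:\undecL,a_q:\inL]^\lambda$ differ by swapping which argument is $\inL$), both are maximal, so $\Lambda^{maxi.prf}_F = \Lambda^{maxi.com}_F$. For the grounded case, the meet of the two labellings in $(\Lambda^{\{a_p,a_q\}},\preceq)$ is $[a_p:\undecL,a_q:\undecL]^\lambda$, so $\Lambda^{maxi.gr}_F = \{[a_p:\undecL,a_q:\undecL]^\lambda\}$. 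Meanwhile Example~\ref{ex_counter_example} computes that the adf.complete, adf.preferred and adf.grounded semantics of $F$ are all $\{[a_p:\undecL,a_q:\undecL]^\lambda\}$. Comparing: $\Lambda^{maxi.com}_F = \Lambda^{maxi.prf}_F = \{[a_p:\inL,a_q:\undecL]^\lambda, [a_p:\undecL,a_q:\inL]^\lambda\}$ is disjoint from $\{[a_p:\undecL,a_q:\undecL]^\lambda\}$, establishing failure of both inclusions for $x \in \{\text{com},\text{prf}\}$ on this single $F$.

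The grounded case needs a second witness, since on the $F$ above the maxi.grounded and adf.grounded semantics happen to agree. Here I would reach for an $F$ whose exact semantics is non-empty but contains at least two incomparable labellings whose meet is \emph{not} itself a pre-maximally proper labelling in the {\ADF} sense, or more simply an $F$ where Theorem~\ref{thm_subsumption}(4)-style behaviour forces the maxi.grounded labelling outside $\Lambda^{adf.grd}_F$; the acyclic five-argument graph of Example~\ref{ex_labelling}/Example~\ref{ex_semantics} is a natural candidate, since there $\Lambda^{maxi.gr}_F = \{\lambda_4\}$ with $\lambda_4(a_2)=\lambda_4(a_3)=\undecL$, while the {\ADF} operator $\Gamma^F$ on an acyclic graph behaves much more like the classical grounded computation and would not in general undecide $a_3$ --- so I would verify $\lambda_4 \notin \Lambda^{adf.gr}_F$ and, conversely, that the adf.grounded labelling of that $F$ is not a maxi.complete labelling. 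With those two computations in hand the two non-inclusions for $x=\text{gr}$ follow, completing the corollary.

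The main obstacle I anticipate is the grounded sub-case: I need to be sure that on the chosen acyclic witness the {\ADF} fixpoint operator $\Gamma^F$ genuinely diverges from the maximally-proper grounded construction, which requires carefully unwinding the definition of $\Gamma^F$ (the consensus over all two-valued refinements $\maxi_A(\lambda)$) on the ``\textsf{any}'' cell of Fig.~\ref{fig_table} --- the argument $a_2$ with scale $((0,1),(1,2))$ --- and checking that it pushes $a_3$ to a label the maxi.grounded labelling does not share, or vice versa. Everything else is a mechanical appeal to the already-established Theorems~\ref{thm_conservation} and~\ref{thm_non_conservation} and to the explicit label computations in Examples~\ref{ex_counter_example} and~\ref{ex_semantics}.
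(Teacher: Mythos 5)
Your handling of the com and prf cases is correct and is essentially the paper's intended argument: on the two-cycle of Example~\ref{ex_counter_example} the exact semantics is non-empty, so Theorem~\ref{thm_conservation} gives $\Lambda^{maxi.com}_F = \Lambda^{exact}_F = \{[a_p{:}\inL,a_q{:}\undecL]^{\lambda},\,[a_p{:}\undecL,a_q{:}\inL]^{\lambda}\}$; these two labellings are $\preceq$-incomparable, so the maxi.preferred semantics is the same set, and both sets are disjoint from the singleton $\{[a_p{:}\undecL,a_q{:}\undecL]^{\lambda}\}$ computed there for all three {\ADF} semantics. You are also right---and more careful than the paper's one-line proof, whose witness satisfies $\Lambda^{maxi.gr}_F = \{[a_p{:}\undecL,a_q{:}\undecL]^{\lambda}\} = \Lambda^{adf.gr}_F$---that the grounded case needs a separate witness.

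The gap is that your proposed second witness fails. On the five-argument graph of Example~\ref{ex_semantics} the least fixpoint of $\Gamma^F$ is exactly $\lambda_4$, so $\Lambda^{adf.gr}_F = \{\lambda_4\} = \Lambda^{maxi.gr}_F$ there as well. Unwinding the iteration: once $\Gamma^F$ commits $a_1,a_5$ to $\inL$ and then $a_4$ to $\outL$, every $\lambda_x \in \maxi_A(\cdot)$ labels $a_1$ with $\inL$, under which $a_2$ (scale $((0,1),(1,2))$) satisfies may$_s$-a and may$_s$-r and designates all three labels; there is therefore never a consensus of ``only $\inL$'' or ``only $\outL$'' for $a_2$, so $a_2$ stays $\undecL$ and keeps ranging over $\{\inL,\outL\}$ in $\maxi_A(\cdot)$. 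The completion with $a_2 = \inL$ and $a_4 = \outL$ then makes $a_3$ (scale $((1,1),(1,1))$) satisfy both must-a and must-r, designating only $\undecL$, which blocks the consensus on $\inL$ for $a_3$. So $\Gamma^F$ \emph{does} undecide $a_3$, contrary to your expectation. A witness that does work is $F$: $b \rightarrow a$ with $f_Q(b) = ((0,1),(1,2))$ and $f_Q(a) = ((1,1),(0,0))$. Its exact (hence, by Theorem~\ref{thm_conservation}, maxi.complete) labellings are $[b{:}\inL,a{:}\outL]^{\lambda}$ and $[b{:}\undecL,a{:}\outL]^{\lambda}$, whose meet $[b{:}\undecL,a{:}\outL]^{\lambda}$ is the maxi.grounded labelling; but $[b{:}\undecL,a{:}\undecL]^{\lambda}$ is already a fixpoint of $\Gamma^F$ (the completion $b = \outL$ makes $a$ satisfy must-a and must-r, destroying the consensus on $\outL$), so $\Lambda^{adf.gr}_F = \{[b{:}\undecL,a{:}\undecL]^{\lambda}\}$. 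The two singletons differ, which refutes both inclusions for $x = \text{gr}$ and completes the corollary.
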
 
\begin{proof} 
 By Theorem \ref{thm_conservation} and Theorem \ref{thm_non_conservation}. 
	\hfill$\Box$ 
\end{proof} 

\section{Discussion on Future Directions}  
We conclude this paper by describing  
research directions. 
\subsection{Identification of general conditions 
		to guarantee existence of the exact semantics} 
		By Theorem \ref{thm_nothing}, there is no possibility 
		of knowing in advance of time whether an arbitrary 
		may-must argumentation has a non-empty 
		exact semantics. As we discussed, this is also the case 
		for \ADF, and, with a simple extrapolation, 
		in fact for any labelling-based 
		argumentation theory accommodating 
		sufficiently unrestricted assignment of 
		labelling conditions. On the other hand,  
		we know for a fact that some labelling conditions 
		do guarantee the universal existence 
		of the exact semantics. The classical deterministic labelling 
		\cite{Caminada06} in Section 2 
		is one example, which, by Theorem \ref{thm_correspondence_acceptability}, is equated to the choice of 
		 $((|\pre^F(a)|, |\pre^F(a)|), (1, 1))$ 
		for every argument $a$ in a given may-must 
		argumentation $F$; and 
		choice of $((0, |\pre^F(a)|+1),(0, |\pre^F(a)|+1))$  
		for every argument $a$ in $F$ is another example 
		with non-deterministic labellings. 
		How much freedom, then, can we allow in the labelling conditions 
		of a labelling-based argumentation theory 
		before the universal existence of the exact semantics 
		is lost for good?  
		Where is the boundary of existence and non-existence? 
		It is somehow striking, all the more so given the publication 
		of the first {\ADF} paper dated a decade ago, that these questions 
		have not amassed much interest in the argumentation 
		community so far; however, if the question 
		of an inference of a property of an argumentation 
		graph being decidable or undecidable is an interesting 
		question, these are equally interesting questions worth 
		an investigation. One approach is to start by identifying 
		general enough conditions to ensure the existence, 
		to peruse further refinement from the result. Our current 
		conjecture is that, so long as no argument satisfies 
		both must-a and must-r simultaneously, there exists 
		the exact semantics. This, however, 
		needs to be made formally certain. \\
\subsection{Family of may-must argumentations} 
		 We covered the classic interpretation 
		 of modalities, which 
		 led to the label designation in the table of 
		 Fig. \ref{fig_table}.  
		 There are other interpretations of modalities 
		 in the literature. Thus, it is plausible that 
		 employing different interpretations 
		 we obtain other types of label designation 
		 resulting in 
		   a family of may-must argumentations. It is of interest 
		   to detail similarities 
		   and differences among them.  
\subsection{May-must multi-agent argumentation} 
		  In the context of multi-agent argumentation, 
		  may- must- conditions of an argument can be thought of 
		  as being obtained through the synthesis of  
		  agents' perspectives on the acceptance statuses 
		  of the argument in the possible worlds 
		  accessible to them. We may hoist 
		  argumentation graphs, possible worlds and 
		  agents as 3 key conceptual pillars  
		  of may-must multi-agent argumentation (Cf. Fig.  
		  \ref{fig_pillars}). We describe 
		  how they interrelate below. \\
		  \begin{figure}[!t]  
\centering
   \includegraphics[scale=0.19]{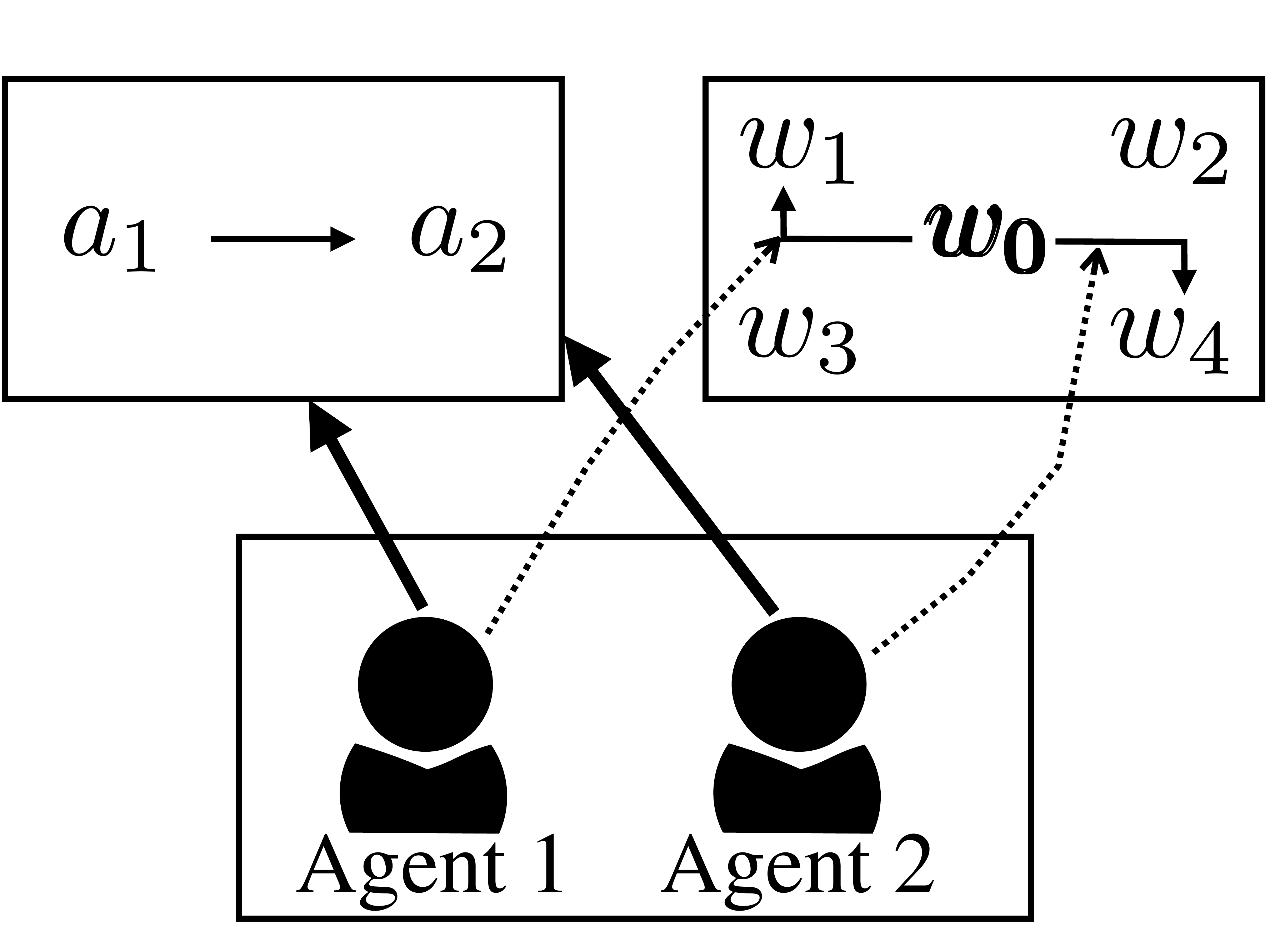} 
\caption{Three key components for may-must 
   multi-agent argumentation (\textbf{top left}: argumentation graphs, 
      \textbf{top right}: possible worlds (accessibility 
     of a possible world from itself is implicitly assumed and 
			  not explicitly shown), \textbf{bottom}:  
       agents). An argumentation graph is viewed 
       by an agent. Each agent then considers possible 
       situations (in this illustration,  
        Agent 1 thinks $w_1$ is accessible from $w_0$, 
      while Agent 2 thinks $w_4$ is accessible from $w_0$). 
   }
\label{fig_pillars}  
\end{figure} 

\noindent \textbf{Argumentation graphs as objects to be observed by agents } 
Given argumentation graphs, 
we can assume there are agents 
observing them. 
For simplification,  
let us assume that they observe them entirely and accurately. 
Fig. \ref{fig_pillars} shows one example where Agent 1 and Agent 2
are seeing 
\begin{tikzcd}[column sep=small,row sep=small] 
   a_1 
   \rar &  a_2 . 
\end{tikzcd}   

{\ }\\
\noindent \textbf{Possible worlds to be thought of 
by agents }    
Every agent, as in epistemic logic (e.g. \cite{Rendsvig19}), 
has its own understanding of which 
situation (world) is possible. 
Fig. \ref{fig_pillars} illustrates an example where 
Agent 1 sees both $w_0$ (the current world) and $w_1$, 
while Agent 2 sees both $w_0$ and $w_4$. Each agent, 
in each possible world it sees, considers  
label assignments to $a_1$ and $a_2$, one out of 
$\inL$, $\outL$ and $\undecL$. 
Note it is possible but not necessary 
that an agent assigns the same labels as others in a possible world 
they commonly see. \\

\noindent \textbf{Synthesis of the accessibility relation }  
As for where the possible worlds 
as are accessible from the current world come from 
in the multi-agent setting, 
we can understand that they are the collection of 
all the possible worlds observed accessible by at least one agent. 
In Fig. \ref{fig_pillars}, Agent 1 sees $w_1$ accessible, and Agent 2 sees 
$w_4$ accessible, which, once put together, 
becomes the one 
shown there. \\

\noindent \textbf{Synthesis of label assignments into may-must 
conditions } 
To understand from where the may-must conditions arise,  
we can similarly allude to the synthesis of  
label assignments by agents. Since 
labelling dependency only goes from 
a source argument to a target argument, 
for Fig. \ref{fig_pillars}, 
may-must conditions of $a_1$ can be first determined prior to $a_2$'s. 

 If Agent 1 and Agent 2 in 
all the possible worlds accessible to them assign $\inL$ 
(resp. $\outL$) 
to $a_1$, then only $\inL$ (resp. $\outL$) should 
be designated 
for $a_1$. 
There is only one entry in Fig. \ref{fig_table} where that is possible - 
when $a_1$ satisfies \mbox{must-a} and \mbox{not-r} 
(resp. \mbox{must-r}  and \mbox{not-a}). For that, 
$a_1$ must have $((0, 0), (m_1, m_2))$ with 
$1 \leq m_1, m_2$ (resp. $((n_1, n_2), (0, 0))$ with 
$1 \leq n_1, n_2$). 

If (1) Agent 1 assigns $\undecL$ to $a_1$ in $w_1$, 
(2) Agent 2 assigns $\undecL$ to $a_1$ in $w_4$, 
and (3) it is not the case that both 
of them assign either $\inL$ or $\outL$ 
to $w_0$, then that is tantamount to them assigning $\undecL$ 
to $a_1$ in all $w_0, w_1, w_4$, where 
only $\undecL$ should be designated for 
$a_1$. There are only two entries   
in Fig. \ref{fig_table} where that is the case  (must-a 
and must-r, or else not-a and not-r). 
Thus, we derive $((0,0), (0,0 ))$ or 
$((n_1, n_2), (m_1, m_2))$ with 
$1 \leq n_1, n_2, m_1, m_2$ as the may-must conditions 
of $a_1$.  

For the other cases, let us just consider 
one case where $a_1$ is assigned: $\inL$ by \mbox{Agent 1} and Agent 2 
in $w_0$; $\outL$ by Agent 1 in $w_1$; 
and $\undecL$ by Agent 2 in $w_4$. 
The entry in Fig. \ref{fig_table} in which 
any of the three labels is possible 
is when $a_1$ satisfies may$_s$-a and 
\mbox{may$_s$-r}, which results in 
$a_1$'s may-must conditions  
to be $((0, n_2), (0, m_2))$ with $1 \leq n_2, m_2$.   
The other cases are similarly reasoned. \\

Let us now move onto $a_2$. If the agents' label assignments 
for $a_1$ are as in the case we have just covered 
(Agent 1 assigns $\inL$ to $a_1$ in $w_0$ and $\outL$ in $w_1$, 
Agent 2 assigns $\inL$ to $a_1$ in $w_0$ and $\undecL$ in $w_4$), 
then $a_1$ may take any of the 3 labels 
on a non-deterministic basis, resulting in the following cases.

\begin{itemize} 
  \item  $a_1$ is assigned $\inL$: 
then for $a_2$'s label, only $w_0$ is relevant.  
       Say both Agent 1 and Agent 2 assign $\outL$ to $a_2$ in $w_0$, 
	then only $\outL$ is designated for $a_2$. 
		Similarly for any other cases.
 \item $a_1$ is assigned $\outL$: then 
	     only $w_1$, the possible world 
		  accessible to Agent 1 but not to Agent 2, 
		is relevant. The label Agent 1 assigns 
                  to $a_2$ in $w_1$ is the label to be designated for it. 
 \item $a_1$ is assigned $\undecL$: then only $w_4$, the possible
	     world accessible to Agent 2 but not to Agent 1,
		 is relevant. The label Agent 2 assigns 
                  to $a_2$ in $w_4$ is the label to be designated for it. \\
\end{itemize} 

\noindent \textbf{Static synthesis } 
As such, suppose that Agent 1 assigns $\outL$ to $a_2$ in $w_0$ and 
$\inL$ to $a_2$ in $w_1$, and that Agent 2 assigns $\outL$ to $a_2$ in 
$w_0$ and $\inL$ to $a_2$ in $w_4$, 
then we could understand the origin of the may-must conditions 
of $a_2$ to be the synthesis 
of every one of the 3 possibilities above, which leads to 
$a_2$ satisfying either: 
(\textbf{A}) must-r   %
and not-a when $a_1$ is assigned $\inL$; or  
(\textbf{B}) must-a and not-r, %
otherwise. 
As it turns out, however, there are no two pairs 
of natural numbers that satisfy them all.

To see that, suppose the may-must conditions of $a_2$ 
form $((n^{a_2}_1, n_2^{a_2}), (m_1^{a_2}, m_2^{a_2}))$.  
The condition (\textbf{A}) then gives us two 
constraints: $1 \leq n_1^{a_2}, n_2^{a_2}$; 
and $m_2^{a_2} \in \{0,1\}$. 
The condition (\textbf{B}) meanwhile gives us $n^{a_2}_1 = n^{a_2}_2 = 0$ and 
$1 \leq m^{a_2}_1, m^{a_2}_2$.  
One nearby approximation of all these constraints 
gives us $((n_1^{a_2}, n_2^{a_2}), 
(m_1^{a_2}, m_2^{a_2})) = ((0, 0), (1,1))$, which 
does not satisfy (\textbf{A}).  With another:  
$((n_1^{a_2}, n_2^{a_2}), 
(m_1^{a_2}, m_2^{a_2})) = ((1, 1), (1,1))$, 
(\textbf{B}) is not satisfied when $a_1$ is assigned 
$\undecL$. Hence, with the perspective of may-must 
multi-agent argumentation spelled out so far, we may
understand the origin 
of the may-must conditions of $a_2$ 
to be at best some non-deterministically 
correct synthesis of 
every one of the 3 possibilities above.  \\

\noindent \textbf{Dynamic synthesis } 
However, our perspective provides us also with an idea 
of more dynamic synthesis. 
Assume the setting so far (Agent 1: $a_1$ gets $\inL$ in $w_0$ 
and $\outL$ in $w_1$, and $a_2$ gets $\outL$ in $w_0$ and 
$\inL$ in $w_1$. Agent2: $a_1$ gets $\inL$ in $w_0$ 
and $\undecL$ in $w_4$, and $a_2$ gets $\outL$ in $w_0$ 
and $\inL$ in $w_4$),  then since 
$a_1$'s may-must conditions can be determined beforehand, 
we can determine $a_2$'s may-must conditions conditionally 
to each 
label assignable to $a_1$. Then, with $a_1$'s label $\inL$, 
we only need to synthesise 
those possible worlds in which $a_1$ gets $\inL$, only 
$w_0$ in the assumed setting, where $a_2$ is assigned only $\outL$. 
From this, we obtain $a_2$'s may-must conditions 
conditional to $a_1$'s label being $\inL$ 
to be $((n^{a_2}_1, n^{a_2}_2), (m^{a_2}_1, m^{a_2}_2))$ with $1 \leq n^{a_2}_1, n^{a_2}_2$ and 
$m^{a_2}_2 \in \{0,1\}$. Conditional to $a_1$'s label being $\outL$, 
similarly, we only synthesise in $w_1$ to obtain 
$n^{a_2}_2 \in \{0,1\}$ and $1 \leq m^{a_2}_1, m^{a_2}_2$. 
Conditional to $a_1$'s label being $\undecL$, 
we only synthesise in $w_4$ to obtain 
$n^{a_2}_1 = n^{a_2}_2 = 0$ and $1 \leq m^{a_2}_1, m^{a_2}_2$. \\\\

\noindent We have illustrated a perspective which we believe 
will help adapt may-must argumentation into a multi-agent argumentation 
theory. It should be interesting to expand further on the idea 
to a fuller-fledged mutli-agent argumentation theory.

\section{Conclusion} 
We have presented may-must argumentation, a novel labelling-based argumentation 
theory with may-must scales. The following characteristics 
are salient. 
\begin{description} 
	\item[1. Principled generalisation of classic 
		acceptability conditions]{\ }\\ Must- conditions 
		generalise the classical 
		labelling conditions proposed in \cite{Caminada06} 
		(Cf. Section 2) in a principled way. Instead of designating 
		$\outL$ with one $\inL$-labelled attacker 
		or $\inL$ when every attacker is labelled $\outL$,
		 may-must argumentation permits us to specify 
		 any number of accepted or rejected 
		  attackers as a requirement 
		 for the attacked to be designated $\outL$ or $\inL$.  \\   
	 \item[2. Non-deterministic labelling] {\ }\\
		 May- conditions accommodate non-deterministic 
		 label designations. \\ 

	 \item[3. Locality]{\ }\\
		 May- must- conditions are given locally to each argument.\\ 
		 
	 \item[4. Two-way evaluation]{\ }\\
		 Every argument is evaluated of its acceptability 
		and of its rejectability independently first, and 
		then the two assessments are combined 
		into a final decision for label designation. 
		From online reviews and recommendations 
		to legal trials with a prosecutor 
		and a defence lawyer, 
		the two-way evaluation 
		is a common practice in real-life situations.  \\
	\item[5. Monotonicity]{\ }\\
		May- must- conditions accommodate a level of 
	        monotonicity.  
		Once any may- must- condition is 
		satisfied with $n$ accepted/rejected attacking arguments, 
		$m\ (\leq n)$ accepted/rejected attacking arguments 
		also satisfies the same may- must- condition.  
\end{description} 
As we discussed in section 1.3, previous lines of research  
in the literature 
have some of these themes in common:  
graded argumentation \cite{Grossi19}, 
while not a labelling-approach, 
shares \textbf{1.} and \textbf{5.}; 
{\ADF} shares \textbf{3.}; 
a classic labelling-approach \cite{Jakobovits99} \textbf{2.};   
and 
another classic labelling-approach \cite{Caminada06} \textbf{4.} 
and \textbf{5.}. However, to the best of our knowledge, 
may-must argumentation is the first labelling-based argumentation theory 
respecting all of them. 

As for the semantics, we presented 3 types: 
exact semantics, maximally proper semantics and  
{\ADF} semantics, with technical comparisons revealing 
the relation to hold among them (Cf. Theorem \ref{thm_conservation}, 
Theorem \ref{thm_non_conservation}, and Corollary 
\ref{cor_relation}). We also discussed some interesting future research
directions.

Modalities in formal argumentation
have been conceptually 
around since the  beginning of formal argumentation. 
For instance, since acceptance judgement 
in the classical labelling \cite{Caminada06} is 
all $\outL$ attackers or some $\inL$ attacker, 
it is perfectly natural to rephrase the universal and the existential 
conditions with modal operators, e.g. \cite{Caminada09}.   
It is also possible to regard an argumentation graph itself 
as a possible world and to define an accessibility relation 
among them \cite{Barringer12} (2012 publication; 2005 is the year 
of publication of its conference paper version) to link 
the behaviour of different argumentation graphs. However,  
as far as we are aware, previous studies 
on formal argumentation and modalities 
do not consider with possible worlds what
it may mean in labelling argumentation for an argument 
to be judged possibly or necessarily accepted/rejected  
or how the two independent judgements may be combined into 
non-deterministic labelling.

\section*{Acknowledgements} 
This work was partially supported by 
JST CREST Grant Number JPMJCR15E1, 
and the first author is also additionally supported 
by AIP challenge program, Japan.

\hide{ 
\section{Variations with Preference and Intuitionism}   
In this section, we understand $\mathcal{F}$ in a broader 
perspective with preference and intuitionism.  

\subsection{May-must argumentation with preference} 
For the modal interpretation for $\mathcal{F}$,  
we assumed that any accessible possible world in which 
both acceptance and rejection of an argument are implied 
implies, due to the conflict between the two, $\undecL$ for the argument. 
If, however, there is a reason to prefer acceptance to rejection, 
or rejection to acceptance, it can be $\inL$, or $\outL$, instead.   
\ryuta{Actually, it would be better to be very rigorous.} 

Hence, it is possible to extend may-must argumentation 
with a parameter to control the preference.  

\begin{definition}[Nuance tuple with preference]  
    We define a nuance tuple with preference to be 
   $(\pmb{X}_1,  \pmb{X}_2, l)$ for some 
    $\pmb{X}_1, \pmb{X}_2 \in \mathbb{N} \times \mathbb{N}$, 
	and $l \in \mathcal{L}$. 
   We denote the class of all nuance tuples with preference by 
   $\mathcal{Q}^{\pmb{o}}$. For any $Q
   \in \mathcal{Q}^{\pmb{o}}$, 
   we call $(Q)^1$ its may-must acceptance scale, 
   $(Q)^2$ its may-must rejection scale, and  
   $(Q)^3$ its preference. 
\end{definition}  
The exact role of preference will be defined shortly. 
For now, it suffices to note: $(\pmb{X}_1, \pmb{X}_2, \inL)$ 
prioritises acceptance, $(\pmb{X}_1, \pmb{X}_2, \outL)$
rejection, and $(\pmb{X}_1, \pmb{X}_2, \undecL)$ neither. 
\begin{definition}[May-must argumentation with preference] 
    We define a (finite) may-must argumentation with preference 
	to be a tuple $(A, R, f_{Q^{\pmb{o}}})$ with: 
    $A \subseteq_{\text{fin}} \mathcal{A}$; 
	$R \in \mathcal{R}^A$; and 
	$f_{Q^{\pmb{o}}}: A \rightarrow \mathcal{Q}^{\pmb{o}}$, such that
    $((f_Q(a))^i)^1 \leq ((f_Q(a))^i)^2$ for every
    $a \in A$ and every $i \in \{1,2\}$. 
    
    We denote the class of all 
    (finite) may-must argumentations with
	preference by $\mathcal{F}^{\pmb{o}}$, 
	and refer to its member by $F^{\pmb{o}}$ with or without a 
	subscript. 
\end{definition} 

\begin{wrapfigure}[25]{r}{6.3cm}  
	\vspace{-\intextsep}
 \begin{tikzcd}[column sep=tiny,row sep=tiny,
	 /tikz/execute at end picture={
		 \draw (-1.35, 1.6) -- (-1.35,4.3); 
		 \draw (-2.7, 3.87) -- (2.6, 3.87);
		 \draw (-1.35, -1.5) -- (-1.35, 1.2);
		 \draw (-2.7, 0.77) -- (2.6, 0.77); 
                 \draw (-1.35, -4.6) -- (-1.35, -1.9);
		 \draw (-2.7, -2.35) -- (2.6, -2.35); 
		 }]  
	 \dot{>} & \text{must}\text{-r} & \text{may}_s\text{-r} & \text{not}\text{-r} \\
	  \text{must}\text{-a} & \inL & \inL & \inL \\ 
	 \text{may}_s\text{-a}	 &  \outL/\inL  & \outL/\inL & \inL \\
	 \text{not-a} & \outL & \outL/\inL  & \inL\\\\
	  \dot{=} & \text{must}\text{-r} & \text{may}_s\text{-r} & \text{not}\text{-r} \\
	  \text{must}\text{-a} & \undecL & \inL ? & \inL \\ 
	 \text{may}_s\text{-a}	 &  \outL ? & \textsf{any} & \inL ? \\
	 \text{not-a} & \outL & \outL ? & \undecL\\\\
	 \dot{<} & \text{must}\text{-r} & \text{may}_s\text{-r} & \text{not}\text{-r} \\
	  \text{must}\text{-a} & \outL & \outL/\inL  & \inL \\ 
	 \text{may}_s\text{-a}	 &  \outL & \outL/\inL & \outL/\inL\\
	 \text{not-a} & \outL & \outL & \outL
\end{tikzcd}   
\caption{Label designation for may-must argumentation with preference. 
	$\outL/\inL$ means either of $\outL$ and $\inL$.}
\label{fig_table_preference} 
\end{wrapfigure} 

Almost all definitions given earlier apply 
to $\mathcal{F}^{\pmb{o}}$, except label designation.   
\begin{itemize} 
	\item must-acceptance (resp. must- rejeection) of $a$ 
		implies $a$'s acceptance (resp. rejection) 
		in every accessible possible world. 
	\item may$_s$- acceptance (resp. may$_s$- rejection) of $a$ 
		implies $a$'s acceptance (resp. 
		rejection) in a non-empty subset of all accessible 
		possible worlds and $a$'s rejection (resp. acceptance) 
		in the other accessible possible worlds (if any).
	\item not- acceptance (resp. not- rejection) is equivalent 
		to must- rejection (resp. must- acceptance). 
\end{itemize}

The difference is therefore in the aggregation of the individual 
assessments. 
For any argument and for any accessible possible world 
in which both acceptance and rejection of the argument 
are implied, we obtain, with $\dot{=}$, 
designation of $\undecL$ for the argument as before. However, with 
$\dot{>}$, we obtain designation of $\inL$ for the argument, 
and, with $\dot{<}$, designation of $\outL$ for it. 
Fig. \ref{fig_table_preference} summarises this.

\subsection{Intuitionistic may-must argumentation with preference} 
Along with preference, we can also consider an intuitionistic 
interpretation on the modal judgements. Recall for the interpretation 
of not- acceptance and not- rejection that the classical interpretation 
assumed so far has flipped not- acceptance into must- reject 
and not- rejection into must- accept. That is, 
in any possible world, an argument whose acceptance 
(resp. rejection) 
is not implied is implied of rejection (resp. acceptance). 
An intuitionistic point of view rejects this 
as too strong a condition, to obtain for each $a \in $ and each $\lambda \in 
\Lambda$ that $a$'s satisfaction of:
\begin{itemize} 
     \item must-acceptance (resp. must- rejeection) of $a$ 
		implies $a$'s acceptance (resp. rejection) 
		in every accessible possible world. 
	\item may$_s$- acceptance (resp. may$_s$- rejection) of $a$ 
		implies $a$'s acceptance (resp. 
		rejection) in a non-empty subset of all accessible 
		possible worlds and $a$'s rejection (resp. acceptance) 
		in the other accessible possible worlds (if any).
	\item not- acceptance (resp. not- rejection) does not 
		imply $a$'s acceptance or $a$'s rejection 
		in every accessible possible world. 
\end{itemize}
\begin{wrapfigure}[25]{r}{6.3cm}  
	\vspace{-\intextsep}
 \begin{tikzcd}[column sep=tiny,row sep=tiny,
	 /tikz/execute at end picture={
		 \draw (-1.35, 1.6) -- (-1.35,4.3); 
		 \draw (-2.7, 3.87) -- (2.6, 3.87);
		 \draw (-1.35, -1.5) -- (-1.35, 1.2);
		 \draw (-2.7, 0.77) -- (2.6, 0.77); 
                 \draw (-1.35, -4.6) -- (-1.35, -1.9);
		 \draw (-2.7, -2.35) -- (2.6, -2.35); 
		 }]  
	 \dot{>} & \text{must}\text{-r} & \text{may}_s\text{-r} & \text{not}\text{-r} \\
	  \text{must}\text{-a} & \inL & \inL & \inL \\ 
	 \text{may}_s\text{-a}	 &  \outL/\inL  & \textsf{any} & \inL? \\
	 \text{not-a} & \outL & \outL?  & \undecL\\\\
	  \dot{=} & \text{must}\text{-r} & \text{may}_s\text{-r} & \text{not}\text{-r} \\
	  \text{must}\text{-a} & \undecL & \inL ? & \inL \\ 
	 \text{may}_s\text{-a}	 &  \outL ? & \textsf{any} & \inL ? \\
	 \text{not-a} & \outL & \outL ? & \undecL\\\\
	 \dot{<} & \text{must}\text{-r} & \text{may}_s\text{-r} & \text{not}\text{-r} \\
	  \text{must}\text{-a} & \outL & \outL/\inL  & \inL \\ 
	 \text{may}_s\text{-a}	 &  \outL & \textsf{any} & \inL?\\
	 \text{not-a} & \outL & \outL? & \undecL
\end{tikzcd}   
\caption{Label designation for intuitionistic may-must argumentation 
	with preference.}
\label{fig_table_intuitionism} 
\end{wrapfigure} 
Hence, in intuitionistic may-must argumentation, a possible world 
may imply for an argument any one of: (1) only acceptance; 
(2) only rejection; (3) both acceptance and rejection; 
and (4) neither acceptance nor rejection, 
giving rise to (4) not present in classic may-must argumentation.   
Label designations are summarised in Fig. \ref{fig_table_intuitionism}. 

Interestingly, for both (3) and (4), $\undecL$ 
is the designated label for the argument, for (3) because 
at most one of acceptance and rejection can be implied
for any argument in any possible world, and for (4) because 
either of acceptance and rejection should be implied 
for any argument in any possible world. Hence, despite 
the distinction between (3) and (4), we still have the same label 
designation as with the classic may-must argumentation, only, 
however, so long as no preference comes into play. 

With preference, more interestingly, we

intuitionistic perspective 
briefly mentioned in Section 3, and how that affects 
may-must argumentation labelling designation and semantics.  
Then, we identify several subclasses of the generalised may-must argumentation 
that may be interesting because of properties 
presented in them.

\subsection{Concretisation and abstraction}  
We first consider the mapping of $F \in \mathcal{F}$ into $F^{\ADF} \in 
\mathcal{F}^{\ADF}$. Recall that $\mathcal{F}$ admits non-deterministic labellings 
where potentially multiple labels are designated, while
$\mathcal{F}^{\ADF}$ admits only one. The mapping of $F$'s $f_Q$ 
into $F^{\ADF}$'s $C$
hence requires choosing one particular label for each argument $a$ 
that is designated by $\lambda \in \Lambda^{\pre{F}(a)}$, as more rigorously 
defined below. See also Example \ref{ex_concretisation} 
to follow for a concrete example. 
\begin{definition}[Concretisation: from $F$ to $F^{\ADF}$]\label{def_concretisation_from} 
	Let $\Gamma$ be a class of 
	all functions $\gamma: \mathcal{F} \rightarrow \mathcal{F}^{\ADF}$, 
	each of which is 
	such that, for any $F \equiv (A, R, f_Q)\ (\in \mathcal{F})$, 
	$\gamma(F) = (A, R, C)$ where $C$ satisfies the following for any 
	$a \in A$. 
	\begin{itemize} 
		\item If $\Lambda^{\pre^F(a)} = \emptyset$, 
			then if $L \subseteq \mathcal{L}$ 
			is such that some $\lambda \in \Lambda^A$ 
			designates $l \in L$ but does not designate 
			any $l \in (\mathcal{L} \backslash L)$ 
			for $a$, then $C_a(\emptyset) \in L$. 
		\item Otherwise, for any $\lambda 
	\in \Lambda^{\pre^F(a)}$, if $L \subseteq \mathcal{L}$ 
	is such that $\lambda$ designates each $l \in L$ but does not 
	designate any $l \in (\mathcal{L} \backslash L)$ for $a$, then 
	$C_a(\lambda) \in L$.  
       \end{itemize}
	We say that $F^{\ADF} \in \mathcal{F}^{\ADF}$ is a concretisation 
	of $F \in \mathcal{F}$ iff there is some $\gamma \in \Gamma$ 
	with $F^{\ADF} = \gamma(F)$. By $\pmb{\Gamma}(F)$ we denote  
	the set of all concretisations of $F \in \mathcal{F}$.

	
\end{definition}  
\begin{example}[Concretisation]\label{ex_concretisation} 
	Consider the example $F \equiv (A, R, f_Q) \ (\in \mathcal{F})$ in 
	\mbox{Example 
	\ref{ex_labelling}} again. We show some examples of concretisations of it. 
	Since $\gamma \in \Gamma$ does not modify 
	$A$ and $R$, the problem at hand is identification of 
	$C_{a_x}$ to correspond to $f_Q(a_x)$ for each $x \in \{1, \ldots, 5\}$. 
	See the diagram below. 
  \begin{center} 
	  \begin{tikzcd}[column sep=small]
		  F: \arrow[d,"\gamma"] & 
   \overset{}{\underset{((0, 0), (1,1))}{\ensuremath{a_1}}} 
  \rar & 
\overset{}{\underset{((0,1), (1,2))}{\ensuremath{a_2}}} \rar & 
   \overset{}{\underset{((1,1),(1,1))}{\ensuremath{a_3}}} &  
   \overset{}{\underset{((0,1),(1,1) )}{\ensuremath{a_4}}} \lar &   
   \overset{}{\underset{((0,0), (1, 1))}{\ensuremath{a_5}}} \lar \\
		  F^{\ADF}: & 
		    \overset{}{\underset{C_{a_1}}{\ensuremath{a_1}}} 
  \rar & 
		    \overset{}{\underset{C_{a_2}}{\ensuremath{a_2}}} \rar & 
		    \overset{}{\underset{C_{a_3}}{\ensuremath{a_3}}} &  
		    \overset{}{\underset{C_{a_4}}{\ensuremath{a_4}}} \lar &   
		    \overset{}{\underset{C_{a_5}}{\ensuremath{a_5}}} \lar
\end{tikzcd}   
  \end{center}

	For 
	both $a_1$ and $a_5$, recall (Cf. Example \ref{ex_labelling})
	that 
	  every $\lambda \in \Lambda^{A}$ 
	  designates just $\inL$ for $a_1$ and $a_5$. 
	  As per \mbox{Definition \ref{def_concretisation_from}},  
	  $C_{a_1}(\emptyset) = C_{a_5}(\emptyset) = \inL$ irrespective 
	  of which $\gamma \in \Gamma$. 
          
	  For $a_4$, there are 3 distinct labellings $\lambda_1, 
	   \lambda_2, \lambda_3 \in \Lambda^{a_4}$ 
	   with: $\lambda_1(a_5) = \inL$; 
	   $\lambda_2(a_5) = \outL$; and 
	    $\lambda_3(a_5) = \undecL$. 
	    We have: $\lambda_1$ designates only $\outL$ for $a_4$; 
	    $\lambda_2$ designates only $\inL$ for $a_4$; 
	    and $\lambda_3$ designates $\inL$ and $\undecL$ for $a_4$.  
	   Thus, $C_{a_4}$ is either of the following. 
	   \begin{itemize} 
		   \item $C_{a_4}(\lambda_1) = \outL$, 
			   $C_{a_4}(\lambda_2) = \inL$, 
			   $C_{a_4}(\lambda_3) = \inL$. 
		   \item $C_{a_4}(\lambda_1) = \outL$, 
			   $C_{a_4}(\lambda_2) = \inL$, 
			   $C_{a_4}(\lambda_3) = \undecL$. 
	   \end{itemize}
	   That is, some $\gamma \in \Gamma$ has the first $C_{a_4}$, 
	   while some other $\gamma' \in \Gamma$ has the second $C_{a_4}$. 
           Similarly for $a_2$ and $a_3$, we consider 
	   every $\lambda \in \Lambda^{\pre^F(a_2)}$ and 
	   $\lambda' \in \Lambda^{\pre^F(a_3)}$. 
	    \hfill$\clubsuit$       
\end{example} 
\noindent When we either decrease a may- condition or 
increase a must- condition in $F \in \mathcal{F}$, then we obtain 
at least as large a set of concretisations as before the change (Lemma 
\ref{lem_monotonicity}). 
\begin{definition}[Abstract order] 
  Let $\unlhd: \mathcal{Q} \times \mathcal{Q}$ 
	be such that $(Q_1, Q_2) \in \unlhd$, alternatively 
	 $Q_1 \unlhd Q_2$, holds iff, 
	 for any $i \in \{1,2\}$, 
	  $((Q_2)^i)^1 \leq ((Q_1)^i)^1$ and 
	  $((Q_1)^i)^2 \leq ((Q_2)^i)^2$ both hold. 
	  Then we define $\sqsubseteq$ 
	  to be a binary relation over 
	  $\mathcal{F}$ satisfying that, 
	  for any $F_1 \equiv (A, R, f_Q)$ 
	  and any $F_2 \equiv (A, R, f_Q')$, 
	  $F_1 \sqsubseteq F_2$ holds iff, 
	  for any $a \in A$, 
	  $f_Q(a) \unlhd f_Q'(a)$ holds.  
	  
	  We also extend $\sqsubseteq$ for $2^{\mathcal{F}}$ 
	  in the following manner. 
	  For $\mathcal{F}_x, \mathcal{F}_y \subseteq \mathcal{F}$,  
	  we define 
	  $\mathcal{F}_x \sqsubseteq \mathcal{F}_y$ 
	  iff, for any $F_x \in \mathcal{F}_x$, 
	  there exists some $F_y \in \mathcal{F}_y$ 
	  such that $F_x \sqsubseteq F_y$. 
\end{definition}  

\begin{lemma}[Monotonicity 1]\label{lem_monotonicity_1} 
	  For any $F \equiv (A, R, f_Q)\ (\in \mathcal{F})$ 
	  and any $F' \equiv (A, R, f_Q')\ (\in \mathcal{F})$,  
	  if $F \sqsubseteq F'$ holds, then 
	  $\pmb{\Gamma}(F) \subseteq \pmb{\Gamma}(F')$ holds.  
\end{lemma} 
\begin{proof} 
	By Definition \ref{def_concretisation_from}, it 
	 suffices to show that, for any $a \in A$ and 
	any $\lambda \in \Lambda^{\pre^F(a)}$, if $\lambda$ 
	designates 
	$l \in \mathcal{L}$ for $a$ in $F$, then 
	$\lambda$ designates $l$ for $a$ in $F'$. Now, the differences 
	between $f_Q$ and 
	$f'_Q$ are such that, firstly, if $a$ satisfies 
	must-a condition (resp. must-r condition) in $F$, $a$ 
	satisfies either must-a 
	or may$_s$-a condition (resp. must-r or may$_s$-r condition) in 
	$F'$, and, secondly, if $a$ satisfies not-a condition 
	(resp. not-r condition) in $F$, $a$ 
	satisfies either not-a or may$_s$-a condition 
	(resp. not-r or may$_s$-r condition). Apply 
	Lemma \ref{lem_label_designation_subsumption}. \hfill$\Box$ 
\end{proof}

\noindent Into the other direction of mapping $F^{\ADF} \in \mathcal{F}^{\ADF}$ 
into $F \in \mathcal{F}$, recall (see also Example \ref{ex_concretisation}) that 
$F^{\ADF}$'s $C_a$ determines label designation for each 
$\lambda \in \Lambda^{\pre^{F^{\ADF}}(a)}$, while 
$F$ does the determination based on $|\pre^{F}_{\lambda, \inL}(a)|$ 
and $|\pre^{F}_{\lambda, \outL}(a)|$. This difference 
for instance causes the situation 
where, for distinct $\lambda_1, \lambda_2 \in \Lambda^{\pre^{F^{\ADF}}(a)}$ 
with $|\pre^{F}_{\lambda_1, \inL}(a)| = |\pre^{F}_{\lambda_2, \inL}(a)|$ 
and $|\pre^{F}_{\lambda_1, \outL}(a)| = |\pre^{F}_{\lambda_2, \outL}(a)|$,  
$C_a(\lambda_1) = l_1$ and $C_a(\lambda_2) \not= l_1$, 
while with $F$, $\lambda_1$ and $\lambda_2$ always designate 
the same label(s). As such, the specificity of $F^{\ADF}$'s $C$ 
must be abstracted. Instead of functions from 
$\mathcal{F}^{\ADF}$ to $\mathcal{F}$, however, 
we generally consider functions from 
a subclass $X$ of $2^{\mathcal{F}^{\ADF}}$ to $\mathcal{F}$ 
where any $(A_1, R_1, C_1), (A_2, R_2, C_2) \in X$ 
are such that $A_1 = A_2$ and $R_1 = R_2$. This generalisation 
is for simulating the non-deterministic labellings 
in $\mathcal{F}$ with multiple members of $\mathcal{F}^{\ADF}$ (which 
provide multiple ``$C$''s). Formally, we consider the following class 
of functions.  
See also Example \ref{ex_abstraction} to follow 
for a concrete example. 
\begin{definition}[Abstraction: from $\mathcal{F}^{\ADF}$ to $F$] \label{def_abstraction_from} 
	  Let $\Delta$ be a class of all functions $\alpha: 
	 \mathcal{F}^{\ADF} \rightarrow \mathcal{F}$, 
	 each of which is such that, for any 
	  $F^{\ADF} \equiv (A, R, C)\ (\in \mathcal{F}^{\ADF})$, 
	 $\alpha(F^{\ADF}) = (A, R, f_Q)$ holds 
         where, for every $a \in A$, 
	 $f_Q(a) \equiv ((n_1^a, n_2^a), (m_1^a, m_2^a))$ 
	  satisfies all the following 
	  conditions for any $\lambda \in (\Lambda^{\pre^F(a)} \cup \{\emptyset\})$.  
                       \begin{enumerate}  
			       \item $0 \leq n^a_1 \leq n^a_2 \leq |\pre^F(a)|+1$. 
				       Also 
			       $0 \leq m^a_1 \leq m^a_2 \leq |\pre^F(a)| + 1$.\\                             
			       \item $\lambda = \emptyset$ iff  
				       $\pre^{F}(a) = \emptyset$. Also 
			        $|\pre^{F}_{\emptyset, \outL}{(a)}| 
				       = 
				       |\pre^{F}_{\emptyset, \inL}{(a)}| = 0$. \\
			       \item If $|\pre^{F}_{\lambda, \outL}{(a)}| 
				       < n_1^{a}$ and 
				       $|\pre^{F}_{\lambda, \inL}{(a)}| 
				       < m_1^{a}$, then  
				       $C_{a}(\lambda) = \undecL$.\\
				       (This corresponds to \textbf{not-a, not-r} 
				       satisfaction.)\\
			       \item
				       If $n_1^{a} \leq 
				       |\pre^{F}_{\lambda, \outL}{(a)}| 
				       < n_2^{a}$ and 
				       $|\pre^{F}_{\lambda, \inL}{(a)}| 
				       < m_1^{a}$, then  
				       $C_{a}(\lambda) \in \{\inL, \undecL\}$. \\
				       (\textbf{may$_s$-a, not-r})  \\
			       \item
				       If $n_2^{a} \leq 
				       |\pre^{F}_{\lambda, \outL}{(a)}|$
				       and 
				       $|\pre^{F}_{\lambda, \inL}{(a)}| 
				       < m_1^{a}$, then  
				       $C_{a}(\lambda) = \inL$. \\
				        (\textbf{must-a, not-r}) \\ 
			       \item If $|\pre^{F}_{\lambda, \outL}{(a)}| 
				       < n_1^{a}$ and 
				       $m_1^{a} \leq 
				       |\pre^{F}_{\lambda, \inL}{(a)}| 
				       < m_2^{a}$, then 
				       $C_{a}(\lambda) \in 
				       \{\outL, \undecL\}$.\\
				        (\textbf{not-a, may$_s$-r})\\
			       \item
				       If $n_1^{a} \leq 
				       |\pre^{F}_{\lambda, \outL}{(a)}| 
				       < n_2^{a}$ and  
				       $m_1^{a} \leq 
				       |\pre^{F}_{\lambda, \inL}{(a)}| 
				       < m_2^{a}$,
				         then  
				       $C_{a}(\lambda) 
				       \in \{\inL,\outL,\undecL\}$.\\
				        (\textbf{may$_s$-a, may$_s$-r})\\ 
			       \item
				       If $n_2^{a} \leq 
				       |\pre^{F}_{\lambda, \outL}{(a)}|$
				       and  
				       $m_1^{a} \leq 
				       |\pre^{F}_{\lambda, \inL}{(a)}| 
				       < m_2^{a}$,
				        then  
				       $C_{a}(\lambda) \in \{\inL, \undecL\}$. \\
				       (\textbf{must-a, may$_s$-r}) \\ 
			       \item
				      If $|\pre^{F}_{\lambda, \outL}{(a)}| 
				       < n_1^{a}$ and 
				       $m_2^{a} \leq 
				       |\pre^{F}_{\lambda, \inL}{(a)}| 
				       $, then 
				       $C_{a}(\lambda) = \outL$. \\
				       (\textbf{not-a, must-r})\\
			       \item
				       If $n_1^{a} \leq 
				       |\pre^{F}_{\lambda, \outL}{(a)}| 
				       < n_2^{a}$ and  
				       $m_2^{a} \leq 
				       |\pre^{F}_{\lambda, \inL}{(a)}| 
				       $,
				         then  
				       $C_{a}(\lambda) \in \{\outL,\undecL\}$. \\
				        (\textbf{may$_s$-a, must-r})\\ 
			       \item
				       If $n_2^{a} \leq 
				       |\pre^{F}_{\lambda, \outL}{(a)}|$
				       and  
				       $m_2^{a} \leq 
				       |\pre^{F}_{\lambda, \inL}{(a)}| 
				       $, 
				        then  
				       $C_{a}(\lambda) = \undecL$.\\
				        (\textbf{must-a, must-r}) 
		       \end{enumerate} 
	For any $F^{\ADF} \in \mathcal{F}^{\ADF}$,  
	we say that $F \in \mathcal{F}$ is an abstraction of $F^{\ADF}
	$ iff there is some $\alpha \in \Delta$ 
	with $F = \alpha(F^{\ADF})$. We 
        denote the set of all abstractions of $F^{\ADF}$ 
	by $\pmb{\Delta}(F^{\ADF})$. 
	\end{definition}

\begin{example}[Abstraction] \label{ex_abstraction}  
	Suppose $F^{\ADF}:$ \begin{tikzcd}[column sep=small]
		{\underset{C_{a_1}}{\ensuremath{a_1}}} 
  \rar & 
		{\underset{C_{a_2}}{\ensuremath{a_2}}}  & 
		\overset{}{\underset{C_{a_3}}{\ensuremath{a_3}}} \lar . 
   \end{tikzcd}   
	Assume $\lambda_{l_1l_3} \in \Lambda^{\pre^{F^{\ADF}}(a_2)}$ 
	for $l_1, l_3 \in \mathcal{L}$ 
	is such that $\lambda_{l_1l_3}(a_1) = l_1$ 
	and that $\lambda_{l_1l_3}(a_3) = l_3$. Then 
	$\Lambda^{\pre^{F^{\ADF}}(a_2)} = 
	\bigcup_{l_1, l_3 \in \mathcal{L}}\{\lambda_{l_1l_3}\}$. 
	Assume: 
	\begin{enumerate} 
          \item $C_{a_2}(\lambda_{\undecL\ \undecL}) = \undecL$.  
		  \quad\qquad
		   {\normalfont 2.} \ 
	  $C_{a_2}(\lambda_{l_1l_3}) = \outL$ if $\{\inL\} 
		  \subseteq \{l_1\} \cup \{l_3\}$.  
			\setcounter{enumi}{2}
	  \item $C_{a_2}(\lambda_{\undecL\ \outL}) = \inL$. \qquad\qquad\qquad
		   {\normalfont 4.} \ 
		$C_{a_2}(\lambda_{\outL\ \undecL}) = \undecL$. 
			\setcounter{enumi}{4}
	\item	$C_{a_2}(\lambda_{\outL\ \outL}) = \undecL$. 
	\end{enumerate} 	
	Let us consider which $f_Q(a_2) \equiv ((n_1^{a_2}, n_2^{a_2}), 
	(m_1^{a_2}, m_2^{a_2}))$ 
	can be 
	in an abstraction of $F^{\ADF}$.
	Firstly, due to 1. and 2., we can set $(m_1^{a_2}, m_2^{a_2})$ 
	to $(1, 1)$. On the other hand for $(n_1^{a_2}, n_2^{a_2})$, 
	since we have 3., 4., it cannot be that $n_2^{a_2} \leq 1$, 
	but also we have 5., and $n_2^{a_2} \not=2$. Thus, 
	we must have $n_2^{a_2} = 3$. However, because of 3., 
	$m_1^{a_2}$ cannot be greater than or equal to 2. 
	Hence we can set $m_1^{a_2}$ to be 1, resulting 
	in $(n_1^{a_2}, n_2^{a_2}) = (1, 3)$. 
	It is trivial to see that any $F \in \pmb{\Delta}(F^{\ADF})$ 
	with this $f_Q(a_2)$ least abstracts $C_{a_2}$ of $F^{\ADF}$ 
	among all possible $f_Q(a_2)$ in members of $\pmb{\Delta}(F^{\ADF})$. 
	Other $f_Q(a_2)$ also appear in other members of 
	$\pmb{\Delta}(F^{\ADF})$. 
	Specifically, due to Lemma \ref{lem_label_designation_subsumption} 
	and Definition \ref{def_abstraction_from}, 
	$F \in \pmb{\Delta}(F^{\ADF})$ 
        can come with 
	any $f_Q(a_2) = ((n_1^{'a_2}, 3), (m_1^{'a_2}, m_2^{'a_2}))$ 
	with $0 \leq n_1^{'a_2} \leq 1$,  $0 \leq m_1^{'a_2} \leq 1$, 
	$1 \leq m_2^{'a_2} \leq 3$. 
	Analogously for $f_Q(a_1)$ and $f_Q(a_3)$.  
	\hfill$\clubsuit$ 
\end{example} 
Until this point, we left the cardinality of $\pmb{\Gamma}(F)$ and 
$\pmb{\Delta}(F^{\ADF})$  all up to intuition. The following 
theorem establishes the bounds, with an implication of 
existence of the two sets (Corollary \ref{cor_existence}).  
\begin{theorem}[Cardinality of the maps]  
	For any $F \equiv (A, R, f_Q)\ (\in \mathcal{F})$,  
	we have 
	$1 \leq \pmb{\Gamma}(F) \leq |\mathcal{L}|^{|A|(|\mathcal{L}|^{|A|})}$, 
	and for 
	any $F^{\ADF} \equiv (A, R, C)\ (\in {\mathcal{F}^{\ADF}})$,  
	we have $1 \leq \pmb{\Delta}(F^{\ADF})
	\leq (\frac{(|A|+2)(|A|+3)}{2})^{2|A|}$. 
	It holds that 
	$(\frac{(|A|+2)(|A|+3)}{2})^{2|A|} \ll 
	|\mathcal{L}|^{|A|(|\mathcal{L}|^{|A|})}$ for any $2 \leq |A|$ 
	and $|\mathcal{L}|=3$. 
\end{theorem}  
\begin{proof} 
	For the former, the lower bound is due to the fact that any 
	$\lambda \in \Lambda^{\pre^F(a)}$ for $a \in A$ 
	designates at least one $l \in \mathcal{L}$ for $a$ in $F$.  
	For the upper bound, $|\pre^F(a)| \leq |A|$, and thus 
	$|\Lambda^{\pre^{F}(a)}| \leq |\mathcal{L}|^{|A|}$. 
	Any $\lambda \in \Lambda^{\pre^F(a)}$  
	may designate as many labels as there are in $\mathcal{L}$  
	for $a$, hence there may be up to $|\mathcal{L}|^{(|\mathcal{L}|^{|A|})}$
	alternatives for the third component of 
	$F^{\ADF} \in \pmb{\Gamma}(F)$. There are $|A|$ arguments. 
	Put together, we obtain the result.  
	For the latter,  
	the lower bound is due to the fact that 
	 $f_Q(a) = ((0, |\pre^{F}(a)|+1),\linebreak (0, |\pre^{F}(a)|+1))$  
	 designates each of $\inL, \outL$ and $\undecL$. 
        For the upper bound, 
	for each $a \in A$,  
	$C_a$ 
	 may map to 
	member(s) of 
	$X \equiv \{((n_1,n_2), (m_1,m_2)) \mid 0 \leq n_1, n_2, m_1, m_2\leq |\pre^{F}(a)| + 1\}$. 
Since we have $|\pre^{F}(a)| + 1 \leq |A|+1$, it holds that 
	$|X| \leq (\frac{(|A|+2)(|A|+3)}{2})^2$. There are $|A|$ arguments.  
		 \hfill$\Box$ 
\end{proof}  
\begin{corollary}[Existence]\label{cor_existence} 
	For any $F \in \mathcal{F}$, 
	there exists some concretisation 
	of $F$, and 
	for any $F^{\ADF} \in \mathcal{F}^{\ADF}$, 
	there exists some abstraction of 
	$F^{\ADF}$. 
\end{corollary} 
While $\pmb{\Delta}(F^{\ADF})$ 
is still rather large, note that it covers every possible abstraction
of $F^{\ADF}$. In practice, 
there should be certain criteria for abstraction  
e.g. minimal abstraction (see $f_{\alpha}$ in Theorem \ref{thm_galois_connection}), 
which leaves only a handful of the set, or just one 
in case the minimum abstraction exists. 
 
Any abstraction of $F^{\ADF} \in {\mathcal{F}^{\ADF}}$ 
correctly overapproximates $C_a$'s label 
designation for every $a \in A$, trivially from 
Definition \ref{def_label_designation} and 
Definition \ref{def_abstraction_from}. 
\begin{theorem}[Abstraction soundness]\label{thm_abstraction_soundness} 
	For any $F^{\ADF} \equiv (A, R, C)\ (\in {\mathcal{F}^{\ADF}})$ and 
	any $F \in \mathcal{F}$, 
	if $F$ is an abstraction of $F^{\ADF}$, 
	then for any $a \in A$, any $l \in \mathcal{L}$
	and any $\lambda \in \Lambda^{\pre^{F}(a)}$,  
		if $l \in 
		\bigcup_{(A, R, C') \in \mathcal{F}_x^{\ADF}}\{C'_a(\lambda)\}
		$, then 
			 $\lambda$ designates $l$ for $a$ in $F$. 
\end{theorem} 
\subsubsection{Galois connection}  
When there are two systems related in concrete-abstract relation, 
it is of interest to establish Galois connection (see 
any standard text, e.g. \cite{Davey02}). 
Briefly, let $S_1$ and $S_2$ (each) be an ordered set, 
partially ordered in $\leq_1$ and $\leq_2$ resp.. 
Let $f_{1\rightarrow 2}$ be a function that maps 
each element of $S_1$ onto an element of $S_2$, 
and let $f_{2 \rightarrow 1}$ be one that maps 
each element of $S_2$ onto an element of $S_1$.  
If $f_{1 \rightarrow 2}(s_1) \leq_2 s_2$ implies 
$s_1 \leq_1 f_{2\rightarrow 1}(s_2)$ and vice versa, then 
the pair of $f_{1 \rightarrow 2}$ and $f_{2 \rightarrow 1}$ is said to be a 
Galois connection. The following properties hold good. A Galois connection is: 
contractive, i.e. $(f_{1 \rightarrow 2} \circ f_{2\rightarrow 1})(s_2) \leq_2 s_2$ for every $s_2 \in S_2$; extensive, i.e. $s_1 \leq_1 (f_{2 \rightarrow 1}
\circ f_{1 \rightarrow 2})(s_1)$ for every $s_1 \in S_1$;  and 
monotone for both 
$f_{1 \rightarrow 2}$ and $f_{2 \rightarrow 1}$ (to follow from the contractiveness and the extensiveness). Further, it holds that 
$f_{1 \rightarrow 2} \circ f_{2 \rightarrow 1} \circ f_{1 \rightarrow 2} = 
f_{1 \rightarrow 2}$ and that 
$f_{2 \rightarrow 1} \circ f_{1 \rightarrow 2} \circ f_{2 \rightarrow 1} = 
f_{2 \rightarrow 1}$. 
Galois connection is widely used for 
abstraction interpretation 
in program analysis \cite{Cousot77,Cousot79,Nielson99} 
for verification of properties of 
large-scale programs as the verification in concrete space
is often undecidable or very costly. 
In our view, it is no different with formal argumentation; 
reasoning  
about a large-scale argumentation, with a large cost, 
will benefit from utilising the technique. 
Here, we identify a Galois connection 
between $\mathcal{F}$ and $\mathcal{F}^{\ADF}$ 
based on $\Gamma$ and $\Delta$ that we introduced. 
\begin{theorem}[Galois connection] \label{thm_galois_connection}  
	Let $2^{\mathcal{F}}_{(A, R)}$  
	be a subclass of $2^{\mathcal{F}}$ 
	which contains every $F \equiv (A, R, f_Q)\ (\in \mathcal{F})$ 
	for some $f_Q$ but nothing else.
	Let $2^{\mathcal{F}^{\ADF}}_{(A, R)}$  
	be a subclass of $2^{\mathcal{F}^{\ADF}}$ 
	which contains every $F^{\ADF} \equiv (A, R, C)\ (\in \mathcal{F}^{\ADF})$ 
	for some $C$ but nothing else.

	Let $f_{\gamma}: 2^{\mathcal{F}} \rightarrow 2^{\mathcal{F}^{\ADF}}$ 
	and  $f_{\alpha}: 2^{\mathcal{F}^{\ADF}} \rightarrow 2^{\mathcal{F}}$ 
        be the following.
	\begin{itemize} 
		\item for any $\mathcal{F}_x \in 2^{\mathcal{F}}_{(A. R)}$, 
	$f_{\gamma}(\mathcal{F}_x) = 
	\bigcup_{F_x \in \mathcal{F}_x}\pmb{\Gamma}(F_x)$.  
\item  for any  $\mathcal{F}^{\ADF}_x \in 2^{\mathcal{F}^{\ADF}}_{(A, R)}$,
	 $f_{\alpha}(\mathcal{F}^{\ADF}_x) =  
			\bigcup_{F_x^{\ADF} \in \mathcal{F}_x^{\ADF}}\{F \in \pmb{\Delta}(F_x^{\ADF}) 
			\mid \forall F' \in \pmb{\Delta}(F_x^{\ADF}).
			F' \not\sqsubset F\}$.  
	\end{itemize} 
	 Then 
	 $(f_{\alpha}, f_{\gamma})$ is a Galois connection for 
	 any 
	 $(2^{\mathcal{F}}_{(A, R)}, \sqsubseteq)$ and 
	 $(2^{\mathcal{F}^{\ADF}}_{(A,R)}, \subseteq)$. 
\end{theorem}
\begin{proof} 
	Suppose $\mathcal{F}_x \in 2^{\mathcal{F}}_{(A, R)}$ and 
	$\mathcal{F}^{\ADF}_x
	 \in 2^{\mathcal{F}^{\ADF}}_{(A, R)}$. Suppose  
	$f_{\alpha}(\mathcal{F}^{\ADF}_x) \sqsubseteq \mathcal{F}_x$, then 
	we have to show that $\mathcal{F}^{\ADF}_x \subseteq  
	f_{\gamma}(\mathcal{F}_x)$. By Lemma \ref{lem_monotonicity_1}, 
	we have $\bigcup_{F_y \in f_{\alpha}(\mathcal{F}^{\ADF}_x)}\pmb{\Gamma}(F_y) 
	\subseteq 
	\bigcup_{F_x \in \mathcal{F}_x}\pmb{\Gamma}(F_x)$. 
	By the definition of $f_{\gamma}$, 
	we have $\bigcup_{F_x \in \mathcal{F}_x}\pmb{\Gamma}(F_x) \subseteq  
	f_{\gamma}(\mathcal{F}_x)$. By \mbox{Theorem \ref{thm_abstraction_soundness}} 
	and \mbox{Definition \ref{def_concretisation_from}}, 
	we have $\mathcal{F}_x^{\ADF} \subseteq 
	\bigcup_{F_y \in f_{\alpha}(\mathcal{F}^{\ADF}_x)}\pmb{\Gamma}(F_y)$. 
	Hence $\mathcal{F}_x^{\ADF} \subseteq f_{\gamma}(\mathcal{F}_x)$, as required.

	Suppose on the other hand that 
	$\mathcal{F}^{\ADF}_x \subseteq f_{\gamma}(\mathcal{F}_x)$, then 
	we have to show that 
	$f_{\alpha}(\mathcal{F}^{\ADF}_x) \sqsubseteq 
	\mathcal{F}_x$. First, trivially,  
	   we have $f_{\alpha}(\mathcal{F}_x^{\ADF}) \sqsubseteq 
	  f_{\alpha}(f_{\gamma}(\mathcal{F}_x))$, since 
	  every $F^{\ADF} \in \mathcal{F}_x^{\ADF}$ 
	  is contained in $f_{\gamma}(\mathcal{F}_x)$ by the present assumption. 
	  By Lemma \ref{lem_label_designation_subsumption} (together with 
	  the reasoning in the proof of 
	  Lemma \ref{lem_monotonicity_1}), we have $f_{\alpha}(f_{\gamma}(\mathcal{F}_x)) \sqsubseteq \mathcal{F}_x$, thus 
	  we have $f_{\alpha}(\mathcal{F}_x^{\ADF}) \sqsubseteq \mathcal{F}_x$, 
	  as required. \hfill$\Box$ 
\end{proof} 
While, in fact, the $f_{\alpha}$ in this theorem 
does not realy have to select any particular member(s) of 
$\pmb{\Delta}(F_x^{\ADF})$, 
it has, as we stated earlier, a benefit of minimal abstraction. 

\subsection{Semantics}  

For the semantics, 
let $\maxi: \mathcal{A} \times  
\Lambda \rightarrow 2^{\Lambda}$, 
which we alternatively 
state $\maxi^{\mathcal{A}}: 
\Lambda \rightarrow 2^{\Lambda}$, 
be such that, 
for any $A \subseteq_{\textsf{fin}} 
\mathcal{A}$ and any 
$\lambda \in \Lambda^A$, we have: \\
\indent $\maxi^A(\lambda) = 
\{\lambda_x \in \Lambda^A \ | \ 
  \lambda \preceq \lambda_x \text{ and } 
\lambda_x \text{ is maximal in } 
(\Lambda^A, \preceq)\}$.\\ 
Every member of $\maxi^A(\lambda)$ 
is such that $\lambda(a) \in \{\inL, \outL\}$ 
for every $a \in A$. \\ 

Also,   
let $\Gamma 
: \mathcal{F}^{\ADF} \times \Lambda 
\rightarrow \Lambda$, 
which we alternatively 
state $\Gamma^{\mathcal{F}^{\ADF}}: 
\Lambda \rightarrow 
\Lambda$, 
be such that, 
for any $F^{\ADF} \equiv (A, R, C)\  (\in \mathcal{F}^{\ADF})$
and any $\lambda \in \Lambda^A$, 
$\Gamma^{F^{\ADF}}(\lambda)$ satisfies all
the following.
\begin{enumerate} 
  \item 
	  $\Gamma^{F^{\ADF}}(\lambda) 
     \in \Lambda^A$. 
   \item For every $a \in A$, 
	   $\Gamma^{F^{\ADF}}(\lambda)(a) = \inL$ iff, 
     for every 
     $\lambda_x \in \maxi^A(\lambda)$, 
		$C_{a}({\lambda_x}_{\downarrow \pre^{F^{\ADF}}(a)}) = \inL$.  
\end{enumerate}

In a nutshell \cite{Brewka13}, $\Gamma^{F^{\ADF}}(\lambda) 
$ 
gets a consensus of every $\lambda_x \in \maxi^A(\lambda)$ 
on the label of each $a \in A$: if each one of them 
says $\inL$ for $a$, then $\Gamma^{F^{\ADF}}(\lambda)(a) = 
\inL$, 
if each one of them says $\outL$ for $a$, then $\Gamma^{F^{\ADF}}(\lambda)(a) = \outL$,
and for the other cases $\Gamma^{F^{\ADF}}(\lambda)(a) = \undecL$. 

Then the grounded semantics of $F^{\ADF} \equiv (A, R, C)\ (\in 
\mathcal{F}^{\ADF})$ contains just the least fixpoint of $\Gamma^{F^{\ADF}}$
(the order is $\preceq$); 
the complete semantics of $F^{\ADF}$ 
contains all and only fixpoints of $\Gamma^{F^{\ADF}}$; and 
the others are defined in a usual way from the complete 
semantics. \\

\noindent Some differences are therefore easy to identify. 
For instance, the grounded semantics 
of $F \in \mathcal{F}$ may not be a subset of 
its complete semantics (Cf. Theorem \ref{thm_subsumption}), 
while $\ADF$ enforces the property via 
$\Gamma$. In view of recent 
studies \cite{ArisakaSantini19,Bistarelli17} where 
the subsumption does not hold, we believe 
our characterisation can be accommodating. 
For where a difference occurs, 
we may consider the following member $F$ of $\mathcal{F}$ 
\begin{tikzcd}  
   \underset{((0,0), (1,1))}{\ensuremath{a_1}} 
  \rar & \underset{((0,0), (1,1))}{\ensuremath{a_2}} \lar
\end{tikzcd}  
 for which there are the following 4 labellings 
to constitute the complete semantics of $F$:  
\begin{center} 
\begin{tikzcd}  
   \overset{\inL}{\underset{((0,0), (1,1))}{\ensuremath{a_1}}} 
  \rar & \overset{\undecL}{\underset{((0,0), (1,1))}{\ensuremath{a_2}}} \lar
\end{tikzcd}  
\begin{tikzcd}  
   \overset{\undecL}{\underset{((0,0), (1,1))}{\ensuremath{a_1}}} 
  \rar & \overset{\inL}{\underset{((0,0), (1,1))}{\ensuremath{a_2}}} \lar
\end{tikzcd}\\   
\begin{tikzcd}  
   \overset{\inL}{\underset{((0,0), (1,1))}{\ensuremath{a_1}}} 
  \rar & \overset{\outL}{\underset{((0,0), (1,1))}{\ensuremath{a_2}}} \lar
\end{tikzcd}  
\begin{tikzcd}  
   \overset{\outL}{\underset{((0,0), (1,1))}{\ensuremath{a_1}}} 
  \rar & \overset{\inL}{\underset{((0,0), (1,1))}{\ensuremath{a_2}}} \lar
\end{tikzcd} 
\end{center} 
Any of these labellings are maximally proper, 
but in fact both of the arguments' labels are 
designated under them (that is, they are ideal; Cf. Definition \ref{def_ideal_labelling}), which is a condition 
consistent with that of 
a complete labelling for a Dung abstract argumentation 
(see Section \ref{section_technical_preliminaries}).  
Meanwhile, $\lambda \in \Lambda^{\{a_1, a_2\}}$ 
with $\lambda(a_1) = \lambda(a_2) = \undecL$  
is not maximally proper, since we have
two converging update sequences: $(\lambda, \lambda_x)$,
and $(\lambda, \lambda_y)$, 
with: $\lambda_x(a_1) = \lambda_y(a_2) = \undecL$ 
and $\lambda_x(a_2) = \lambda_y(a_1) = \inL$. 

A corresponding {\ADF} tuple in the meantime has 
the following 
$C$. For $\lambda_1, \lambda_2, \lambda_3 
\in \Lambda^{\{a_2\}}$ with:  
$\lambda_1(a_2) = \inL$; 
$\lambda_2(a_2) = \outL$; and 
$\lambda_3(a_2) = \undecL$, 
and $\lambda_4, \lambda_5, \lambda_6 \in 
\Lambda^{\{a_1\}}$ with:   
$\lambda_4(a_1) = \inL$; 
$\lambda_5(a_1) = \outL$; 
and $\lambda_6(a_1) = \undecL$, all the following hold. 
\begin{itemize} 
\item 
$C_{a_1}(\lambda_1) = \undecL$, 
$C_{a_1}(\lambda_2) = \inL$, and 
$C_{a_1}(\lambda_3) = \inL$. 
\item 
$C_{a_2}(\lambda_4) = \undecL$, 
$C_{a_2}(\lambda_5) = \inL$, 
and $C_{a_2}(\lambda_6) = \inL$.  
\end{itemize}  

With $\Gamma$, this $\ADF$ tuple obtains 
the following 3 labellings for its 
complete semantics.   
\begin{center} 
\begin{tikzcd}  
   \overset{\undecL}{\ensuremath{a_1}} 
  \rar & \overset{\undecL}{\ensuremath{a_2}} \lar
\end{tikzcd}  
\begin{tikzcd}  
\overset{\outL}{\ensuremath{a_1}} 
  \rar & \overset{\inL}{\ensuremath{a_2}} \lar
\end{tikzcd} 
\begin{tikzcd}  
   \overset{\inL}{\ensuremath{a_1}} 
  \rar & \overset{\outL}{\ensuremath{a_2}} \lar
\end{tikzcd}  
\end{center} 
Thus, {\ADF} weighs more the semi-lattice formation 
in its complete semantics than the labelling specification 
given by $C$. In addition to this difference, any $F \equiv (A, R, f_Q) 
\ (\in \mathcal{F})$, 
which we also refer to by $F$, 
if a semantics $\Lambda \subseteq \Lambda^A$ 
of $F$ is such that, for two distinct 
$\lambda_1, \lambda_2 \in \Lambda$, 
there exists some $a \in A$ such that (1) 
$\lambda_1(a) \not= \lambda_2(a)$, and that 
(2) $\lambda_1(a_x) = \lambda_2(a_x)$ for every $a_x \in 
\pre(a)$, then $\Lambda$ does not belong
to a semantics of $(A, R, C)$ (or $(A, R^a, C)$ 
with $R \equiv R^a$) above. This is rather clear, 
because, for any  $a \in A$, $C_a$ is a function 
with its range of $\{\inL, \outL, \undecL\}$.  
An example of such $\Lambda$ has been covered; 
see Example \ref{ex_semantics}.  \\

\subsubsection{Abstract interpretation (semantics)} 
Whatever the semantics, it is of interest to 
conduct abstract interpretation (reasoning 
of properties in concrete space from within abstract space) 
\cite{Cousot77,Cousot79,Nielson99}. For that purpose, 
one semantics for $\mathcal{F}^{\ADF}$ and another 
semantics for $\mathcal{F}$ is not convenient. Seeing 
perhaps {\ADF} community may be more intersted in 
the {\ADF} semantics, we first define the semantics 
for $\mathcal{F}$, and then show what can be said  
concerning abstract interpretation.  

Let $\Gamma: \mathcal{F} \times \Lambda \rightarrow \Lambda$, 
alternatively $\Gamma^{\mathcal{F}}: \Lambda \rightarrow \Lambda$, 
be such that, for any $F \equiv (A, R, f_Q)\ (\in \mathcal{F})$ 
and any $\lambda \in \Lambda^A$, $\Gamma^F(\lambda)$ 
satisfies all the following. 
\begin{enumerate}
	\item $\Gamma^F(\lambda) \in \Lambda^A$. 
	\item For every $a \in A$, $\Gamma^{F^{\ADF}}(\lambda)(a) 
		\not= \undecL$ iff  
		$\lambda$ designates at most one $l \in \{\inL,\outL\}$ 
		$\andC$ $\Gamma^{F^{\ADF}}(\lambda)(a) = l$. 
\end{enumerate} 

\begin{theorem}[Abstraction soundness (semantics)] 
	For any $F^{\ADF} \equiv (A, R, C) \ (\in \mathcal{F}^{\ADF})$, any 
	$F \in f_{\alpha}(\{F^{\ADF}\})$ and 
	any $\lambda \in \Lambda^A$,  
	if $\lambda = \Gamma^F(\lambda)$, 
	then there is some $\lambda' \in \Lambda^A$  
	such that $\lambda' = \Gamma^{F^{\ADF}}(\lambda')$  
	and that,   
	for any $a \in A$, 
	if $\lambda(a) \not= \undecL$, then 
	$\lambda(a) = \lambda'(a)$. 
\end{theorem} 
\begin{proof} 
   
\end{proof} 
\begin{corollary}[Acceptance and rejection] 
       Any argument that is credulously accepted / rejected 
	in abstract space is credulously accepted / rejected 
	in concrete space. Any argument that is skeptically 
	accepted / rejected in abstract space is 
	credulously accepted / rejected in concrete space. 
\end{corollary}

\noindent Straightforwardly, this result can be interpolated to other semantics 
including the maximally proper semantics of this paper.   

\hide{ 
\section{Existence Theorem}\label{section_existence_theorem} 
There is one question of considerable formal interest which 
has been postponed till now, as regards 
a condition to ensure
the existence of a labelling under which 
every argument's label is designated.    
In this section, we show such a condition, 
{\it in fact the exact condition} 
that sharply 
divides the existence and non-existence 
of such a labelling 
for any arbitrary member of $\mathcal{F}$.  
Specifically, we prove the following: 
for $(A, R, f_Q) \in \mathcal{F}$, 
only so long as $a \in A$ either cannot 
simultaneously satisfy must- acceptance condition 
or else cannot simultaneously 
satisfy must- rejection condition 
for every $\lambda \in \Lambda^A$, 
is it perforce the case that there exists some 
$\lambda_x \in \Lambda^A$ satisfying the claimed 
condition. 
 
We assume the following which we make use of 
in the proof. 
\begin{definition}[Transformations]
    Let $\beta: \mathcal{F} \times \mathcal{A} 
   \rightarrow \mathcal{F}$ be such that, 
    for any $F \equiv (A, R, f_Q)\ (\in \mathcal{F})$ 
    and any $a \in A$, 
    $\beta(F, a) = 
     (A_x, R_x, {f_{Q}}_x)$ with: 
     \begin{itemize} 
        \item $A_x = A \backslash \{a\}$; 
        \item $R_x = \{(a_x, a_y) \in R \ | \  
                         a \not\in \{a_x, a_y\}\}$; and 
        \item ${f_Q}_x$ being a function to satisfy all the following. 
            \begin{enumerate} 
                \item For every $a_x \not\in (\suc(a) \cup \{a\})$, 
                    it holds that ${f_Q}_x(a_x) =   
       f_Q(a_x)$.   
                \item For every $a_x \in (\suc(a) \backslash 
                   \{a\})$, 
                  \begin{itemize}
                      \item If $a$ satisfies 
    must- acceptance condition and not-may rejection condition,
     then both of the following conditions hold.
      \begin{enumerate}
       \item $({f_Q}_x(a_x))^1 = (f_Q(a_x))^1$. 
       \item 
          ${({f_Q}_x(a_x))^2 = (\max(0, ((f_Q(a_x))^2)^1 - 1),
     \max(0, ((f_Q(a_x))^2)^2 - 1))}$. 
      \end{enumerate} 
  \item Else if $a$ satisfies
    must- rejection condition and not-may acceptance condition,
     then both of the following conditions hold. 
     \begin{enumerate}
                             \item  
${({f_Q}_x(a_x))^1 = (\max(0, ((f_Q(a_x))^1)^1 - 1),
     \max(0, ((f_Q(a_x))^1)^2 - 1))}$. 
     \item $({f_Q}_x(a_x))^2 = (f_Q(a_x))^2$. 
    \end{enumerate} 
   \item Otherwise, 
                        ${f_Q}_x(a_x) = f_Q(a_x)$.
                  \end{itemize}
           \end{enumerate} 
     \end{itemize}
   
\end{definition}

\begin{definition}[Transformations2] 
    Let $\beta: \mathcal{F} \times \mathcal{A}  
      \times \Lambda 
   \rightarrow \mathcal{F}$ be such that, 
    for any $F \equiv (A, R, f_Q)\ (\in \mathcal{F})$, 
     any $a \in A$ and any $\lambda \in \Lambda^A$,   
    denoting the set of all $a_x \in A$ 
    such that $(a, a_x) \in R$ by 
    $\suc(a)$, 
    we have  
    $\beta(F, a, \lambda) = 
     (A_x, R_x, {f_{Q}}_x)$ with: 
     \begin{itemize} 
        \item $A_x = A \backslash \{a\}$; 
        \item $R_x = \{(a_x, a_y) \in R \ | \  
                         a \not\in \{a_x, a_y\}\}$; and 
        \item ${f_Q}_x$ being a function to 
             satisfy all the following. 
            \begin{enumerate} 
                \item For every $a_x \not\in (\suc(a) \cup \{a\})$, 
                    it holds that ${f_Q}_x(a_x) =   
       f_Q(a_x)$.   
                \item For every $a_x \in (\suc(a) \backslash 
                   \{a\})$, 
                  \begin{itemize}
                      \item If $\lambda(a) = \inL$,  
     then both of the following hold.
      \begin{enumerate}
       \item $({f_Q}_x(a_x))^1 = (f_Q(a_x))^1$. 
       \item 
          ${({f_Q}_x(a_x))^2 = (\max(0, ((f_Q(a_x))^2)^1 - 1),
     \max(0, ((f_Q(a_x))^2)^2 - 1))}$. 
      \end{enumerate} 
  \item Else if $\lambda(a) = \outL$, 
     then both of the following conditions hold. 
     \begin{enumerate}
                             \item  
${({f_Q}_x(a_x))^1 = (\max(0, ((f_Q(a_x))^1)^1 - 1),
     \max(0, ((f_Q(a_x))^1)^2 - 1))}$. 
     \item $({f_Q}_x(a_x))^2 = (f_Q(a_x))^2$. 
    \end{enumerate} 
   \item Otherwise, 
                        ${f_Q}_x(a_x) = f_Q(a_x)$.
                  \end{itemize}
           \end{enumerate} 
     \end{itemize}
\end{definition}

\ryuta{I should  
illustrate the impact of label change with examples.}  

\begin{lemma}[Reduction lemma]\label{lem_reduction_lemma} 
    Let $\nu: \Lambda \times 2^{\mathcal{A}} \rightarrow \Lambda$ 
    be such that $\nu(\lambda, A) \in \Lambda^A$
    with $(\nu(\lambda, A))(a_y) = \lambda(a_y)$ 
    for every $a_y \in A$. 
    
    For any $F \equiv (A, R, f_Q)\ (\in \mathcal{F})$, 
    any $a \in A$ and any $\lambda \in \Lambda^A$,  
    let $F_x \equiv (A_x, R_x, {f_Q}_x)$ denote 
    $\beta(F, a, \lambda)$. Then, 
    for every $a_x \in F_x$, both of the following hold. 
    \begin{enumerate} 
     \item 
     $a_x$ satisfies must-, may$_s$-, or 
     not-may- acceptance condition under 
     $\lambda$ (in $F$) iff it satisfies the same 
     acceptance condition under $\nu(\lambda, A_x)$ 
    (in $F_x$).   
     \item  $a_x$ satisfies must-, may$_s$-, or 
     not-may- rejection condition under 
     $\lambda$ (in $F$) iff it satisfies the same 
     rejection condition under $\nu(\lambda, A_x)$ 
    (in $F_x$). 
    \end{enumerate} 
\end{lemma}  
\begin{proof}   
    It suffices to show for 1., and the proof 
    for 2. is analogous. 
  
    \textbf{If}: by assumption, 
          $a_x$ satisfies must-, may$_s$-, or not-may- 
    acceptance condition
      under $\nu(\lambda,A_x)$ (in $F_x$).  
      We have to show 
    that $a_x \in A_x$ satisfies the same 
    acceptance condition 
     under $\lambda$ (in $F$).       
    We consider cases. 
      \begin{enumerate} 
        \item $a_x \not\in (\suc(a) \cup \{a\})$: 
             then $\pre^F(a_x) \subseteq 
             A_x \cap A\ (= A_x)$. Also, by the definition of $\nu$, 
            it holds that 
           $(\nu(\lambda, A_x))(a_y) = \lambda(a_y)$ 
           for every $a_y \in \pre^F(a_x)$.  
            Meanwhile, by the definition of $\beta$, 
           we have: ${f_Q}_x(a_x) = f_Q(a_x)$. 
           Consequently, $a_x$ satisfies the same 
           acceptance condition under $\lambda$ 
           (in $F$).  
        \item $a_x \in (\suc(a) \backslash \{a\})$: 
          then  $\pre^F(a_x) = (\pre^{F_x}(a_x) \cup \{a\})$.  
           We consider all possibilities of the value of 
     $\lambda(a)$. 
     \begin{enumerate} 
       \item $\lambda(a) = \inL$: by the definition 
       of $\beta$, we have that 
       $({f_Q}_x(a_x))^1 = (f_Q(a_x))^1$. 
       Suppose there are $0 \leq n \leq |\pre^F(a_x)|$ 
       distinct members 
       in $\pre^F(a_x)$, $a_1, \ldots, a_n$ (if $n \not= 0$), 
       with $\lambda(a_i) = \outL$ for every $1 \leq i \leq n$ 
       (if $n \not= 0$). 
       Since $\lambda$ is a function with the range 
       of $\mathcal{L}$, it follows that   
       $a \not= a_i$ for each $1 \leq i \leq n$. 
       Thus, it follows that $a_x$ satisfies
       the same acceptance condition 
       in $F$, as required. 
       
       \item $\lambda(a) = \outL$:         if there are  
     $0 \leq n \leq |\pre^{F_x}(a_x)|$ 
       distinct members 
       in $\pre^{F_x}(a_x)$, $a_1, \ldots, a_n$ (if $n \not= 0$), 
       with $\lambda(a_i) = \outL$ for every $1 \leq i \leq n$ 
       (if $n \not= 0$), then 
      there exists $n+1$ distinct members 
      in $\pre^F(a_x)$ with $\lambda(a_i) = \outL$ for 
     every $1 \leq i \leq n$ (if $n \not= 0$).

      {\ }\indent {\ }\indent {\ }\indent {\ }\indent However, 
       by the definition 
       of $\beta$, 
       $({f_Q}_x(a_x))^1 = ({\max(0, ((f_Q(a_x))^1)^1 - 
1)},\linebreak
     {\max(0, ((f_Q(a_x))^1)^2 - 1)})$. That is,
      if it should be that $(({f_Q}_x(a_x))^1)^1 \leq n$,
      it is also the case that $(({f_Q}(a_x))^1)^1 + 1 \leq n+1$,
      and, similarly, if 
      it should be that $(({f_Q}_x(a_x))^1)^2 \leq n$,
      it is also the case that $(({f_Q}(a_x))^1)^1 + 1 \leq n+1$.
      Consequently, $a_x$ satisfies the same 
      acceptance condition under $\lambda$ in $F$, as required. 
    
     \item $\lambda(a) = \undecL$: Similar to (a). 
      
     \end{enumerate}
     \item By the definition of $\beta$, $a_x = a$ 
       is not possible.  
              
      \end{enumerate}  
     
     \textbf{Only if}: by assumption,
      $a_x$ satisfies must-, may$_s$-, or not- acceptance 
     condition under $\lambda$ (in $F$). We have 
     to show that $a_x \in A_x$, if $a_x \not= a$,
     satisfies the same 
     acceptance condition under $\nu(\lambda, A_x)$
     (in $F_x$). Proof is again by cases on $a_x$, 
     which is analogous to the \textbf{If} direction 
     we have covered.  \hfill$\Box$ 
\end{proof} 

\ryuta{Maybe this is the only one that I need in 
the proof below.} 
\begin{theorem}[Designation preservation by addition]
    For any $F_x \equiv (A_x, R_x, {f_Q}_x) \ (\in \mathcal{F})$,
    any $F \equiv (A, R, f_Q)\ (\in \mathcal{F})$ and 
    any $\lambda_x \in \Lambda^{A_x}$,   
    if all the following conditions hold, then 
    there exists some $\lambda \in \Lambda^A$ 
    under which every $a \in A$ is designated. 
    
    \begin{enumerate} 
        \item There exists some $a_x \in A$
         such that $F_x = \beta(F, a_x, \lambda_x)$.  
        \item For every $a \in A_x$,  
         $a$'s label is designated under 
        $\lambda_x$. 
        \item  
    \end{enumerate}
    if every $a \in A$ is designated under $\lambda$, 
    then we have:   

Satisfiability of each condition 
    is preserved. 
\end{theorem} 
\begin{proof} 
  \ryuta{Do it later.} 
\end{proof} 
\begin{definition}[Regular arguments]\label{def_regular_arguments}
     For any $F \equiv (A, R, f_Q)\ (\in \mathcal{F})$ 
    and any $a \in A$, we say that
    $a$ is  regular iff, 
    for every $\lambda \in \Lambda^A$, 
    $a$ does not satisfy both must- acceptance 
    condition and must- rejection condition simultaneously. 
\end{definition} 

\begin{theorem}[Labelling existence theorem] 
    For any $F \equiv (A, R, f_Q)\ (\in \mathcal{F})$,   
    if every $a \in A$ is regular, then 
    there exists some $\lambda \in \Lambda^A$ under which, 
    for every $a \in A$, $a$'s label is designated. 
\end{theorem} 
\begin{proof}  
    By induction on $\max_{a \in A}|\Delta(F, a)|$ 
    (the maximum number of arguments in a strongly
    connected component) and a sub-induction
    on $\max_{a \in A}\delta(F, \Delta(F, a))$ (the maximum
    SCC depth).
    
    For the base case of the main induction,
    assume $\Delta(F, a_1) = \{a_1\}$ 
    for every $a_1 \in A$.  
    
    For the base case of the sub-induction
    for the base case of the main induction,
    assume that $\max_{a \in A}\delta(F, \Delta(F, a)) = 0$. 
    Then, for every $a_1, a_2 \in A$,
    there is no attack between them. Hence, 
    it suffices to consider cases for an arbitrary 
    $a \in A$. 
     Assume $(f_Q(a))^1 = (l_1, l_2)$, 
     $(f_Q(a))^2 = (l_3, l_4)$, and 
     some $\lambda_x \in \Lambda^A$ under which 
     every $a_y \in (A \backslash \{a\})$'s 
     label is designated, 
    and consider 
      all possibilities of what acceptance and rejection
     conditions $a$ satisfy under $\lambda_x$. 
    \begin{enumerate} 
       \item $a$ satisfies must- acceptance condition
      and not- rejection condition under $\lambda_x$.  
       Consider all possibilities of $\lambda_x(a)$. 
           \begin{enumerate} 
               \item $\lambda_x(a) = \inL$: then we have 
                nothing to show. $\lambda_x$ is such 
                 $\lambda$.   
               \item $\lambda_x(a) = \outL$: 
                  Consider all possible $0 \leq m \leq 
                  |\pre(a)|$. 
               \begin{enumerate} 
                   \item $m = 0$: then $(a, a) \not\in R$.   
             $\lambda' \in \Lambda^A$ with: 
          $\lambda'(a) = \inL$ and 
           $\lambda'(a_y) = \lambda_x(a_y)$ for 
           every $a_y \in (A \backslash \{a\})$ 
is such $\lambda$.  
                   
              \item $m = 1$: then $(a, a) \in R$ 
              and $l_2 \in \{0, 1\}$. 
                     Consider all possibilities of $l_2$: 
                     \begin{enumerate} 

                         \item $l_2 = 1$:  
                    Then, we must have $1 \leq l_3, l_4$.  
                    Assume $\lambda' \in \Lambda^A$ 
                with $\lambda'(a) = \undecL$ and 
                $\lambda'(a_y) = \lambda_x(a_y)$ for 
                every $a_y \in (A \backslash \{a\})$. 
                Then, 
                 $a$ either satisfies 
                 may$_s$- acceptance condition (when 
                $l_1 = 0$) or else not- acceptance 
               condition (when $l_1 = 1$). Meanwhile,
            $a$ satisfies not- rejection condition. 
                 Hence, regardless of $l_1$'s value 
              (Cf. Fig. \ref{fig_table}), it follows 
                  that $a$'s label is designated under
          $\lambda'$, which is such $\lambda$. 
                       
                       \item $l_1 = l_2 = 0$: then 
                we must have $1 \leq l_3, l_4$.  
                 This forms a sub-case of the previous case.
                    \end{enumerate}
                                 \end{enumerate}

\item $\lambda_x(a) = \undecL$:   
           Consider all possible $0 \leq m \leq |\pre(a)|$. 
                \begin{enumerate} 
                   \item $m = 0$: then $(a,a) \not\in R$.
                $\lambda' \in \Lambda^A$ with $\lambda'(a) 
    =  \inL$ is such $\lambda$. 
                 
                   \item $m = 1$: then $(a, a) \in R$.  
                   It holds that $l_1 = l_2 = 0$. 
                    We must have 
            $1 \leq l_3$ and $2 \leq l_4$ due to 
          the given conditions. This case has been covered 
         above.  $\lambda' \in \Lambda^A$ with 
    $\lambda'(a) = \inL$ is such $\lambda$. 
                \end{enumerate} 
                            
         \end{enumerate} 
    \item $a$ satisfies must- acceptance condition
      and may$_s$- rejection condition. Consider 
     all possibilities of $\lambda_x(a)$. 
      \begin{enumerate} 
         \item $\lambda_x(a) = \inL$: then $\lambda_x$
       is such $\lambda$. There is nothing to show.  

          \item $\lambda_x(a) = \outL$: Consider all 
         possible $0 \leq m \leq |\pre(a)|$. 

        \begin{enumerate}  

           \item $m = 0$: then $(a, a) \not\in R$. 
          $\lambda' \in \Lambda^A$ with: $\lambda'(a) = 
        \inL$; and $\lambda'(a_y) = \lambda_x(a_y)$
       for every $a_y \in (A \backslash \{a\})$
       is such $\lambda$.  
          \item $m = 1$: then $(a, a) \in R$, and 
          $l_2 \in \{0, 1\}$. 
           Consider all possibilities 
         of $l_2$:  
        \begin{enumerate}  
            \item $l_2 = 1$: Assume $\lambda' \in \Lambda^A$
         with $\lambda'(a) = \undecL$.  
         Then $a$ satisfies either may$_s$- acceptance 
         condition (when $l_1 = 0$) or else 
          not may- acceptance condition (when $l_1 = 1$). 
          Meanwhile, $a$ satisfies may$_s$- rejection
         condition. Thus, $a$'s label is designated under 
          $\lambda'$ (see Fig. \ref{fig_table}), which 
        is then such $\lambda$. 

            \item $l_1 = l_2 = 0$: Forms a sub-case of 
        the previous case. 
        \end{enumerate}
     \end{enumerate}  

        \item $\lambda_x(a) = \undecL$: then
         we have nothing to show. $\lambda_x$ is such $\lambda$.
      \end{enumerate} 
      
\item $a$ satisfies may$_s$- acceptance condition 
          and not- rejection condition. Consider 
          all possibilities of $\lambda_x(a)$. 
      \begin{enumerate} 
         \item $\lambda_x(a) = \inL$: then $\lambda_x$ 
       is such $\lambda$. There is nothing to show.  
  
         \item $\lambda_x(a) = \outL$: Consider 
           all possible $0 \leq m \leq |\pre(a)|$.
         \begin{enumerate}  
            \item $m = 0$: Then $(a, a) \not\in R$.   
                  By the so far given conditions, 
                 we have $l_1 = 0$ and $1 \leq l_2, l_3, l_4$. 
                 $\lambda' \in \Lambda^A$ with: 
      $\lambda'(a) = \undecL$; and $\lambda'(a_y) =
     \lambda_x(a_y)$ for every $a_y \in (A \backslash\{a\})$
     is such $\lambda$. 
            \item $m = 1$: Then $(a, a) \in R$. 
          By the so far given conditions, 
          we have $l_1 \in \{0,1\}$, $2 \leq l_2$, 
          and $1 \leq l_3, l_4$. 
          Consider all possibilities of $l_1$. 
       \begin{enumerate} 
          \item $l_1 = 1$: 
     Assume $\lambda' \in \Lambda^A$ with:
             $\lambda'(a) = \undecL$; and $\lambda'(a_y) 
           = \lambda_x(a_y)$ for every $a_y \in (A \backslash
        \{a\})$. Then $a$ satisfies not- acceptance 
            condition and not- acceptance condition
              under $\lambda'$. Thus,
              $a$'s label is designated under $\lambda'$,
           which is such $\lambda$.  
  
          \item $l_1 = 0$:  
      Assume $\lambda' \in \Lambda^A$ with: 
         $\lambda'(a) = \undecL$; and $\lambda'(a_y) 
           = \lambda_x(a_y)$ for every $a_y \in (A \backslash
        \{a\})$. Then $a$ satisfies may$_s$- acceptance 
      condition and not- rejection condition.
     Hence, $a$'s label is designated under $\lambda'$, 
     which is such $\lambda$. 
       \end{enumerate}
         \end{enumerate} 
        \item $\lambda_x(a) = \undecL$: Then
        $\lambda_x$ is such $\lambda$. There is nothing to show.
      \end{enumerate} 

     \item $a$ satisfies may$_s$- acceptance condition and 
        may$_s$- rejection condition. For each 
       value of $\lambda_x(a)$, there is nothing to show. 
     $\lambda_x$ is such $\lambda$.  

     \item $a$ satisfies not- acceptance condition and
     not- rejection condition. Consider all possible 
    values of $\lambda_x(a)$.
     
\begin{enumerate}

        \item $\lambda_x(a) = \inL$: Consider all possible 
         $0 \leq m \leq |\pre(a)|$. 
  
        \begin{enumerate}  

            \item $m = 0$: then $(a, a) \not\in R$.  
           Assume $\lambda' \in \Lambda^A$ with   
            $\lambda'(a) = \undecL$. Then $\lambda'$ is such 
           $\lambda$. 

            \item $m = 1$: then $(a, a) \in R$. 
         By the given constraints, we have 
        $1 \leq l_1, l_2$ and $2 \leq l_3, l_4$. 
           Assume $\lambda' \in \Lambda^A$ with
          $\lambda'(a) = \undecL$. Then 
          $a$ satisfies not- acceptance condition
       and not- rejection condition under $\lambda'$, which   
       is such $\lambda$. 
        \end{enumerate}  

       \item $\lambda_x(a) = \outL$: Similarly,   
   $\lambda' \in \Lambda^A$ with $\lambda'(a) = \undecL$
   is such $\lambda$.   

       \item $\lambda_x(a) = \undecL$: then $\lambda_x$
      is such $\lambda$. There is nothing to show. 
     \end{enumerate}
     \item All the other cases are symmetric to one of these
     cases. 
    \end{enumerate}
    This concludes the base case of the sub-induction
    for the base case of the main induction. 
    
     For the inductive case of the sub-induction for 
     the base case of the main induction,
    assume that the expected result holds
    for any $\max_{a \in A}\delta(F, \Delta(F, a)) \leq k$. 
    We show that it still holds for 
     $\max_{a \in A}\delta(F, \Delta(F, a)) = k+1$.    
    
    By induction hypothesis,
    we have some $\lambda_x \in \Lambda^A$ 
    under which, for every $a_z \in A$ with 
    $\delta(F, \Delta(F, a_z)) \leq k$, 
    $a_z$'s label is designated under $\lambda_x$. 
    Now, since we have that, for every $a_1, a_2 \in
    A$ with $\delta(F, \Delta(F, a_1)) = 
    \delta(F, \Delta(F, a_2)) = k+1$, 
    $(a_1, a_2) \not\in R$, 
it suffices to consider an arbitrary $a \in A$ 
    with $\max_{a \in A} \delta(F, \Delta(F, a)) = k+1$, 
    while assuming that every $a_4 \in (A \backslash \{a\})$'s 
    label is designated under $\lambda_x$. 
    We consider all possibilities of 
    what acceptance and rejection conditions 
   $a$ satisfies under $\lambda_x$.  
   \begin{enumerate} 
     \item $a$ satisfies must- acceptance 
      condition and not- rejection condition
      under $\lambda_x$. Consider cases by 
      whether $(a, a) \in R$. 
     \begin{enumerate} 
        \item $(a, a) \not\in R$: it is trivial 
          that $\lambda' \in \Lambda^A$ with: 
     $\lambda'(a) = \inL$; and $\lambda'(a_y) = 
    \lambda_x(a_y)$ for every $a_y \in (A \backslash \{a\})$ 
   is such $\lambda$. 
        \item $(a, a) \in R$:           
       Consider all possibilities of $\lambda_x(a)$.
       \begin{enumerate} 
          \item $\lambda_x(a) = \inL$: there is nothing to 
        show.
          \item $\lambda_x(a) = \outL$: 
        Suppose $n \leq |\pre(a)|$ distinct members
     of $\pre(a)$, $a_1, \ldots, a_n$, are such that 
    $\lambda_x(a_i) = \outL$ for every $ 1 \leq i \leq n$. 
          If $l_2 < n$, then assume $\lambda' \in \Lambda^A$
     with: $\lambda'(a) = \inL$; and $\lambda'(a_y) = \lambda_x(a_y)$ for every $a_y \in (A \backslash \{a\})$. 
        There is now $n-1$ distinct members of 
     rejected arguments in $\pre(a)$; however,
      it holds that $l_2 \leq n-1$, i.e. 
      $a$ satisfies must-acceptance condition under $\lambda'$.
      Now, by the given condition, it is not possible 
     for $a$ to also satisfy must- rejection condition
     under $\lambda'$. But this means (see 
  Fig. \ref{fig_table}) that 
    $a$'s label is designated under $\lambda'$, which 
    is such $\lambda$, as required.

    \indent{\ } \indent{\ } \indent{\ } \indent{\ } If $l_2 = n$ on the other hand, then assume 
      $\lambda'' \in \Lambda^A$ with: $\lambda''(a) = \undecL$;
    and $\lambda''(a_y) = \lambda_x(a_y)$ for every 
  $a_y \in (A \backslash \{a\})$. 
      There is now $n-1$ distinct members of 
    rejected arguments in $\pre(a)$, and therefore, 
      $a$ satisfies either may$_s$- acceptance condition
     (when $l_2 - 1 \leq l_1$) or else
        not- acceptance condition under $\lambda''$. 
        Meanwhile, $a$ 
      satisfies not- rejection condition
   under $\lambda''$. 
      Hence, $a$'s label is designated under 
  $\lambda''$, which is such $\lambda$, as required. 
         
        \item $\lambda_x(a) = \undecL$: 
          Assume $\lambda' \in \Lambda^A$ with 
   $\lambda'(a) = \inL$; and $\lambda'(a_y) = 
  \lambda_x(a_y)$ for every $a_y \in (A \backslash \{a\})$.  
        Then, $a$ satisfies must- acceptance condition, 
       and, by the given constraint, 
      does not satisfy must- rejection condition at the 
   same time. Hence, $\lambda'$ is such $\lambda$, as required.
       \end{enumerate}
        
     \end{enumerate}
   \item $a$ satisfies must- acceptance condition 
     and may$_s$- rejection condition under $\lambda_x$.  
      Similar to the previous case. 
\item $a$ satisfies may$_s$- acceptance condition
     and not- rejection condition under $\lambda_x$. 
     Consider cases by whether $(a, a) \in R$. 
     \begin{enumerate} 
       \item $(a, a) \not\in R$: it is trivial 
    that $\lambda' \in \Lambda^A$ with: $\lambda'(a) = 
      \undecL$; and $\lambda'(a_y) = \lambda_x(a_y)$ 
     for every $a_y \in (A \backslash \{a\})$ is such $\lambda$.
       \item $(a, a) \in R$:  Consider all possibilities
      of $\lambda_x(a)$.
        
       \begin{enumerate} 
          \item $\lambda_x(a) = \inL$:  there is nothing to show. 
          \item $\lambda_x(a) = \outL$:  
             Suppose $n \leq |\pre(a)|$ distinct members
     of $\pre(a)$, $a_1, \ldots, a_n$, are such that 
    $\lambda_x(a_i) = \outL$ for every $ 1 \leq i \leq n$. 
          If\linebreak $l_1 < n < l_2$, then assume $\lambda' \in \Lambda^A$
     with: $\lambda'(a) = \undecL$
    (or $\lambda'(a) = \inL$; but the proof is shorter 
    with $\lambda'(a) = \undecL$); and $\lambda'(a_y) = \lambda_x(a_y)$ for every $a_y \in (A \backslash \{a\})$. 
        There is now $n-1$ distinct members of 
     rejected arguments in $\pre(a)$; however,
      it holds that $l_1 \leq n-1$, i.e. 
      $a$ satisfies may$_s$- acceptance condition 
      under $\lambda'$. Meanwhile, $a$ satisfies 
      not- rejection condition.  
      Thus, $a$'s label is designated under $\lambda'$, 
   wich is such $\lambda$, as required.

       \indent{\ } \indent{\ } \indent{\ } \indent{\ } 
      If $l_1 = n$ on the other hand, then 
     assume $\lambda'' \in \Lambda^A$ with: 
    $\lambda''(a) = \undecL$; and $\lambda''(a_y) = 
  \lambda_x(a)$ for every $a_y \in (A \backslash \{a\})$. 
     Then $a$ satisfies not- acceptance condition 
   and not- rejection condition. Thus,
   $a$'s label is designated under $\lambda''$, which 
is such $\lambda$, as required. 
        
          \item $\lambda_x(a) = \undecL$:  
         there is nothing to show. 
       \end{enumerate}
     
     \end{enumerate}

   \item $a$ satisfies may$_s$- acceptance condition
    and may$_s$- rejection condition under $\lambda_x$.    
       Regardless of the value of $\lambda_x(a)$, 
       there is nothing to show.

  \item $a$ satisfies not- acceptance condition
    and not- rejection condition. Assume 
   $\lambda' \in \Lambda^A$ with: $\lambda'(a) = \undecL$; 
   and $\lambda'(a_y) = \lambda_x(a_y)$ for every
    $a_y \in (A \backslash \{a\})$. 
    Then, $a$ satisfies not- acceptance condition
   and not- rejection condition under $\lambda'$, 
   which is such $\lambda$, as required. 
 
  \item All the other cases are symmetric to one of these
   cases. 
  \end{enumerate} 
   This concludes the inductive case of the sub-induction 
   for the base case of the main induction.  \\

   \indent For the inductive case of the main induction,
    assume that the expected result holds 
    up to $\max_{a \in A}|\Delta(F, a)| = k\ (\in \mathbb{N})$. 
    We show that it still holds when 
    $\max_{a \in A}|\Delta(F, a)|\linebreak = k + 1$. 
    
   \indent For the base case of the sub-induction
     for the inductive case of the main induction, 
    assume that $\max_{a \in A}\delta(F, \Delta(F, a)) = 0$.
    Since there is no inter-strongly-connected-component attack 
    in $F$, let us assume some $a \in A$, then 
  without loss of generality, we assume 
    $\lambda_x \in \Lambda^A$ which is such that
    every $a_c \in (A \backslash \Delta(F, a))$'s label 
    is designated under $\lambda_x$ (Assumption \textsf{X}).  
    For the arguments in $\Delta(F, a)$, we consider 
    the following cases. 
    \begin{enumerate} 
     \item There exists some $a_x \in \Delta(A, a)$ such that
       $a$ satisfies either must- acceptance condition
      and not- rejection condition or else 
      must- rejection condition and not- acceptance 
      condition under every $\lambda_y \in \Lambda^A$: 
         then, let $(A_x, R_x, {f_Q}_x)$ denote 
       $\beta(F, a_x, \lambda_x)$, 
      we only need to consider 
        $\lambda' \in \Lambda^{A_x}$ under which 
        the label of every $a_u \in (A_x \cap \Delta(F, a))$ 
        is designated. 
        By induction hypothesis on 
       every strongly connected
        component of $(A_x \cap \Delta(F, a))$, 
       we obtain that there exists 
       some $\lambda'' \in \Lambda^{A_x}$ 
       such that every $a_z \in (A_x \cap \Delta(F, a))$'s
       label is designated under $\lambda''$. 
       By Lemma \ref{lem_reduction_lemma}, \ryuta{But 
       I must amend the lemma.} it follows that 
       there exists some $\lambda_q \in \Lambda^{A}$
       such that the label of every argument 
       in $\Delta(F, a)$ is designated under $\lambda_q$.    
       Thus, by Assumption \textsf{X}, 
        there exists some $\lambda_p \in \Lambda^A$
       under which every $a \in A$'s label is designated,  
       which is such $\lambda$, as required. 

     \item Otherwise, we show that 
        $\lambda' \in \Lambda^A$ with:  
      $\lambda'(a) = \undecL$ for every $a_x \in \Delta(F, a)$
      such that 
      every $a_y \in (A \backslash \Delta(F, a))$ is designated
     under $\lambda'$ is such $\lambda$.  
      Suppose  $a_y \in \Delta(F, a)$ 
       for which we  
     assume the following. 
   \begin{itemize} 
    \item 
     $(f_Q(a_y))^1 = (l_1^{a_y}, l_2^{a_y})$. \qquad 
      $(f_Q(a_y))^2 = (l_3^{a_y}, l_4^{a_y})$. 
  \end{itemize} 
     
    If neither $l_2^{a_y} = 0$ nor $l_4^{a_y} = 0$,
     then it holds trivially that $a_y$'s label 
     is designated under $\lambda'$. But suppose 
      $l_2^{a_y} = 0$, then $a_y$ satisfies 
   must-acceptance condition under any
    $\lambda_y \in \Lambda^A$; thus, 
     in order that $a_y$ not satisfy  
       not- rejection condition under every 
    $\lambda_w \in \Lambda^A$, it must be that 
        $0 = l_3^{a_y}$. However, this means
       that $a_y$'s label is designated under $\lambda'$ 
     (see Fig. \ref{fig_table}). 
     Similarly when $l_4^{a_y} = 0$.  
     As $a_y$ is chosen arbitrarily, 
     this applies to every argument in $\Delta(F, a)$.  
     Hence, indeed $\lambda'$ is such $\lambda$, as required. 
   \end{enumerate}
   This concludes the base case of the sub-induction 
   for the inductive case of the main induction. 

   \indent For the inductive case of the sub-induction
    for the inductive case of the main induction, 
    assume that the expected result holds for any
   $\max_{a \in A}\delta(F, \Delta(F, a)) \leq k$. We 
   show that it still holds for $\max_{a \in A}\delta(F, 
    \Delta(F, a)) = k+1$.

    Suppose some $a \in A$ with $\delta(F, \Delta(F, a))
    = k + 1$, and suppose some $\lambda_x \in \Lambda^A$ 
    under which the label of every 
    $a_d \in A$ with $\delta(F, \Delta(F, a_d)) \leq k$  
    is designated. By induction hypothesis, 
    such $\lambda_x$ exists.   
    Let $\Lambda^A_{\lambda_x}$ denote the set of 
    all $\lambda \in \Lambda^A$ satisfying 
   $\lambda(a) = \lambda_x(a)$ for every 
     $a \in A$ so long as $\delta(F,\Delta(F, a)) \leq k$
    holds, then 
    we show existence of some 
    $\lambda' \in \Lambda^A_{\lambda_x}$ under which 
     the label of 
    every $a_x \in \Delta(F, a)$ is designated. 
    We  consider the following cases.  
   \begin{enumerate} 
       \item There exists some $a_x \in \Delta(F, a)$
       such that $a_x$ satisfies either must- acceptance 
      condition and not- rejection condition or else
     must- rejection condition and not- acceptance 
      condition under every $\lambda_y \in 
     \Lambda^A_{\lambda_x}$: 
    then, let $(A_x, R_x, {f_Q}_x)$ denote 
   $\beta(F, a_x, \lambda_x)$ (it does not matter 
    which member of $\Lambda^A_{\lambda_x}$ to choose as 
    the third parameter of $\beta$), we 
      show the existence of 
     $\lambda'' \in \Lambda^A_{\lambda_x}$
      under which the label of every $a_u \in 
     (A_x \cap \Delta(F, a))$ is designated. But 
     this follows from induction hypothesis on 
   every strongly connected 
   component of $(A_x \cap \Delta(F, a))$. 
   Then, by Lemma \ref{lem_reduction_lemma}, it follows 
   that there exists some $\lambda_q \in \Lambda^A_{\lambda_x}$
under which the label of every $a_x \in \Delta(F, a)$   
    is designated, which is such $\lambda'$. Hence, 
  by Assumption \textsf{X}, there exists some 
   $\lambda_w \in \Lambda^A$ under which every $a \in A$'s 
   label is designated, which is such $\lambda$, as required. 
      
     \item Otherwise, we show that 
     $\lambda_z \in \Lambda^A_{\lambda_x}$ 
   with: $\lambda_z(a_x) = \undecL$ 
    for every $a_x \in \Delta(F, a)$ is such $\lambda'$. 
    Suppose $a_y \in \Delta(F, a)$ for which we assume the 
     following.
     \begin{itemize} 
      \item $(f_Q(a_y))^1 = (l_1^{a_y}, l_2^{a_y})$. \qquad 
            $(f_Q(a_y))^2 = (l_3^{a_y}, l_4^{a_y})$. 
     \end{itemize} 
     Suppose $n_1$ distinct members $a_1, \ldots, a_{n_1}$ 
     of $\pre(a_y)$ are such that 
     $\lambda_z(a_i) = \inL$ for every $1 \leq i \leq n_1$, 
     and also suppose $n_2$ distinct members 
     $a'_1, \ldots, a'_{n_2}$ of $\pre(a_y)$ are such that
     $\lambda_z(a'_j) = \outL$ for every $1 \leq j \leq n_2$.   
     $n_1$ ($n_2$) may be non-zero in case 
     some members of $\pre(a_y)$ have SCC-depth $k$. 

     \indent{\ }\indent{\ }\indent{\ }\indent{\ } 
    If neither $l_2^{a_y} \leq n_2$ nor 
     $l_4^{a_y} \leq n_1$, then it holds trivially 
     that $a_y$'s label is designated under $\lambda_z$. 
     But suppose $l_2^{a_4} \leq n_2$, then 
     $a_y$ satisfies must- acceptance condition 
     under any $\lambda_y \in \Lambda^A_{\lambda_x}$;
     thus, in order that $a_y$ not satisfy not- 
      rejection condition under every 
   $\lambda_w \in \Lambda^A_{\lambda_x}$, 
     it must be that 
     $l_3^{a_y} \leq n_1$. However, this means that 
     $a_y$'s label is designated under $\lambda_z$. 
     Similarly when $l_4^{a_4} \leq n_1$. 
     As $a_y$ is cosen arbitrarily, this applies to 
     every argument in $\Delta(F, a)$. Hence, 
     $\lambda_z$ is such $\lambda'$. 
   \end{enumerate}
   By Assumption \textsf{X}, there exists some $\lambda'' 
   \in \Lambda^A_{\lambda_x}$ under which  
     every $a \in A$'s label is designated. Thus,
    $\lambda''$ is 
     such $\lambda$, as required.

    This concludes the inductive case of the sub-induction
     for the inductive case of the main induction. 
   \hfill$\Box$ 
\end{proof} 

\section{Further Extension with Preference on Scales} 
In this section, we further extend $\mathcal{F}$  
by equipping  it with preference on scales. 
The idea is as follows. \ryuta{And the idea comes here.}  

\begin{definition}[Nuance tuple with preference]\label{def_nuance_tuple_with_preference} 
  We define a nuance tuple with preference to be 
  $(\textbf{X}_1, \textbf{o}, \textbf{X}_2)$ for 
   some $\textbf{X}_1, \textbf{X}_2 \in \mathbb{N} \times 
  \mathbb{N}$, and some $\textbf{o} \in \{\dot{<}, 
   \dot{=}, \dot{>}\}$. We denote the class of 
   all nuance tuples with preference by $\dot{\mathcal{Q}}$.
   For any $\dot{Q} \in \mathbb{Q}$, 
   we call $(Q)^1$ its may-must acceptance scale, 
  $(Q)^2$ preference, and $(Q)^3$ its may-must rejection
  scale. 
\end{definition} 

\begin{definition}[May-must argumentation with may-must 
 scales and preference]\label{def_abstract_argumentatin_with} 
  We define a (finite) abstract argumentation with may-must
  scales with preference to be a tuple $(A, R, \fqdot)$ with: 
  $A \subseteq_{\text{fin}} \mathcal{A}$; 
  $R \subseteq A \times A$; and $\fqdot: A \rightarrow 
   \dot{\mathcal{Q}}$, such that 
   $((\fqdot(a))^i)^1 \leq ((\fqdot(a))^i)^2$ 
   for every $a \in A$ and every $i \in \{1,3\}$.  
\end{definition} 
The reason for the given condition is the same 
as in the previous section. 
\begin{figure}[h]
$\begin{bNiceMatrix}[first-row,first-col]  
       [\dot{>}]     &[\text{must-r}]  & 
     [\text{may$_s$-r}] & [\text{not-r}] \\
           [\text{must-a}] & \inL & \inL  & 
	      \inL\\
           [\text{may$_s$-a}] & \inL, \outL & 
	    \inL, \outL  & \inL  \\
           [\text{not-a}] & \outL & \inL,\outL  & \inL 
         \end{bNiceMatrix}$ 
$\begin{bNiceMatrix}[first-row,first-col]  
        [\dot{=}]    &[\text{must-r}]  & 
     [\text{may$_s$-r}] & [\text{not-r}] \\
           [\text{must-a}] & \undecL & \inL ? & 
	      \inL\\
           [\text{may$_s$-a}] & \outL ? & 
	    \textsf{any}  & \inL ? \\
           [\text{not-a}] & \outL & \outL ?  & \undecL 
         \end{bNiceMatrix}$\\
\indent{\ }\!\!\! $\begin{bNiceMatrix}[first-row,first-col]  
        [\dot{<}]    &[\text{must-r}]  & 
     [\text{may$_s$-r}] & [\text{not-r}] \\
           [\text{must-a}] & \outL & \inL, \outL & 
	      \inL\\
           [\text{may$_s$-a}] & \outL  & 
	    \inL, \outL  & \inL, \outL  \\
           [\text{not-a}] & \outL & \outL   & \outL 
         \end{bNiceMatrix}$
\caption{Corresponding expected acceptance status  
of an argument for each combination of 
may- must- 
acceptance and rejection conditions. \textsf{any} 
is any of $\inL, \outL, \undecL$, \textsf{in}$?$
is any of $\inL, \undecL$, \textsf{out}$?$ 
is any of $\outL, \undecL$. Moreover, 
$\inL, \outL$ is any of $\inL, \outL$.}
\label{fig_table_2} 
\end{figure} 

}

\section{Conclusion}
We developed a family of may-must argumentations by 
broadening 
the labelling-based approach for Dung abstract 
argumentation \cite{Caminada06}. 
We presented several semantics and made comparisons between them.

Just as a complete labelling of  
$(A, R) \in \mathcal{F}^{\Dung}$ 
is one that  
assigns to each argument $a \in A$ the label 
that is expected from   the acceptance and the rejection 
conditions as determined by 
the labels it assigns to $\pre(a)$, 
so is `almost' a complete labelling of $(A, R, f_Q) \in 
\mathcal{F}$. As 
we have identified, however, 
such a labelling may not actually exist. We 
proposed a way of addressing it. We 
also noted the connection to   
Dung abstract argumentation labelling. 
Incidentally, we are not the first 
to have proposed the use of abstract interpretation 
in formal argumentation, as a preprint exists \cite{ArisakaDauphin18}. 
\subsection{Open problem}   
As we  identified, for a label-based argumentation with sufficiently 
general local labelling conditions, an exact labelling may not 
exist. However, for some interesting labelling conditions 
of Dung's \cite{Caminada06}, it does exist. Also, for any variation 
of may-must argumentation, if every argument could 
satisfy only may$_s$-a and may$_s$-r, then an exact labelling 
again obviously exists. 

One open problem concerning this matter is just how much 
flexibility we can then allow in the local labelling conditions 
before existence of an exact labelling breaks. 
Since the local labelling conditions 
can be trivially regarded as a specification of a global labelling, 
verification of the existence is tantamount to 
verification of the correctness of a specification 
if in programming. Thus, identification 
of a boundary is an important research problem 
to be undertaken. 

}

\bibliography{references}

\begin{thebibliography}{10}

\bibitem{Arieli12}
O.~Arieli.
\newblock {Conflict-Tolerant Semantics for Argumentation Frameworks}.
\newblock In {\em {JELIA}}, pages 28--40, 2012.

\bibitem{AHT19}
R.~Arisaka, M.~Hagiwara, and T.~Ito.
\newblock {Deception/Honesty Detection and (Mis)trust Building in Manipulable
  Multi-Agent Argumentation: An Insight}.
\newblock In {\em {PRIMA}}, pages 443--451, 2019.

\bibitem{Arisaka2020b}
R.~Arisaka and T.~Ito.
\newblock {Numerical Abstract Persuasion Argumentation for Expressing
  Concurrent Multi-Agent Negotiations}.
\newblock In {\em {IJCAI Best of Workshops 2019 (to appear)}}, 2019.

\bibitem{arisaka2020a}
R.~Arisaka and T.~Ito.
\newblock {Broadening Label-based Argumentation Semantics with May-Must
  Scales}.
\newblock In {\em {CLAR}}, pages 22--41, 2020.

\bibitem{ArisakaSantini19}
R.~Arisaka, F.~Santini, and S.~Bistarelli.
\newblock {Block Argumentation}.
\newblock In {\em {PRIMA}}, pages 618--626, 2019.

\bibitem{ArisakaSatoh18}
R.~Arisaka and K.~Satoh.
\newblock {Abstract Argumentation / Persuasion / Dynamics}.
\newblock In {\em {PRIMA}}, pages 331--343, 2018.

\bibitem{Baroni15}
P.~Baroni, M.~Romano, F.~Toni, M.~Aurisicchio, and G.~Bertanza.
\newblock Automatic evaluation of design alternatives with quantitative
  argumentation.
\newblock {\em {Argument and Computation}}, 6(1):24--49, 2015.

\bibitem{Barringer12}
H.~Barringer, D.~M. Gabbay, and J.~Woods.
\newblock {\em {Argument \& Computation}}, 3(2-3):143--202, 2012.

\bibitem{Hunter01}
P.~Besnard and A.~Hunter.
\newblock A logic-based theory of deductive arguments.
\newblock {\em {Artificial Intelligence}}, 128(1-2):203--235, 2001.

\bibitem{Bistarelli17}
S.~Bistarelli and F.~Santini.
\newblock {A Hasse Diagram for Weighted Sceptical Semantics with a
  Unique-Status Grounded Semantics}.
\newblock In {\em {LPNMR}}, pages 49--56, 2017.

\bibitem{Bogaerts19}
B.~Bogaerts.
\newblock {Weighted abstract dialectical frameworks through the lens of
  approximation fixpoint theory}.
\newblock In {\em {AAAI}}, pages 2686--2693, 2019.

\bibitem{Bonzon16}
E.~Bonzon, J.~Delobelle, S.~Konieczny, and N.~Maudet.
\newblock {A Comparative Study of Ranking-Based Semantics for Abstract
  Argumentation}.
\newblock In {\em {AAAI}}, pages 914--920, 2016.

\bibitem{Brewka18}
G.~Brewka, J.~P{\"{u}}hrer, H.~Strass, J.~P. Wallner, and S.~Woltran.
\newblock {Weighted Abstract Dialectical Frameworks: Extended and Revised
  Report}.
\newblock {\em CoRR}, abs/1806.07717, 2018.

\bibitem{Brewka13}
G.~Brewka, H.~Strass, S.~Ellmauthaler, J.~Wallner, and S.~Woltran.
\newblock {Abstract Dialectical Frameworks Revisited}.
\newblock In {\em {IJCAI}}, 2013.

\bibitem{Caminada06}
M.~Caminada.
\newblock {On the Issue of Reinstatement in Argumentation}.
\newblock In {\em {JELIA}}, pages 111--123, 2006.

\bibitem{Caminada09}
M.~Caminada and D.~Gabbay.
\newblock {A Logical Account of Formal Argumentation}.
\newblock {\em {Studia Logica}}, 93(2-3):109--145, 2009.

\bibitem{Cayrol05b}
C.~Cayrol and M.~C. Lagasquie-Schiex.
\newblock {Graduality in Argumentation}.
\newblock {\em {Journal of Artificial Intelligence Research}}, 23:245--297,
  2005.

\bibitem{daCostaPereira11}
C.~da~Costa~Pereira, A.~G.~B. Tettamanzi, and S.~Villata.
\newblock {Changing One's Mind: Erase or Rewind? Possibilistic Belief Revision
  with Fuzzy Argumentation Based on Trust}.
\newblock In {\em {IJCAI}}, pages 164--171, 2011.

\bibitem{Davey02}
B.~A. Davey and H.~A. Priestley.
\newblock {\em Introduction to Lattices and Order}.
\newblock Cambridge University Press, 2002.

\bibitem{Dimopoulos14}
Y.~Dimopoulos and P.~Moraitis.
\newblock {Advances in Argumentation-Based Negotiation}.
\newblock In {\em {Negotiation and Argumentation in Multi-agent systems:
  Fundamentals, Theories, Systems and Applications }}, pages 82--125, 2014.

\bibitem{Dung95}
P.~M. Dung.
\newblock On the {Acceptability} of {Arguments} and {Its} {Fundamental} {Role}
  in {Nonmonotonic} {Reasoning}, {Logic Programming}, and n-{Person} {Games}.
\newblock {\em Artificial {Intelligence}}, 77(2):321--357, 1995.

\bibitem{Dunne11}
P.~E. Dunne, A.~Hunter, P.~McBurney, S.~Parsons, and M.~Wooldridge.
\newblock {Weighted argument systems: Basic definitions, algorithms, and
  complexity results}.
\newblock {\em {Artificial Intelligence}}, 175(2):457--486, 2011.

\bibitem{Gabbay14}
D.~M. Gabbay and O.~Rodrigues.
\newblock {An equational approach to the merging of argumentation networks}.
\newblock {\em {Journal of Logic and Computation}}, 24(6):1253--1277, 2014.

\bibitem{Garson13}
J.~Garson.
\newblock Modal {Logic}.
\newblock In E.~N. Zalta, editor, {\em The {Stanford} {Encyclopedia} of
  {Philosophy}}. 2018.

\bibitem{Grossi19}
D.~Grossi and S.~Modgil.
\newblock {On the Graded Acceptability of Arguments in Abstract and
  Instantiated Argumentation}.
\newblock {\em {Artificial Intelligence}}, 275:138--173, 2019.

\bibitem{Grossi13}
D.~Grossi and W.~van~der Hoek.
\newblock {Audience-based uncertainty in abstract argument games}.
\newblock In {\em {IJCAI}}, pages 143--149, 2013.

\bibitem{Hadjinikolis13}
C.~Hadjinikolis, Y.~Siantos, S.~Modgil, E.~Black, and P.~McBurney.
\newblock {Opponent modelling in persuasion dialogues}.
\newblock In {\em {IJCAI}}, pages 164--170, 2013.

\bibitem{Hadoux15}
E.~Hadoux, A.~Beynier, N.~Maudet, P.~Weng, and A.~Hunter.
\newblock {Optimization of Probabilistic Argumentation with Markov Decision
  Models}.
\newblock In {\em {IJCAI}}, pages 2004--2010, 2015.

\bibitem{Hadoux17}
E.~Hadoux and A.~Hunter.
\newblock {Strategic Sequences of Arguments for Persuasion Using Decision
  Trees}.
\newblock In {\em {AAAI}}, pages 1128--1134, 2017.

\bibitem{Hunter18}
A.~Hunter.
\newblock {Towards a framework for computational persuasion with applications
  in behaviour change}.
\newblock {\em {Argument \& Computation}}, 9(1):15--40, 2018.

\bibitem{Jakobovits99}
H.~Jakobovits and D.~Vermeir.
\newblock {Robust Semantics for Argumentation Frameworks}.
\newblock {\em {Journal of Logic and Computation}}, 9:215--261, 1999.

\bibitem{Janssen08}
J.~Janssen, M.~D. Cock, and D.~Vermeir.
\newblock {Fuzzy Argumentation Frameworks}.
\newblock In {\em {IPMU}}, pages 513--520, 2008.

\bibitem{Kakas05}
A.~C. Kakas, N.~Maudet, and P.~Maritis.
\newblock {Modular Representation of Agent Interaction Rules through
  Argumentation}.
\newblock {\em {Autonomous Agents and Multi-Agent Systems}}, 11(2):189--206,
  2005.

\bibitem{Leite11}
J.~Leite and J.~Martins.
\newblock {Social Abstract Argumentation}.
\newblock In {\em {IJCAI}}, pages 2287--2292, 2011.

\bibitem{Nielsen06}
S.~H. Nielsen and S.~Parsons.
\newblock A generalization of {Dung's} {Abstract} {Framework} for
  {Argumentation}.
\newblock In {\em {ArgMAS}}, pages 54--73, 2006.

\bibitem{Parsons05}
S.~Parsons and E.~Sklar.
\newblock {How agents alter their beliefs after an argumentation-based
  dialogue}.
\newblock In {\em {ArgMAS}}, pages 297--312, 2005.

\bibitem{Rahwan09}
I.~Rahwan and K.~Larson.
\newblock {Argumentation and game theory}.
\newblock In {\em {Argumentation in Artificial Intelligence}}, pages 321--339.
  Springer, 2009.

\bibitem{Rahwan03}
I.~Rahwan, S.~D. Ramchurn, N.~R. Jennings, P.~Mcburney, S.~Parsons, and
  L.~Sonenberg.
\newblock {Argumentation-based Negotiation}.
\newblock {\em Knowledge Engineering Review}, 18(4):343--375, 2003.

\bibitem{Rendsvig19}
R.~Rendsvig and J.~Symons.
\newblock Epistemic logic.
\newblock In E.~N. Zalta, editor, {\em The Stanford Encyclopedia of
  Philosophy}. Metaphysics Research Lab, Stanford University, 2019.

\bibitem{Rienstra13}
T.~Rienstra, M.~Thimm, and N.~Oren.
\newblock {Opponent Models with Uncertainty for Strategic Argumentation}.
\newblock In {\em {IJCAI}}, pages 332--338, 2013.

\bibitem{Riveret08}
R.~Riveret and H.~Prakken.
\newblock {Heuristics in argumentation: A game theory investigation}.
\newblock In {\em {COMMA}}, pages 324--335, 2008.

\bibitem{Sakama12}
C.~Sakama.
\newblock {Dishonest Arguments in Debate Games}.
\newblock In {\em {COMMA}}, pages 177--184, 2012.

\bibitem{Thimm14}
M.~Thimm.
\newblock {Strategic Argumentation in Multi-Agent Systems}.
\newblock {\em {Künstliche Intelligenz}}, 28(3):159--168, 2014.

\end{thebibliography}
\bibliographystyle{abbrv} 
      
\end{document}